\documentclass{article} %
\usepackage{iclr2022_conference,times}

\usepackage{amsmath,amsfonts,bm}

\def\eqref#1{equation~\ref{#1}}

\def\1{\bm{1}}

\DeclareMathAlphabet{\mathsfit}{\encodingdefault}{\sfdefault}{m}{sl}
\SetMathAlphabet{\mathsfit}{bold}{\encodingdefault}{\sfdefault}{bx}{n}

\usepackage{hyperref}       %
\usepackage{url}            %
\usepackage{booktabs}       %
\usepackage{amsfonts}       %
\usepackage{nicefrac}       %
\usepackage{microtype}      %
\usepackage[font=small,labelfont=bf]{caption}
\usepackage{subcaption}
\usepackage{graphicx}
\usepackage{subcaption}
\usepackage{wrapfig}

\usepackage{algorithm}
\usepackage{algorithmic}
\usepackage{amssymb}
\usepackage{amsmath}
\usepackage{amsthm}
\usepackage{pifont}
\usepackage{mathtools}
\usepackage{bbm}
\usepackage{tikz}
\usepackage{verbatim}
\usepackage{multirow}
\usepackage{thm-restate}

\newcommand{\cmap}{\mathcal{C}}
\newcommand{\qmap}{\mathcal{Q}}

\newcommand{\real}{\mathbb{R}}
\newcommand{\params}{{{\theta}}}
\newcommand{\weights}{{{W}}}
\newcommand{\biases}{{{b}}}
\newcommand{\inputs}{{x}}
\newcommand{\expect}{\mathbb{E}}
\newcommand{\normal}{\mathcal{N}}

\newcommand{\bmat}[1]{\begin{bmatrix}#1\end{bmatrix}}
\newcommand{\bsmat}[1]{\left[ \begin{smallmatrix} #1 \end{smallmatrix} \right]}

\DeclareMathOperator*{\dprime}{{\prime \prime}}

\definecolor{blue}{rgb}{0,0.1,1.0}
\definecolor{green}{rgb}{0, 0.7, 0.1}
\hypersetup{
    colorlinks=true,
    linkcolor=blue,
    citecolor=green,
    filecolor=blue,
    urlcolor=blue}

\declaretheorem[name=Proposition]{prop}

\title{Deep Learning without Shortcuts: \\ Shaping the Kernel with Tailored Rectifiers}

\author{Guodong Zhang${}^{1, 2}$, Aleksandar Botev${}^{3}$, James Martens${}^{3}$ \\
${}^{1}$University of Toronto, ${}^{2}$Vector Institute, ${}^{3}$DeepMind  \\
\texttt{gdzhang@cs.toronto.edu, \{botev,jamesmartens\}@google.com}
}

\iclrfinalcopy %
\begin{document}

\maketitle

\vspace{-0.2cm}
\begin{abstract}
\vspace{-0.1cm}
Training very deep neural networks is still an extremely challenging task. The common solution is to use shortcut connections and normalization layers, which are both crucial ingredients in the popular ResNet architecture. However, there is strong evidence to suggest that ResNets behave more like ensembles of shallower networks than truly deep ones. 
Recently, it was shown that deep vanilla networks (i.e.~networks without normalization layers or shortcut connections) can be trained as fast as ResNets by applying certain transformations to their activation functions. However, this method (called Deep Kernel Shaping) isn't fully compatible with ReLUs, and produces networks that overfit significantly more than ResNets on ImageNet. 
In this work, we rectify this situation by developing a new type of transformation that is fully compatible with a variant of ReLUs -- Leaky ReLUs. We show in experiments that our method, which introduces negligible extra computational cost, achieves validation accuracies with deep vanilla networks that are competitive with ResNets (of the same width/depth), and significantly higher than those obtained with the Edge of Chaos (EOC) method. And unlike with EOC, the validation accuracies we obtain do not get worse with depth.
\vspace{-0.1cm}

\end{abstract}
\vspace{-0.2cm}
\section{Introduction}
\vspace{-0.15cm}

\begin{wrapfigure}[13]{R}{0.42\textwidth}
\vspace{-0.8cm}
\centering
    \centering
    \includegraphics[width=0.415\columnwidth]{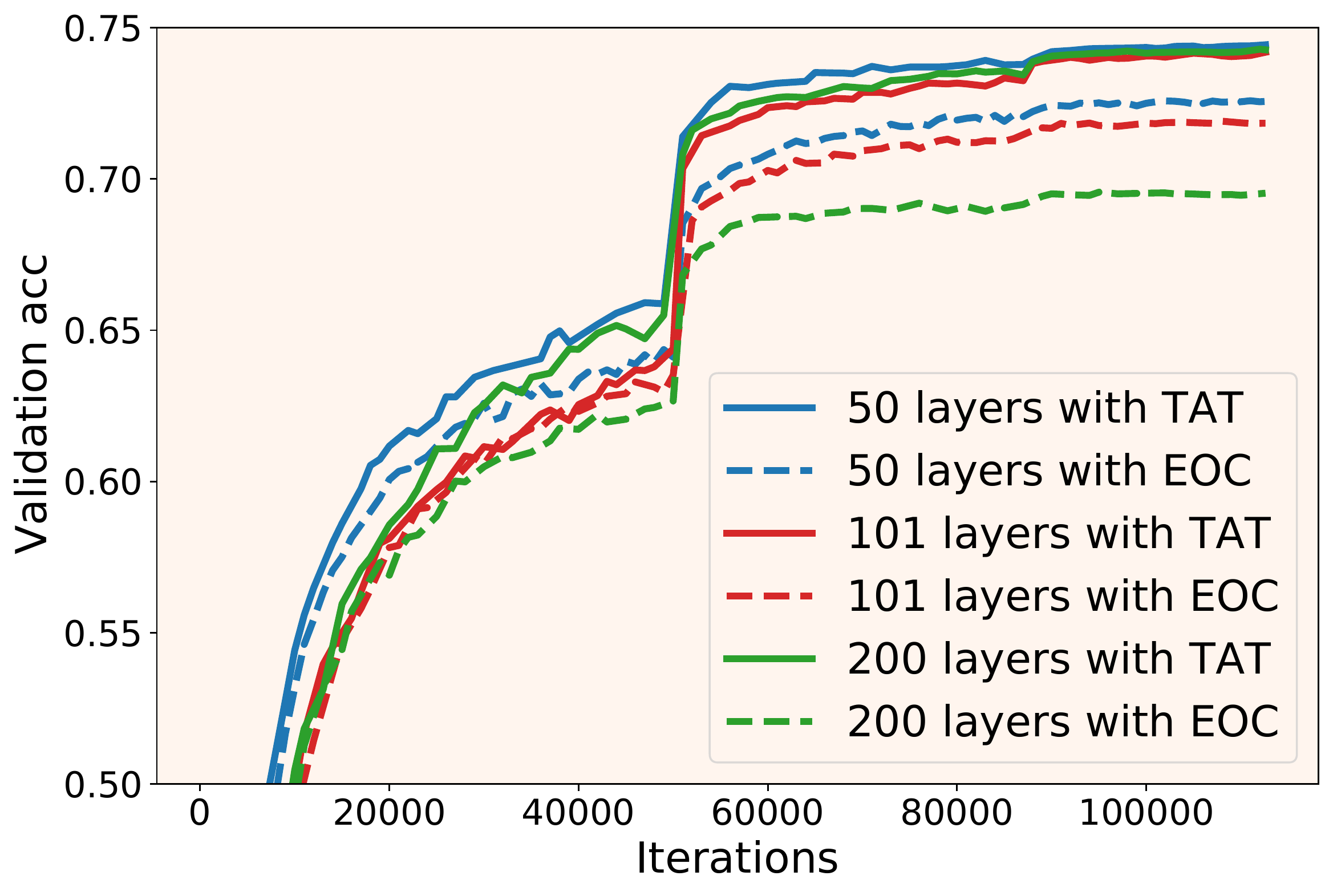}
	\vspace{-0.7cm}
	\caption{Top-1 ImageNet validation accuracy of vanilla deep networks initialized using either EOC (with ReLU) or TAT (with LReLU) and trained with K-FAC.}
	\label{fig:fig1}
\end{wrapfigure}

Thanks to many architectural and algorithmic innovations, the recent decade has witnessed the unprecedented success of deep learning in various high-profile challenges, e.g., the ImageNet recognition task~\citep{krizhevsky2012imagenet}, the challenging board game of Go~\citep{silver2017mastering} and human-like text generation~\citep{brown2020language}. Among them, shortcut connections~\citep{he2016deep, srivastava2015highway} and normalization layers~\citep{ioffe2015batch, ba2016layer} are two architectural components of modern networks that are critically important for achieving fast training at very high depths, and feature prominently in the ubiquitous ResNet architecture of \citet{he2016identity}.

Despite the success of ResNets, there is significant evidence to suggest that the primary reason they work so well is that they %
resemble ensembles of shallower networks during training \citep{veit2016residual}, which lets them avoid the common pathologies associated with very deep networks \citep[e.g.][]{hochreiter2001gradient, duvenaud2014avoiding}. Moreover, ResNets without normalization layers could lose expressivity as the depth goes to infinity~\citep{hayou2021stable}.
In this sense, the question of whether truly deep networks can be efficient and effectively trained on challenging tasks remains an open one.

As argued by \citet{oyedotun2020going} and \citet{ding2021repvgg}, the multi-branch topology of ResNets also has certain drawbacks. For example, it is memory-inefficient at inference time, as the input to every residual block has to be kept in memory until the final addition. In particular, the shortcut branches in ResNet-50 account for about 40\% of the memory usage by feature maps. %
Also, the classical interpretation of why deep networks perform well -- because of the hierarchical feature representations they produce -- does not strictly apply to ResNets, due to their aforementioned tendency to behave like ensembles of shallower networks.
Beyond the drawbacks of ResNets, training vanilla deep neural networks (which we define as networks without shortcut connections or normalization layers) is an interesting research problem in its own right, and finding a solution could open the path to discovering new model architectures. %
However, recent progress in this direction has not fully succeeded in matching the generalization performance of ResNets.

\citet{schoenholz2016deep} used a mean-field analysis of deep MLPs to choose variances for the initial weights and bias parameters, and showed that the resulting method -- called Edge of Chaos (EOC) -- allowed vanilla networks to be trained at very high depths on small datasets. Building on EOC, and incorporating dynamical isometry theory, \citet{xiao2018dynamical} was able to train vanilla networks with Tanh units\footnote{Dynamical isometry is unavailable for ReLU~\citep{pennington2017resurrecting}, even with orthogonal weights.} at depths of up to 10,000. While impressive, these EOC-initialized networks trained significantly slower than standard ResNets of the same depth, and also exhibited significantly worse generalization performance. \citet{qi2020deep} proposed to enforce the convolution kernels to be near isometric, but the gaps with ResNets are still significant on ImageNet.
While \citet{oyedotun2020going} was able to narrow the generalization gap between vanilla networks and ResNets, their experiments were limited to networks with only 30 layers, and their networks required many times more parameters.
More recently, \citet{martens2021dks} introduced a method called Deep Kernel Shaping (DKS) for initializing and transforming networks based on an analysis of their initialization-time kernel properties. They showed that their approach enabled vanilla networks to train faster than previous methods, even matching the speed of similarly sized ResNets when combined with stronger optimizers like K-FAC \citep{martens2015optimizing} or Shampoo \citep{anil2020scalable}. However, their method isn't fully compatible with ReLUs, and in their experiments (which focused on training speed) their networks exhibited significantly more overfitting than ResNets.

Inspired by both DKS and the line of work using mean-field theory, we propose a new method called Tailored Activation Transformation (TAT). TAT inherits the main advantages of DKS, while working particularly well with the ``Leaky ReLU" activation function. %
TAT enables very deep vanilla neural networks to be trained on ImageNet without the use of any additional architectural elements, while only introducing negligible extra computational cost. 
Using TAT, we demonstrate for the first time that a 50-layer vanilla deep network can nearly match the validation accuracy of its ResNet counterpart when trained on ImageNet.
And unlike with the EOC method, validation accuracy we achieve does not decrease with depth (see Figure~\ref{fig:fig1}). 
Furthermore, TAT can also be applied to ResNets without normalization layers, allowing them to match or even exceed the validation accuracy of standard ResNets of the same width/depth. A multi-framework open source implementation of DKS and TAT is available at \url{https://github.com/deepmind/dks}.

\vspace{-0.25cm}
\section{Background}
\vspace{-0.2cm}
Our main tool of analysis will be kernel functions for neural networks~\citep{neal1996bayesian, cho2009kernel, daniely2016toward} and the related Q/C maps~\citep{saxe2013exact, poole2016exponential, martens2021dks}. %
In this section, we introduce our notation and some key concepts used throughout.

\vspace{-0.15cm}
\subsection{Kernel Function Approximation for Wide Networks}\label{sec:kernel}
\vspace{-0.2cm}
For simplicity, we start with the kernel function approximation for feedforward fully-connected networks, and discuss its extensions to convolutional networks and non-feedforward networks later. In particular, we will assume a network that is defined by a sequence of $L$ \emph{combined layers} (each of which is an affine transformation followed by the elementwise activation function $\phi$) as follows:
\begin{equation}
    x^{l+1} = \phi\left(\weights_l x^l + \biases_l\right) \in \real^{d_{l+1}},
\end{equation}
with weights $\weights_l \in \real^{d_{l+1} \times d_l}$ initialized as $\weights_l \stackrel{\text{iid}}{\sim} \normal(0, 1/d_l)$ (or scale-corrected uniform orthogonal matrices~\citep{martens2021dks}), and biases $b_l \in \real^{d_{l+1}}$ initialized to zero. Due to the randomness of the initial parameters $\params$, the network can be viewed as random feature model $f_\params^l(\inputs) \triangleq x^\ell$ at each layer $l$ (with $x^0 = \inputs$) at initialization time. This induces a random kernel defined as follows:
\begin{equation}\label{eq:kernel}
    \kappa_f^l(\inputs_1, \inputs_2) = f_\params^l(\inputs_1)^\top f_\params^l(\inputs_2) / d_l.
\end{equation}
Given these assumptions, as the width of each layer goes to infinity, $\kappa_f^l(\inputs_1, \inputs_2)$ converges in probability (see Theorem~\ref{thm:kernel}) to a deterministic kernel $\tilde{\kappa}_f^l(\inputs_1, \inputs_2)$ that can be computed layer by layer:
\begin{equation}\label{eq:kernel-recursion}
    \Sigma^{l+1} = \expect_{z \sim \normal(0, \Sigma^l)} \left[ \phi(z) \phi(z)^\top \right], \text{ with } \Sigma^l = \bmat{\tilde{\kappa}_f^{l}(\inputs_1, \inputs_1) & \tilde{\kappa}_f^{l}(\inputs_1, \inputs_2) \\ \tilde{\kappa}_f^{l}(\inputs_1, \inputs_2) & \tilde{\kappa}_f^{l}(\inputs_2, \inputs_2)} ,
\end{equation}
where $\tilde{\kappa}_f^{0}(\inputs_1, \inputs_2) = \kappa_f^{0}(\inputs_1, \inputs_2) = \inputs_1^\top \inputs_2 / d_0$.

\subsection{Local Q/C maps}
\vspace{-0.2cm}
By \eqref{eq:kernel-recursion}, any diagonal entry $q_i^{l+1}$ of $\Sigma^{l+1}$ only depends on the corresponding diagonal entry $q_i^l$ of $\Sigma^l$. Hence, we obtain the following recursion for these diagonal entries, which we call \emph{q values}:
\begin{equation}\label{eq:qmap}
    q_i^{l+1} = \qmap(q_i^l) = \expect_{z \sim \normal(0, q_i^l)}[\phi(z)^2] = \expect_{z\sim \normal(0, 1)}\left[\phi(\sqrt{q_i^l} z)^2\right] , \text{ with } q_i^0 = \|\inputs_i\|^2 / d_0
\end{equation}
where $\qmap$ is the \emph{local Q map}. %
We note that $q_i^l$ is an approximation of $\kappa_f^l(\inputs_i, \inputs_i)$. 
Analogously, one can write the recursion for the normalized off-diagonal entries, which we call \emph{c values}, as:
\begin{equation}\label{eq:cmap}
    c^{l+1} = \cmap(c^l, q_1^l, q_2^l) = \frac{\expect_{\bsmat{z_1 \\ z_2} \sim \normal\left(0, \Sigma^l \right)} \left[\phi(z_1) \phi(z_2) \right]}{ \sqrt{\qmap(q_1^l) \qmap(q_2^l)}}, \; \text{with } \Sigma^l = \bsmat{q^l_1 & \sqrt{q^l_1 q^l_2} c^l \\ \sqrt{q^l_1 q^l_2} c^l & q_2^l},
\end{equation}
where $\cmap$ is the \emph{local C map} and $c^0 = \inputs_1^\top \inputs_2 / d_0$. We note that $c^l$ is an approximation of the cosine similarity between $f_\params^l(\inputs_1)$ and $f_\params^l(\inputs_2)$.
Because $\cmap$ is a three dimensional function, it is difficult to analyze, as the associated q values can vary wildly for distinct inputs.
However, by scaling the inputs to have norm $\sqrt{d_0}$, and rescaling $\phi$ so that $\qmap(1) = 1$, it follows that $q_i^l = 1$ for all $l$. This allows us to treat $\cmap$ as a one dimensional function from $[-1,1]$ to $[-1, 1]$ satisfying $\cmap(1) = 1$. Additionally, it can be shown that $\cmap$ possesses special structure as a \emph{positive definite function} (see Appendix~\ref{app:property-cmap} for details). Going forward, we will thus assume that $q_i^0 = 1$, and that $\phi$ is scaled so that $\qmap(1) = 1$.

\vspace{-0.15cm}
\subsection{Extensions to convolutional networks and more complex topologies} \label{sec:extending-maps}
\vspace{-0.15cm}
As argued in \citet{martens2021dks}, Q/C maps can also be defined for convolutional networks if one adopts a Delta initialization \citep{balduzzi2017shattered, xiao2018dynamical}, in which all weights except those in the center of the filter are initialized to zero. Intuitively, this makes convolutional networks behave like a collection of fully-connected networks operating independently over feature map locations. 
As such, the Q/C map computations for a feed-forward convolutional network are the same as above. 
\citet{martens2021dks} also gives formulas to compute q and c values for weighted sum operations between the outputs of multiple layers (without nonlinearities), thus allowing more complex network topologies. In particular, 
the sum operation's output q value is given by $q = \sum_{i = 1}^n w_i^2 q_i$, and its output c value is given by $\frac{1}{q}\sum_{i = 1}^n w_i^2 q_i c_i$. In order to maintain the property that all q values are 1 in the network, we will assume that sum operations are \emph{normalized} in the sense that $\sum_{i = 1}^n w_i^2 = 1$.

Following \citet{martens2021dks}, we will extend the definition of Q/C maps to include \emph{global Q/C maps}, which describe the behavior of entire networks. 
Global maps, denoted by $\qmap_f$ and $\cmap_f$ for a given network $f$, can be computed by applying the above rules for each layer in $f$. For example, the global C map of a three-layer network $f$ is simply $\cmap_f(c) = \cmap  \circ \cmap \circ \cmap(c)$. Like the local C map, global C maps are positive definite functions (see Appendix~\ref{app:property-cmap}). In this work, we restrict our attention to the family of networks comprising of combined layers, and normalized sums between the output of multiple affine layers, for which we can compute global Q/C maps. And all of our formal results will implicitly assume this family of networks.

\vspace{-0.15cm}
\subsection{Q/C maps for rescaled ResNets}\label{sec:rescaled-resnet}
\vspace{-0.2cm}
ResNets consist of a sequence of residual blocks, each of which computes the sum of a residual branch (which consists of a small multi-layer convolutional network) and a shortcut branch (which copies the block's input). In order to analyze ResNets we will consider the modified version used in \citet{shao2020normalization} and \citet{martens2021dks} which \textbf{removes the normalization layers} found in the residual branches, and replaces the sum at the end of each block with a normalized sum. These networks, which we will call \emph{rescaled ResNets}, are defined by the following recursion:
\begin{equation}\label{eq:modified-resnet}
    \inputs^{l+1} = w \inputs^l + \sqrt{1 - w^2} \mathcal{R}(\inputs^l),
\end{equation}
where $\mathcal{R}$ is the residual branch, and $w$ is the \emph{shortcut weight} (which must be in $[-1, 1]$).
Applying the previously discussed rules for computing Q/C maps, we get $q_i^l = 1$ for all $l$ and
\begin{equation}\label{eq:cmap-weighted-sum}
    \quad c^{l+1} = w^2 c^l + (1 - w^2) \cmap_\mathcal{R}(c^l).
\end{equation}

\vspace{-0.4cm}
\section{Existing Solutions and Their Limitations}\label{sec:existing}
\vspace{-0.2cm}

Global Q/C maps can be intuitively understood as a way of characterizing signal propagation through the network $f$ at initialization time.
The q value approximates the squared magnitude of the activation vector, so that $\qmap_f$ describe the contraction or expansion of this magnitude through the action of $f$. On the other hand, the c value approximates the cosine similarity of the function values for different inputs, so that $\cmap_f$ describes how well $f$ preserves this cosine similarity from its input to its output.

Standard initializations methods \citep{lecun1998efficient, glorot2010understanding, he2015delving} are motivated through an analysis of how the variance of the activations evolves throughout the network. This can be viewed as a primitive form of Q map analysis, and from that perspective, these methods are trying to ensure that q values remain stable throughout the network by controlling the local Q map. This is necessary for trainability, since very large or tiny q values can cause numerical issues, saturated activation functions (which have implications for C maps), and problems with scale-sensitive losses. 
However, as was first observed by \citet{schoenholz2016deep}, a well-behaved C map is also necessary for trainability. When the global C map is close to a constant function (i.e.~degenerate) on $(-1, 1)$, which easily happens in deep networks (as discussed in Appendix~\ref{app:degenerate}), this means that the network's output will appear either constant or random looking, and won't convey any useful information about the input. \citet{xiao2020disentangling} and \cite{martens2021dks} give more formal arguments for why this leads to slow optimization and/or poor generalization under gradient descent.

\begin{wrapfigure}[14]{R}{0.4\textwidth}
\vspace{-0.6cm}
\centering
    \centering
    \includegraphics[width=0.395\textwidth]{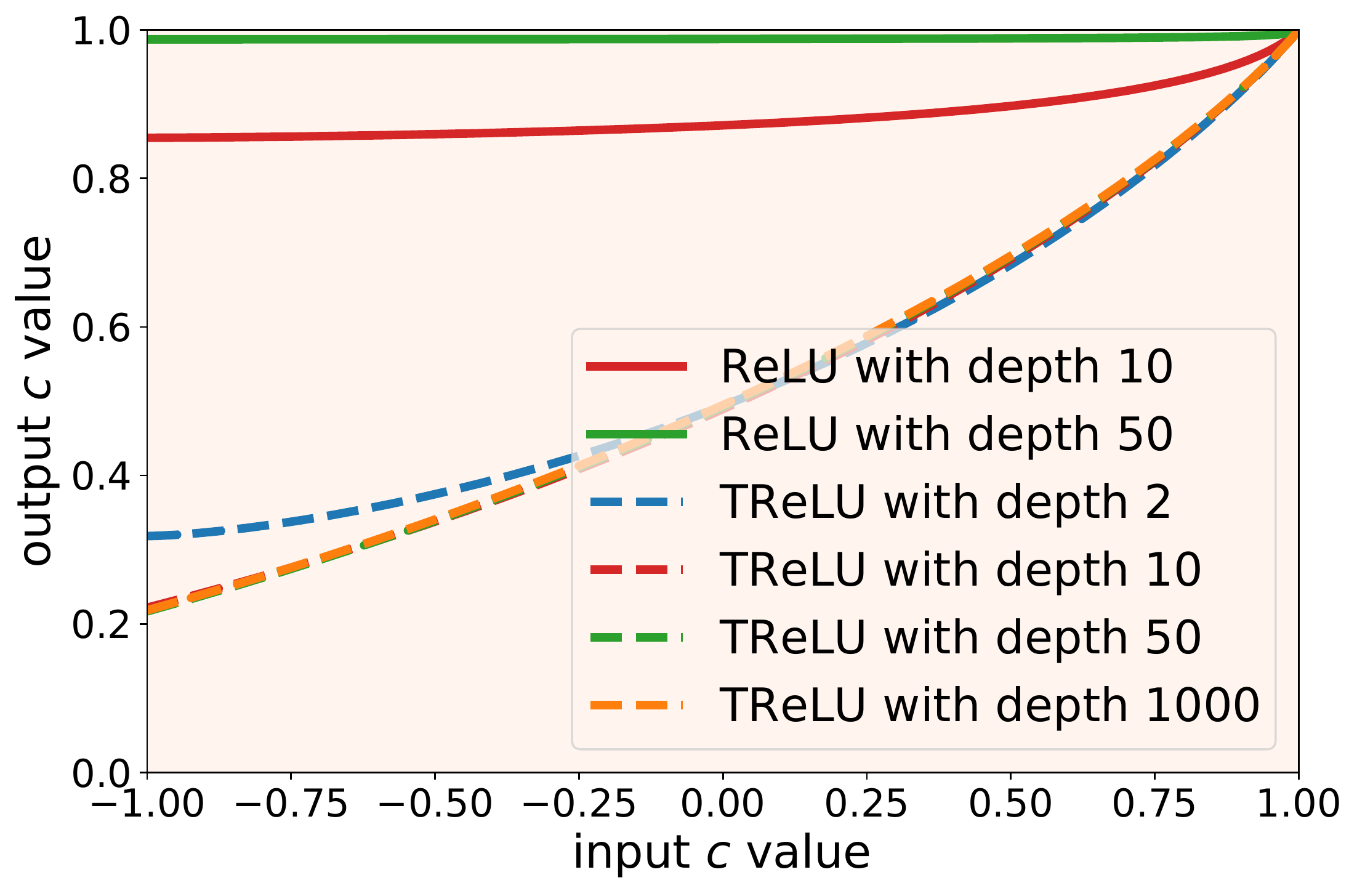}
	\vspace{-0.7cm}
	\caption{Global C maps for ReLU networks (EOC) and TReLU networks ($C_f(0) = 0.5$). The global C map of a TReLU network converges to a well-behavior function as depth increases (proved in Proposition~\ref{prop:ode}).%
	}
	\label{fig:cvalue}
\end{wrapfigure}
Several previous works \citep{schoenholz2016deep, yang2017mean, hayou2019impact} attempt to achieve a well-behaved global C map by choosing the variance of the initial weights and biases in each layer such that $\cmap^\prime(1) = 1$ -- a procedure which is referred to as \emph{Edge of Chaos} (EOC). 
However, this approach only slows down the convergence (with depth) of the c values from exponential to sublinear~\citep{hayou2019impact}, and does not solve the fundamental issue of degenerate global C maps for very deep networks. 
In particular, the global C map of a deep network with ReLU and EOC initialization rapidly concentrates around 1 as depth increases (see Figure~\ref{fig:cvalue}). 
While EOC allows very deep vanilla networks to be trained, 
the training speed and generalization performance is typically much worse than for comparable ResNets. \citet{klambauer2017self} applied an affine transformation to the output of activation functions to achieve $\qmap(1)=1$ and $\cmap(0)=0$, while \citet{lu2020bidirectional} applied them to achieve $\qmap(1)=1$ and $\cmap^\prime(1)=1$, although the effect of both approaches is similar to EOC.

To address these problems, \citet{martens2021dks} introduced DKS, which enforces the conditions $\cmap_f(0) = 0$ and $\cmap_f^\prime(1) = \zeta$ (for some modest constant $\zeta > 1$) directly on the network's global C map $\cmap_f$. They show that these conditions, along with the positive definiteness of C maps, cause $\cmap_f$ to be close to the identity and thus well-behaved.
In addition to these C map conditions, DKS enforces that $\qmap(1) = 1$ and $\qmap^\prime(1) = 1$, which lead to constant q values of 1 in the network, and lower kernel approximation error (respectively). DKS enforces these Q/C map conditions by applying a model class-preserving transformation $\hat{\phi}(x) = \gamma (\phi(\alpha x + \beta) + \delta)$. 
with non-trainable parameters $\alpha$, $\beta$, $\gamma$ and $\delta$.
The hyperparameter $\zeta$ is chosen to be sufficiently greater than $1$ (e.g.~1.5) in order to prevent the transformed activation functions from looking \emph{``nearly linear"} (as they would be exactly linear if $\zeta = 1$), which \citet{martens2021dks} argue makes it hard for the network to achieve nonlinear behavior during training. Using DKS, they were able to match the training speed of ResNets on ImageNet with vanilla networks using K-FAC. However, DKS is not fully compatible with ReLUs, and the networks in their experiments fell substantially short of ResNets in terms of generalization performance.

\vspace{-0.35cm}
\section{Tailored Activation Transformation (TAT)}
\label{sec:method}
\vspace{-0.25cm}
The reason why DKS is not fully compatible with ReLUs is that they are positive homogeneous, i.e.~$\phi(\alpha x) = \alpha \phi(x)$ for $\alpha \geq 0$. This makes the $\gamma$ parameter of the transformed activation function redundant, thus reducing the degrees of freedom with which to enforce DKS's four Q/C map conditions. \citet{martens2021dks} attempt to circumvent this issue by dropping the condition $\qmap^\prime(1) = 1$, which leads to vanilla deep networks that are trainable, but slower to optimize compared to using DKS with other activation functions. This is a significant drawback for DKS, as the best generalizing deep models often use ReLU-family activations.
We therefore set out to investigate other possible remedies -- either in the form of different activation functions, new Q/C map conditions, or both. 
To this end, we adopt a ReLU-family activation function with an extra degree of freedom (known as ``Leaky ReLU"), and modify the Q/C map conditions in order to preserve certain desirable properties of this choice.
The resulting method, which we name Tailored Activation Transformation (TAT) achieves competitive generalization performance with ResNets in our experiments.

\begin{table}[t]
\centering
\vspace{-0.6cm}
\caption{Comparison of different methods applied to a network $f$.}
\vspace{-0.3cm}
\label{table:comparison}
\resizebox{0.82\textwidth}{!}{
\begin{tabular}{ccccc}

\toprule
EOC (smooth) & EOC (LReLU) & DKS & TAT (smooth) & TAT (LReLU) \\
\midrule
$\begin{aligned} q^\infty \; &\text{exists} \\ \cmap^\prime(1, q^\infty, &q^\infty) = 1 \end{aligned}$ 
& $\begin{aligned} \qmap(q) &= q \\ \cmap^\prime(1) &= 1 \end{aligned}$
& $\begin{aligned} \qmap(1) &= 1 \\  \qmap^\prime(1) &= 1 \\ \cmap_f(0) &= 0 \\ \cmap_f^\prime(1) &= \zeta \end{aligned} $
& $\begin{aligned} \qmap(1) &= 1 \\  \qmap^\prime(1) &= 1 \\ \cmap_f^\prime(1) &= 1 \\ \cmap_f^{\dprime}(1) &= \tau \end{aligned} $
& $\begin{aligned} \qmap(q) &= q \\ \cmap_f^\prime(1) &= 1 \\ \cmap_f(0) &= \eta \end{aligned} $\\
\bottomrule
\end{tabular}
}
\vspace{-0.2cm}
\end{table}

\vspace{-0.2cm}
\subsection{Tailored Activation Transformation for Leaky ReLUs}\label{sec:trelu}
\vspace{-0.1cm}

One way of addressing the issue of DKS's partial incompatibility with ReLUs is to consider a slightly different activation function -- namely the Leaky ReLU (LReLU)~\citep{maas2013rectifier}:
\begin{equation}
\phi_\alpha(x) = \max\{x, 0\} + \alpha \min\{x, 0\},
\end{equation}
where $\alpha$ is the \emph{negative slope parameter}. While using LReLUs with $\alpha \neq 0$ in place of ReLUs changes the model class, it doesn't limit the model's expressive capabilities compared to ReLU, as assuming $\alpha \neq \pm1$, one can simulate a ReLU network with a LReLU network of the same depth by doubling the number of neurons (see Proposition~\ref{thm:simulate}).
Rather than using a fixed value for $\alpha$, we will use it as an extra parameter to satisfy our desired Q/C map conditions.
Define $\tilde{\phi}_\alpha(x) = \sqrt{\frac{2}{1 + \alpha^2}}\phi_\alpha(x)$. By Lemma~\ref{lem:qcmaps}, the local Q and C maps for this choice of activation function are:
\begin{equation}\label{eq:lrelu-cmap}
   \qmap(q) = q \quad \text{and} \quad
    \cmap(c) = c + \tfrac{(1 - \alpha)^2 }{\pi (1 + \alpha^2)}\left(\sqrt{1 - c^2} - c\cos^{-1}(c)\right).
\end{equation}
Note that the condition $\qmap(q) = q$ is actually \emph{stronger} than DKS's Q map conditions ($\qmap(1) = 1$ and $\qmap^\prime(1) = 1$), and has the potential to reduce kernel approximation errors in finite width networks compared to DKS, as it provides a better guarantee on the stability of $\qmap_f$ w.r.t.~random perturbations of the q values at each layer. Additionally, because the form of $\cmap$ does not depend on either of the layer's input q values, it won't be affected by such perturbations at all.
(Notably, if one uses the negative slope parameter to transform LReLUs with DKS, %
these properties will \emph{not} be achieved.) In support of these intuitions is the fact that better bounds on the kernel approximation error exist for ReLU networks than for general smooth ones (as discussed in Appendix \ref{app:approx_error_bounds}).

Another consequence of using $\tilde{\phi}_\alpha(x)$ for our activation function is that we have $\cmap^\prime(1) = 1$ as in EOC. If combined with the condition $\cmap(0) = 0$ (which is used to achieve $\cmap_f(0) = 0$ in DKS) this would imply by Theorem \ref{thm:cmaprelu} that $\cmap$ is the identity function, which by \eqref{eq:lrelu-cmap} is only true when $\alpha = 1$, thus resulting in a linear network. 
In order to avoid this situation, and the closely related one where $\tilde{\phi}_\alpha$ appears ``nearly linear", we instead choose the value of $\alpha$ so that $\cmap_f(0) = \eta$, for a hyperparameter $0 \leq \eta \leq 1$. As shown in the following theorem, $\eta$ controls how close $\cmap_f$ is to the identity, thus allowing us to achieve a well-behaved global C map without making $\tilde{\phi}_\alpha$ nearly linear:
\begin{restatable}{thm}{cmaprelu} \label{thm:cmaprelu}
For a network $f$ with $\tilde{\phi}_\alpha(x)$ as its activation function (with $\alpha \geq 0$), we have
\vspace{-0.05cm}
\begin{equation}
    \max_{c \in [-1, 1]} \left|\cmap_f(c) - c \right| \leq \min \left\{4 \cmap_f(0), 1 + \cmap_f(0)\right\}, \; 
    \max_{c \in [-1, 1]} \left|\cmap_f^\prime(c) - 1 \right| \leq \min \left\{4 \cmap_f(0), 1\right\}
\end{equation}
\end{restatable}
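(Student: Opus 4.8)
The plan is to reduce the theorem to a handful of elementary structural facts about the global C map $\cmap_f$ and then read off both inequalities. Write $\cmap$ for the local C map in \eqref{eq:lrelu-cmap} and set $b=\tfrac{(1-\alpha)^2}{\pi(1+\alpha^2)}$, so that $\cmap(c)=c+b\,g(c)$ with $g(c)=\sqrt{1-c^2}-c\cos^{-1}(c)$. A short computation gives $g'(c)=-\cos^{-1}(c)$ and $g''(c)=(1-c^2)^{-1/2}$, hence $\cmap'(c)=1-b\cos^{-1}(c)$, $\cmap''(c)\ge 0$, and $\cmap(c)-c=b\,g(c)\ge b\,g(1)=0$ on $[-1,1]$ (because $g$ is decreasing). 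Since $\alpha\ge 0$ forces $b=\tfrac1\pi\bigl(1-\tfrac{2\alpha}{1+\alpha^2}\bigr)\in[0,\tfrac1\pi]$, we get $\cmap'(c)=1-b\cos^{-1}(c)\in[1-b\pi,1]\subseteq[0,1]$ on $[-1,1]$. So the local C map is convex, non-decreasing with $\cmap'\in[0,1]$, lies above the identity, and satisfies $\cmap(1)=\cmap'(1)=1$. I would then check that each of these properties is preserved under composition (chain rule, using that C maps fix $c=1$) and under normalized sums / convex combinations with the identity -- the operations out of which $\cmap_f$ is assembled in our network family -- so $\cmap_f$ inherits all of them. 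Separately, positive-definiteness of global C maps (Appendix~\ref{app:property-cmap}) together with Schoenberg's characterization lets me write $\cmap_f(c)=\sum_{k\ge0}a_kc^k$ with $a_k\ge 0$; then $\sum_k a_k=\cmap_f(1)=1$ and $\sum_k ka_k=\cmap_f'(1)=1$, so $\sum_{k\ge2}(k-1)a_k=a_0=\cmap_f(0)$, and in particular $\sum_{k\ge2}a_k\le\cmap_f(0)$ and $1-a_1=a_0+\sum_{k\ge2}a_k\le 2\cmap_f(0)$.

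From here the $4\cmap_f(0)$ bounds are just power-series bookkeeping with $|c|\le 1$. Splitting $\cmap_f(c)-c=a_0+(a_1-1)c+\sum_{k\ge2}a_kc^k$ and using $|a_1-1|=1-a_1\le 2\cmap_f(0)$ together with $\sum_{k\ge2}a_k\le\cmap_f(0)$ gives $|\cmap_f(c)-c|\le 4\cmap_f(0)$; the analogous split of $\cmap_f'(c)-1=(a_1-1)+\sum_{k\ge2}ka_kc^{k-1}$, together with $\sum_{k\ge2}ka_k=\sum_{k\ge2}(k-1)a_k+\sum_{k\ge2}a_k=a_0+\sum_{k\ge2}a_k$, gives $|\cmap_f'(c)-1|\le 4\cmap_f(0)$. (Term-by-term differentiation up to the endpoint $c=1$ is legitimate since $\sum_k ka_k<\infty$.)

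For the remaining two bounds I would only use convexity and monotonicity. Since $\cmap_f\ge\mathrm{id}$ we have $|\cmap_f(c)-c|=\cmap_f(c)-c$, and this function is convex, so its maximum over $[-1,1]$ is attained at an endpoint; it vanishes at $c=1$, so the maximum equals $\cmap_f(-1)+1$, which is $\le 1+\cmap_f(0)$ because $\cmap_f$ is non-decreasing. And $|\cmap_f'(c)-1|\le 1$ is immediate from $\cmap_f'(c)\in[0,1]$.

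The only real work is in the first paragraph, and the step I expect to be the main obstacle is recognizing that $\alpha\ge 0$ -- equivalently $b\le 1/\pi$, equivalently $\cmap'(-1)=1-b\pi\ge 0$ -- is exactly what forces the local (hence global) C map to be \emph{non-decreasing}; this is the property that rules out would-be counterexamples such as $\cmap_f(c)=\tfrac12+\tfrac12 c^2$, which is positive definite, convex, above the identity, and correctly normalized at $c=1$, yet has $\max_c|\cmap_f(c)-c|=2>1+\cmap_f(0)$. One must also verify that convexity, monotonicity, the derivative range $[0,1]$, the ``above identity'' property, and the $c=1$ normalizations all survive composition and normalized sums, and correctly invoke Schoenberg's characterization to pass from positive-definiteness to a non-negative power series (and to justify the term-by-term differentiation at $c=1$). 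Everything downstream of those structural facts is the elementary estimates above.
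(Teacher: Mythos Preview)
Your proposal is correct and follows essentially the same route as the paper: both arguments hinge on (i) the nonnegative power-series expansion $\cmap_f(c)=\sum_k a_k c^k$ coming from positive definiteness together with $\cmap_f(1)=\cmap_f'(1)=1$, which yields $1-a_1\le 2a_0$ and the $4\cmap_f(0)$ bounds by triangle-inequality bookkeeping, and (ii) the observation that $\alpha\ge 0$ forces $\cmap'\in[0,1]$, a property closed under composition and normalized sums, giving $\cmap_f'\in[0,1]$ and hence the $1$ and $1+\cmap_f(0)$ bounds. The only cosmetic difference is that you pass through the auxiliary ``above identity'' and convexity properties, whereas the paper obtains $\cmap_f(c)-c\ge 0$ and its maximization at $c=-1$ in one stroke from $\cmap_f'\le 1$ (so $\cmap_f(c)-c$ is non-increasing and vanishes at $1$); your route is valid but slightly longer.
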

\vspace{-0.3cm}

Another motivation for using $\tilde{\phi}_\alpha(x)$ as an activation function is given by the following proposition:
\begin{restatable}{prop}{lrelu}
The global C map of a feedforward network with $\tilde{\phi}_\alpha(x)$ as its activation function is equal to that of a rescaled ResNet of the same depth (see Section~\ref{sec:rescaled-resnet}) with normalized ReLU activation $\phi(x) = \sqrt{2} \max(x, 0)$, shortcut weight $\sqrt{\frac{\alpha}{1 + \alpha^2}}$, and residual branch $\mathcal{R}$ consisting of a combined layer (or just a normalized ReLU activation) followed by an affine layer.
\end{restatable}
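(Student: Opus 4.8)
The plan is to reduce the claim about \emph{global} C maps to one about \emph{local} C maps, using that both global maps here are $L$-fold compositions of a single base map, $L$ being the common depth. On the feedforward side the global C map of a depth-$L$ network with activation $\tilde\phi_\alpha$ is $\cmap^{\circ L}$, with $\cmap$ the local C map of \eqref{eq:lrelu-cmap}. On the rescaled-ResNet side, the weighted-sum rule \eqref{eq:cmap-weighted-sum} gives the global C map as $g^{\circ L}$, where $g(c) = w^2 c + (1-w^2)\,\cmap_{\mathcal{R}}(c)$ is the C map of one residual block with shortcut weight $w$ and residual branch $\mathcal{R}$. So it suffices to show $g \equiv \cmap$ for the stated $w$ and $\mathcal{R}$; composing $L$ copies then yields equality of the global maps.

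First I would pin down $\cmap_{\mathcal{R}}$. Since $\mathcal{R}$ is a normalized-ReLU combined layer (or a bare normalized ReLU $\phi(x) = \sqrt{2}\max(x,0)$) followed by an affine layer, and since a pure affine layer -- and likewise the affine part of a combined layer -- leaves the normalized kernel unchanged (so $\Sigma^{l+1} = \Sigma^l$ there, by the rules of Section~\ref{sec:extending-maps}), $\cmap_{\mathcal{R}}$ is exactly the local C map of the normalized ReLU, i.e. the degree-$1$ arccosine kernel
\begin{equation*}
\cmap_{\mathcal{R}}(c) = c + \frac{1}{\pi}\left(\sqrt{1-c^2} - c\cos^{-1}(c)\right).
\end{equation*}
This I would quote from Lemma~\ref{lem:qcmaps}, or derive in two lines by writing $\max(z,0) = \tfrac{1}{2}(z+|z|)$ and using the correlated-Gaussian identities $\expect[z_1 z_2] = c$, $\expect[z_i|z_j|] = 0$, $\expect[|z_1||z_2|] = \tfrac{2}{\pi}(\sqrt{1-c^2} + c\arcsin c)$, along with $\arcsin c = \tfrac{\pi}{2} - \cos^{-1}(c)$.

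Substituting into $g$ gives $g(c) = c + \tfrac{1-w^2}{\pi}\bigl(\sqrt{1-c^2} - c\cos^{-1}(c)\bigr)$, which has the same functional form as $\cmap$ in \eqref{eq:lrelu-cmap}: each is the identity plus a nonnegative constant times $\sqrt{1-c^2} - c\cos^{-1}(c)$. Hence $g \equiv \cmap$ iff the two constants agree, i.e. iff $1 - w^2 = \tfrac{(1-\alpha)^2}{1+\alpha^2}$; solving this for $w$ (the algebra collapsing via $1+\alpha^2-(1-\alpha)^2 = 2\alpha$) determines the shortcut weight. With $g = \cmap$ established, composing $L$ times proves the proposition, and the identical argument covers both the ``combined layer'' and the ``bare normalized ReLU'' forms of $\mathcal{R}$, since each gives the same $\cmap_{\mathcal{R}}$.

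No step is a genuine obstacle -- once the ingredients are in place the proof is a short computation. The points needing care are (i) correctly invoking that affine layers (and the affine part of a combined layer) act as the identity on c values, so $\cmap_{\mathcal{R}}$ is the \emph{pure} ReLU arccosine kernel with no leftover q-value dependence; (ii) recalling or rederiving that arccosine kernel; and (iii) the bookkeeping behind ``of the same depth,'' which is precisely what makes the two compositions the same length. As sanity checks I would verify the extremes: $\alpha = 0$ makes $\tilde\phi_\alpha$ a normalized ReLU and forces $w = 0$, recovering a plain feedforward ReLU network, while $\alpha = 1$ makes $\tilde\phi_\alpha$ linear with $\cmap(c) = c$ and forces a pure shortcut, whose global C map is again the identity -- consistent on both sides.
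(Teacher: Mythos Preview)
Your proposal is correct and follows essentially the same route as the paper's proof: reduce to showing the per-block C map $g(c) = w^2 c + (1-w^2)\cmap_{\phi_0}(c)$ coincides with the local C map \eqref{eq:lrelu-cmap}, then compose $L$ times. Note that your algebra (like the paper's own proof) yields $w = \sqrt{2\alpha/(1+\alpha^2)}$, not the $\sqrt{\alpha/(1+\alpha^2)}$ printed in the proposition statement --- the latter is a typo in the paper, as your $\alpha=1$ sanity check confirms.
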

\vspace{-0.2cm}

This result implies that at initialization, a vanilla network using $\tilde{\phi}_\alpha$ behaves similarly to a ResNet, %
a property that is quite desirable given the success that ResNets have already demonstrated. %

In summary, we have the following three conditions:
\begin{equation}\label{eq:aeoc-conditions}
    \qmap(q) = q, \quad \cmap_f^\prime (1) = 1, \quad \cmap_f(0) = \eta ,
\end{equation}
which we achieve by picking the negative slope parameter $\alpha$ so that $C_f(0) = \eta$. We define the Tailored Rectifier (\texttt{TReLU}) to be $\tilde{\phi}_\alpha$ with $\alpha$ chosen in this way. Note that the first two conditions are also true when applying the EOC method to LReLUs, and its only the third which sets \texttt{TReLU} apart. 
While this might seem like a minor difference, it actually matters a lot to the behavior of the global C map. This can be seen in Figure~\ref{fig:cvalue} where the c value quickly converges towards $1$ with depth under EOC, resulting in a degenerate global C map. By contrast, the global C map of \texttt{TReLU} for a fixed $\eta$ converges rapidly to a nice function, suggesting a very deep vanilla network with \texttt{TReLU} has the same well-behaved global C map as a shallow network. We prove this in Proposition~\ref{prop:ode} by showing the local C map in \eqref{eq:lrelu-cmap} converges to an ODE as we increase the depth. For direct comparison of all Q/C map conditions, we refer the readers to Table~\ref{table:comparison}.

For the hyperparameter $0 \leq \eta \leq 1$, we note that a value very close to $0$ will produce a network that is ``nearly linear", while a value very close to 1 will give rise to a degenerate C map. In practice we use $\eta = 0.9$ or $0.95$, which seems to work well in most settings. Once we decide on $\eta$, we can solve the value $\alpha$ using binary search by exploiting the closed-form form of $\cmap$ in \eqref{eq:lrelu-cmap} to efficiently compute $\cmap_f(0)$. For instance, if $f$ is a $100$ layer vanilla network, one can compute $\cmap_f(0)$ as follows:
\vspace{-0.1cm}
\begin{equation}
    \cmap_f(0) = \overbrace{\cmap \circ \cmap \cdots \cmap \circ \cmap}^{100 \text{ times}} (0),
\end{equation}
which is a function of $\alpha$. This approach can be generalized to more advanced architectures, such as rescaled ResNets, as discussed in Appendix \ref{app:additional-details-act-transform}.

\vspace{-0.1cm}
\subsection{Tailored Activation Transformation for Smooth Activation Functions}
\vspace{-0.15cm}
Unlike LReLU, most activation functions don't have closed-form formulas for their local C maps. As a result, the computation of $\cmap_f(0)$ involves the numerical approximation of many two-dimensional integrals to high precision (as in \eqref{eq:cmap}), which can be quite expensive. One alternative way to control how close $\cmap_f$ is to the identity, while maintaining the condition $\cmap_f^\prime(1) = 1$, is to modulate its second derivative $\cmap_f^{\dprime}(1)$. The validity of this approach is established by the following theorem:
\begin{restatable}{thm}{cmapsmooth}
Suppose $f$ is a network with a smooth activation function. If $\cmap_f^\prime(1) = 1$, then we have
\begin{equation}
    \max_{c \in [-1, 1]} \left|\cmap_f(c) - c \right| \leq 2\cmap_f^{\dprime}(1), \; \max_{c \in [-1, 1]} \left|\cmap_f^\prime(c) - 1 \right| \leq 2\cmap_f^{\dprime}(1)
\end{equation}
\end{restatable}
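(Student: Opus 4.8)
The plan is to exploit the positive-definiteness of $\cmap_f$ to write it as a power series with nonnegative coefficients, translate the two scalar hypotheses (and the definition of $\cmap_f^{\dprime}(1)$) into moment conditions on those coefficients, and then read off both bounds from one elementary tail estimate.

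First I would invoke the fact (Appendix~\ref{app:property-cmap}) that for a network with a smooth activation function the global C map is a positive definite function, hence admits an absolutely convergent expansion $\cmap_f(c) = \sum_{k \ge 0} a_k c^k$ on $[-1,1]$ with all $a_k \ge 0$. Since the $q$ values are identically $1$ we have $\cmap_f(1) = 1$, so $\sum_{k\ge 0} a_k = 1$; the hypothesis $\cmap_f^\prime(1) = 1$ gives $\sum_{k\ge 1} k a_k = 1$; and writing $\tau := \cmap_f^{\dprime}(1)$ we have $\sum_{k \ge 2} k(k-1) a_k = \tau$ (if $\tau = +\infty$ the claimed bounds are trivial, so we may assume $\tau < \infty$). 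Subtracting the first two identities yields the key relation
\begin{equation}
a_0 \;=\; \sum_{k \ge 2} (k-1)\, a_k .
\end{equation}

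For the derivative bound, term-by-term differentiation together with $\sum_{k\ge1} k a_k = 1$ gives $\cmap_f^\prime(c) - 1 = \sum_{k \ge 2} k a_k\,(c^{k-1} - 1)$ (the $k=1$ term vanishes). Using $|c^{k-1} - 1| \le 2$ on $[-1,1]$, the triangle inequality, and $k \le k(k-1)$ for $k \ge 2$ (valid since $a_k\ge0$),
\begin{equation}
\max_{c\in[-1,1]} \left|\cmap_f^\prime(c) - 1\right| \;\le\; 2 \sum_{k\ge 2} k a_k \;\le\; 2\sum_{k\ge2} k(k-1) a_k \;=\; 2\tau .
\end{equation}
For the function bound, substituting $c = c\sum_{k\ge0} a_k$ gives $\cmap_f(c) - c = a_0(1-c) + \sum_{k\ge2} a_k (c^k - c)$ (again the $k=1$ term vanishes); then with $|1-c| \le 2$ and $|c^k - c| \le 2$ on $[-1,1]$,
\begin{equation}
\max_{c\in[-1,1]} \left|\cmap_f(c) - c\right| \;\le\; 2 a_0 + 2\sum_{k\ge2} a_k \;=\; 2\sum_{k\ge2} k a_k \;\le\; 2\tau ,
\end{equation}
where the middle equality is exactly the relation for $a_0$ above and the last step is again $k \le k(k-1)$ for $k\ge 2$.

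I expect the only real subtlety to be the boundary behaviour at $c = 1$: justifying that $\cmap_f^\prime(1)$ and $\cmap_f^{\dprime}(1)$ equal the moment sums $\sum_{k} k a_k$ and $\sum_k k(k-1) a_k$ requires an Abel / monotone-convergence argument (the series of nonnegative terms $\sum_k k a_k c^{k-1}$ increases to its value as $c \uparrow 1$), together with the trivial handling of the $\tau = \infty$ case. A secondary point worth care is that pushing the constant down to exactly $2$ (rather than $3$) relies on using the identity $a_0 = \sum_{k\ge2}(k-1)a_k$ in place of the cruder bound $a_0 \le \tau$; everything else is bookkeeping. Structurally this parallels the proof of Theorem~\ref{thm:cmaprelu}, the difference being that for smooth activations the Hermite/Taylor coefficients are available at $c=1$, so one controls $\cmap_f$ through $\cmap_f^{\dprime}(1)$ rather than through $\cmap_f(0)$.
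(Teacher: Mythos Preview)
Your proof is correct and follows essentially the same approach as the paper's: both expand $\cmap_f$ as a nonnegative power series, reduce both bounds to $2(1-a_1)=2\sum_{k\ge 2}k a_k$, and then bound this by $2\cmap_f^{\dprime}(1)$. The only cosmetic difference is that the paper phrases the last step as ``$\cmap_f'$ is positive definite, hence convex on $[0,1]$, so $\cmap_f'(0)\ge \cmap_f'(1)-\cmap_f''(1)$'', whereas you write it as the equivalent termwise inequality $k\le k(k-1)$ for $k\ge 2$.
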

\vspace{-0.2cm}
Given $\cmap(1) = 1$ and $\cmap^\prime(1) = 1$, a straightforward computation shows that $\cmap_f^{\dprime} (1) = L \cmap^{\dprime} (1)$ if $f$ is an $L$-layer vanilla network. (See Appendix \ref{app:additional-details-act-transform} for a discussion of how to do this computation for more general architectures.) From this we obtain the following four local Q/C map conditions:
\begin{equation}\label{eq:local-aeoc-conditions}
    \qmap(1) = 1, \quad \qmap^\prime (1) = 1, \quad \cmap^{\dprime}(1) = \tau/L , \quad \cmap^\prime (1) = 1.
    \vspace{-0.05cm}
\end{equation}
To achieve these we adopt the same activation transformation as DKS: $\hat{\phi}(x) = \gamma (\phi(\alpha x + \beta) + \delta)$ for non-trainable scalars $\alpha$, $\beta$, $\delta$, and $\gamma$. 
We emphasize that these conditions cannot be used with LReLU, as LReLU networks have $\cmap^{\dprime}(1) = \infty$. By \eqref{eq:qmap} and basic properties of expectations, we have
\vspace{-0.1cm}
\begin{equation}
    1 = \qmap(1) = \expect_{z \sim \normal(0, 1)}\left[\hat{\phi}(z)^2 \right] = \gamma^2 \expect_{z \sim \normal(0, 1)}\left[(\phi(\alpha z + \beta) + \delta)^2 \right]
\vspace{-0.1cm}
\end{equation}
so that $\gamma = \expect_{z \sim \normal(0, 1)}\left[(\phi(\alpha z + \beta) + \delta)^2 \right]^{-1/2}$. To obtain the values for $\alpha$, $\beta$ and $\delta$, we can treat the remaining conditions as a three-dimensional nonlinear system, which can be written as follows:
\vspace{-0.05cm}
\begin{equation}\label{eq:three-system}
\begin{aligned}
  \expect_{z \sim \normal(0, 1)}\left[\hat{\phi}(z) \hat{\phi}^\prime(z) z \right] &= \qmap^\prime(1) = 1, \\
  \expect_{z \sim \normal(0, 1)}\left[\hat{\phi}^{\dprime}(z)^2 \right] &= \cmap^{\dprime}(1) = \tau/L, \;
  \expect_{z \sim \normal(0, 1)}\left[\hat{\phi}^{\prime}(z)^2 \right] = \cmap^\prime(1) = 1.
\end{aligned}
\vspace{-0.1cm}
\end{equation}
We do not have a closed-form solution of this system. However, each expectation is a one dimensional integral, and so can be quickly evaluated to high precision using Gaussian quadrature. One can then use black-box nonlinear equation solvers, such as modified Powell's method \citep{powell1964efficient}, to obtain a solution. See \url{https://github.com/deepmind/dks} for a complete implementation.

\vspace{-0.2cm}
\section{Experiments}
\vspace{-0.25cm}

Our main experimental evaluation of TAT and competing approaches is on training deep convolutional networks for ImageNet classification~\citep{deng2009imagenet}. 
The goal of these experiments is \emph{not} to achieve state-of-the-art, but rather to compare TAT as fairly as possible with existing methods, and standard ResNets in particular. 
To this end, we use ResNet V2~\citep{he2016identity} as the main reference architecture, from which we obtain rescaled ResNets (by removing normalization layers and weighing the branches as per \eqref{eq:modified-resnet}), and vanilla networks (by further removing shortcuts).
For networks without batch normalization, we add dropout to the penultimate layer for regularization, as was done in \citet{brock2021high}.
We train the models with $90$ epochs and a batch size of $1024$, unless stated otherwise. 
For \texttt{TReLU}, we obtain $\eta$ by grid search in $\{0.9, 0.95\}$. 
The weight initialization used for all methods is the Orthogonal Delta initialization, with an extra multiplier given by $\sigma_w$. We initialize biases iid from $\normal(0, \sigma_b^2)$. We use $(\sigma_w, \sigma_b) = (1, 0)$ in all experiments (unless explicitly stated otherwise), with the single exception that we use $(\sigma_w, \sigma_b) = (\sqrt{2}, 0)$ in standard ResNets, as per standard practice \citep{he2015delving}. For all other details see Appendix~\ref{app:imp-details}.

\vspace{-0.2cm}
\subsection{Towards removing batch normalization}
\vspace{-0.15cm}
Two crucial components for the successful training of very deep neural networks are shortcut connections and batch normalization (BN) layers.
As argued in \citet{de2020batch} and \citet{shao2020normalization}, BN implicitly biases the residual blocks toward the identity function, which makes the network better behaved at initialization time, and thus easier to train. 
This suggests that one can compensate for the removal of BN layers, at least in terms of their effect on the behaviour of the network at initialization time, by down-scaling the residual branch of each residual block. 
Arguably, almost all recent work on training deep networks without normalization layers \citep{zhang2018fixup, shao2020normalization, bachlechner2020rezero, brock2021characterizing, brock2021high} has adopted this idea by introducing multipliers on the residual branches (which may or may not be optimized during training).

\begin{wraptable}[9]{r}{0.6\textwidth}
\centering
\vspace{-0.5cm}
\caption{Top-1 validation accuracy of rescaled ResNet50 with varying shortcut weights. We set $\eta=0.9$ for \texttt{TReLU}.}
\vspace{-0.3cm}
\label{table:shortcut}
\resizebox{0.6\textwidth}{!}{
\begin{tabular}{ccccccc}

\toprule
\multirow{2}{*}{Optimizer} & \multirow{2}{*}{\vtop{\hbox{\strut Standard}\hbox{\strut ResNet}}} & \multirow{2}{*}{Activation} & \multicolumn{4}{c}{Rescaled ResNet ($w$)} \\ \cmidrule(lr){4-7} 
 & & & $0.0$ & $0.5$ & $0.8$ & $0.9$ \\
\midrule
\multirow{2}{*}{K-FAC} & \multirow{2}{*}{76.4} & ReLU & 72.6 & 74.5 & 75.6 & 75.9  \\ 
& & TReLU & 74.6  & 75.5 & \textbf{76.4} & 75.9 \\ 
\midrule
\multirow{2}{*}{SGD} & \multirow{2}{*}{76.3} & ReLU & 63.7 & 72.4 & 73.9 & 75.0  \\
& & TReLU & 71.0  & 72.6 & \textbf{76.0}  & 74.8 \\ 
\bottomrule
\end{tabular}
}
\end{wraptable}
In Table~\ref{table:shortcut}, we show that one can close most of the gap with standard ResNets by simply adopting the modification in \eqref{eq:modified-resnet} without using BN layers. By further replacing ReLU with \texttt{TReLU}, we can exactly match the performance of standard ResNets. 
With K-FAC as the optimizer, the rescaled ResNet with shortcut weight $w = 0.9$ is only $0.5$ shy of the validation accuracy ($76.4$) of the standard ResNet. Further replacing ReLU with \texttt{TReLU}, we match the performance of standard ResNet with shortcut weight $w = 0.8$. 

\begin{wraptable}[8]{r}{0.45\textwidth}
\centering
\vspace{-0.55cm}
\caption{ImageNet top-1 validation accuracies of shortcut-free networks on ImageNet. %
}
\vspace{-0.3cm}
\label{table:normalization}
\resizebox{0.45\textwidth}{!}{
\begin{tabular}{ccccc}

\toprule
Depth & Optimizers & vanilla & BN & LN \\
\midrule
\multirow{2}{*}{50} & K-FAC & 72.6 & 72.8 &  72.7 \\ 
& SGD & 63.7 & 72.6 & 58.1 \\ 
\midrule
\multirow{2}{*}{101} & K-FAC & 71.8 & 67.6 & 72.0 \\
& SGD & 41.6 & 43.4 & 28.6 \\ 
\bottomrule
\end{tabular}
}
\end{wraptable}

\vspace{-0.2cm}
\subsection{The difficulty of removing shortcut connections}
\vspace{-0.15cm}
While the aforementioned works have shown that it is possible to achieve competitive results without normalization layers, they all rely on the use of shortcut connections to make the network look more linear at initialization.
A natural question to ask is whether normalization layers could compensate for the removal of shortcut connections.
We address this question by training shortcut-free networks with either BN or Layer Normalization (LN) layers.
As shown in Table~\ref{table:normalization}, these changes do not seem to make a significant difference, especially with strong optimizers like K-FAC.
These findings are in agreement with the analyses of \citet{Yang2019AMF} and \citet{martens2021dks}, who respectively showed that deep shortcut-free networks with BN layers still suffer from exploding gradients, and deep shortcut-free networks with LN layers still have degenerate C maps.

\vspace{-0.2cm}
\subsection{Training Deep Neural Networks without Shortcuts}\label{sec:skip-free}
\vspace{-0.1cm}

The main motivation for developing TAT is to help deep vanilla networks achieve generalization performance similar to standard ResNets.
In our investigations we include rescaled ResNets with a shortcut weight of either 0 (i.e.~vanilla networks) or 0.8.
In Table~\ref{table:imagenet} we can see that with a strong optimizer like K-FAC, we can reduce the gap on the 50 layer network to only 1.8\% accuracy when training for 90 epochs, and further down to 0.6\% when training for 180 epochs.
For 101 layers, the gaps are 3.6\% and 1.7\% respectively, which we show can be further reduced with wider networks (see Table~\ref{table:width}).
To our knowledge, this is the first time that a deep vanilla network has been trained to such a high validation accuracy on ImageNet. In addition, our networks have fewer parameters and run faster than standard ResNets, and use less memory at inference time due to the removal of shortcut connections and BN layers.
The gaps when using SGD as the optimizer are noticeably larger, which we further explore in Section~\ref{sec:ablation}.
Lastly, using rescaled ResNets with a shortcut weight of $0.8$ and \texttt{TReLU}, we can exactly match or even surpass the performance of standard ResNets.
\begin{table}[t]
\vspace{-1.0cm}
\centering
\caption{ImageNet top-1 validation accuracy. For rescaled ResNets ($w = 0.0$ or $w = 0.8$), we do not include any normalization layer. For standard ResNets, batch normalization is included. By default, ReLU activation is used for standard ResNet while we use \texttt{TReLU} for rescaled networks.}
\vspace{-0.2cm}
\label{table:overall}
\resizebox{0.86\textwidth}{!}{
\begin{tabular}{cccccccc}

\toprule
\multirow{2}{*}{Depth} & \multirow{2}{*}{Optimizer} & \multicolumn{3}{c}{90 epochs} & \multicolumn{3}{c}{180 epochs}  \\ \cmidrule(lr){3-8} 
 & & ResNet & $w = 0.0$ & $w = 0.8$ & ResNet & $w = 0.0$ & $w = 0.8$ \\
\midrule
\multirow{2}{*}{50} & K-FAC & 76.4 & 74.6 & 76.4 & 76.6 & 76.0 & 77.0    \\ 
& SGD & 76.3 & 71.0 & 76.0 & 76.6 & 72.3 & 76.8    \\ 
\midrule
\multirow{2}{*}{101} & K-FAC & 77.8 & 74.2 & 77.8 & 77.6 & 75.9 & 78.4   \\ 
& SGD & 77.9 & 70.0 & 77.3 & 77.6 & 73.8 & 77.4   \\ 
\bottomrule
\end{tabular}\label{table:imagenet}
}
\vspace{-0.45cm}
\end{table}

\vspace{-0.2cm}
\subsection{Comparisons with existing approaches}
\vspace{-0.2cm}
\begin{wraptable}[12]{R}{0.50\textwidth}
\centering
\vspace{-0.45cm}
\caption{ImageNet top-1 validation accuracy comparison between EOC and TAT on deep vanilla networks.}
\vspace{-0.3cm}
\label{table:eoc-aeoc}
\resizebox{0.47\textwidth}{!}{
\begin{tabular}{ccccc}
\toprule
Depth & Optimizer & Method & (L)ReLU & Tanh \\
\midrule
\multirow{4.4}{*}{50} & \multirow{2}{*}{K-FAC} & EOC & 72.6 & 70.6  \\ 
& & TAT & \textbf{74.6} & \textbf{73.1}  \\ 
\cmidrule(lr){2-5}
& \multirow{2}{*}{SGD} & EOC & 63.7 & 55.7  \\ 
& & TAT & \textbf{71.0} & \textbf{69.5} \\ 
\midrule
\multirow{4.4}{*}{101} & \multirow{2}{*}{K-FAC} & EOC & 71.8 & 69.2  \\ 
& & TAT & \textbf{74.2} &\textbf{72.8} \\ 
\cmidrule(lr){2-5}
& \multirow{2}{*}{SGD} & EOC & 41.6 & 54.0  \\ 
& & TAT & \textbf{70.0} & \textbf{69.0} \\ 
\bottomrule
\end{tabular}}
\end{wraptable}
\textbf{Comparison with EOC.} 
Our first comparison is between TAT and EOC on vanilla deep networks. 
For EOC with ReLUs we set $(\sigma_w, \sigma_b) = (\sqrt{2}, 0)$ to achieve $\qmap(1) = 1$ as in \citet{he2015delving}, since ReLU networks always satisfy $\cmap^\prime(1) = 1$ whenever $\sigma_b = 0$.
For Tanh activations, a comprehensive comparison with EOC is more difficult, as there are infinitely many choices of $(\sigma_w, \sigma_b)$ that achieve $\cmap^\prime(1) = 1$. Here we use $(\sigma_w, \sigma_b) = (1.302, 0.02)$\footnote{We also ran experiments with $(\sigma_w, \sigma_b) = (1.0, 0.0)$, and the scheme described in \citet{pennington2017resurrecting} and \citet{xiao2018dynamical} for dynamical isometry. The results were worse than those reported in the table.}, as suggested in \citet{hayou2019impact}. 
In Table~\ref{table:eoc-aeoc}, we can see that in all the settings, networks constructed with TAT outperform EOC-initialized networks by a significant margin, especially when using SGD. Another observation is that the accuracy of EOC-initialized networks drops as depth increases. 

\textbf{Comparison with DKS.}
The closest approach to TAT in the existing literature is DKS, whose similarity and drawbacks are discussed in Section~\ref{sec:method}. 
We compare TAT to DKS on both LReLUs\footnote{For DKS, we set the negative slope as a parameter and adopt the transformation $\hat{\phi}(x) = \gamma (\phi_\alpha (x + \beta) + \delta)$.}, and smooth functions like the SoftPlus and Tanh.
For smooth activations, we perform a grid search over %
$\{0.2, 0.3, 0.5\}$ for $\tau$ in TAT, and $\{1.5, 10.0, 100.0\}$ for $\zeta$ in DKS, and report only the best performing one. 
From the results shown in Table~\ref{table:DKS}, we observe that TAT, together with LReLU (i.e.~\texttt{TReLU}), performs the best in nearly all settings we tested, and that its advantage becomes larger when we remove dropout. 
One possible reason for the superior performance of \texttt{TReLU} networks is the stronger Q/C map conditions that they satisfy compared to other activations (i.e.~$\qmap(q) = q$ for all $q$ vs $\qmap(1) = 1$ and $\qmap^\prime(1) = 1$, and invariance of $\cmap$ to the input q value), and the extra resilience to kernel approximation error that these stronger conditions imply. In practice, we found that \texttt{TReLU} indeed has smaller kernel approximation error (compared to DKS with smooth activation functions, see Appendix~\ref{app:kernel-error}) and works equally well with Gaussian initialization (see Appendix~\ref{app:init}).

\begin{wraptable}[8]{r}{0.51\textwidth}
\centering
\vspace{-0.4cm}
\caption{Comparison with PReLU.}
\vspace{-0.25cm}
\label{table:prelu}
\resizebox{0.51\textwidth}{!}{
\begin{tabular}{ccccc}

\toprule
Depth & Optimizer & TReLU & $\text{PReLU}_{0.0}$ & $\text{PReLU}_{0.25}$ \\ 
\midrule
\multirow{2.1}{*}{50} & K-FAC & 74.6 & 72.5 & 73.6 \\ 
\cmidrule(lr){2-5}
& SGD & 71.0 & 66.7 & 67.9 \\ 
\midrule
\multirow{2.1}{*}{101} & K-FAC & 74.2 & 71.9 & 72.8  \\ 
\cmidrule(lr){2-5}
& SGD & 70.0 & 54.3 & 66.3 \\ 
\bottomrule
\end{tabular}}
\end{wraptable}
\textbf{Comparison with PReLU.} The Parametric ReLU (PReLU) introduced in \citet{he2015delving} differs from LReLU by making the negative slope a trainable parameter. Note that this is distinct from what we are doing with \texttt{TReLU}, since there we compute the negative slope parameter ahead of time and fix it during training. In our comparisons with PReLU we consider two different initializations: $0$ (which recovers the standard ReLU), and $0.25$, which was used in \citet{he2015delving}. We report the results on deep vanilla networks in Table~\ref{table:prelu} (see Appendix~\ref{app:prelu} for results on rescaled ResNets). For all settings, our method outperforms PReLU by a large margin, emphasizing the importance of the initial negative slope value. In principle, these two methods can be combined together (i.e.~we could first initialize the negative slope parameter with TAT, and then optimize it during training),
however we did not see any benefit from doing this in our experiments.

\begin{table}[t]
\vspace{-1.0cm}
\centering
\caption{Comparisons between TAT and DKS. The numbers on the right hand of $/$ are results without dropout. The methods with $*$ are introduced in this paper.}
\vspace{-0.2cm}
\label{table:DKS}
\resizebox{\textwidth}{!}{
\begin{tabular}{ccccccccc}

\toprule
\multirow{2}{*}{Depth} & \multirow{2}{*}{Optimizer} & \multirow{2}{*}{\vtop{\hbox{\strut Shortcut}\hbox{\strut Weight}}} & \multicolumn{3}{c}{TAT} & \multicolumn{3}{c}{DKS} \\ \cmidrule(lr){4-6}\cmidrule(lr){7-9} 
 & & & LReLU$^*$ & SoftPlus$^*$ & Tanh$^*$ & LReLU$^*$ & SoftPlus & Tanh \\
\midrule
\multirow{5}{*}{50} & \multirow{2}{*}{K-FAC} & $w = 0.0$ & \textbf{74.6}/\textbf{74.2} & 74.4/\textbf{74.2} & 73.1/72.9 & 74.3/74.3 & 74.3/73.7 & 72.9/72.9 \\ 
& & $w = 0.8$ & \textbf{76.4}/75.9 & \textbf{76.4}/75.0 & 74.8/74.4 & 76.2/\textbf{76.2} & 76.3/75.1 & 74.7/74.5 \\ 
\cmidrule(lr){2-9}
& \multirow{2}{*}{SGD} & $w = 0.0$ & 71.1/71.1 & 70.2/70.0 & 69.5/69.5 & 70.4/70.4 & \textbf{71.8}/\textbf{71.4} & 69.2/69.2 \\ 
& & $w = 0.8$ & \textbf{76.0}/\textbf{75.8} & 74.3/73.8 & 72.4/72.2 & 73.4/73.0 & 75.2/74.1 & 72.8/72.8 \\ 
\midrule
\multirow{5}{*}{101} & \multirow{2}{*}{K-FAC} & $w = 0.0$ & \textbf{74.2}/\textbf{74.2} & 74.1/73.4 & 72.8/72.5 & 73.5/73.5 & 73.9/73.1 & 72.5/72.4 \\
& & $w = 0.8$ & \textbf{77.8}/\textbf{77.0} & 76.6/75.7 & 75.8/75.1 & 76.8/76.7 & 76.8/75.6 & 75.9/75.7  \\ 
\cmidrule(lr){2-9}
& \multirow{2}{*}{SGD} & $w = 0.0$ & 70.0/\textbf{70.0} & \textbf{70.3}/68.8 & 69.0/67.8 & 68.3/68.3 & 68.3/68.3 & 69.8/69.8  \\ 
& & $w = 0.8$ & \textbf{77.3}/\textbf{76.0} & 75.3/75.3 & 73.8/73.5 & 74.9/74.6 & 76.3/75.1 & 74.6/74.6 \\
\bottomrule
\end{tabular}}
\vspace{-0.3cm}
\end{table}

\vspace{-0.25cm}
\subsection{The role of the optimizer}\label{sec:ablation}
\vspace{-0.2cm}
\begin{wraptable}[6]{r}{0.53\textwidth}
\centering
\vspace{-1.1cm}
\caption{Batch size scaling.}
\vspace{-0.2cm}
\label{table:bs-scaling}
\resizebox{0.53\textwidth}{!}{
\begin{tabular}{ccccccc}
\toprule
\multirow{2}{*}{Optimizer} & \multicolumn{6}{c}{Batch size}  \\
\cmidrule(lr){2-7}
 & 128 & 256 & 512 & 1024 & 2048 & 4096 \\
\midrule
K-FAC & 74.5 & 74.4 & 74.5 & 74.6 & 74.2 & 72.0 \\ 
SGD & 72.7 & 72.6 & 72.7 & 71.0 & 69.3 & 62.0 \\ 
LARS & 72.4 & 72.3 & 72.6 & 71.8 & 71.3 & 70.2  \\
\bottomrule
\end{tabular}}
\end{wraptable}

One interesting phenomenon we observed in our experiments, which echoes the findings of \citet{martens2021dks}, is that a strong optimizer such as K-FAC significantly outperforms SGD on vanilla deep networks in terms of training speed.
One plausible explanation is that K-FAC works better than SGD in the large-batch setting, and our default batch size of 1024 is already beyond SGD's ``critical batch size", at which scaling efficiency begins to drop. 
Indeed, it was shown by \citet{zhang2019algorithmic} that optimization algorithms that employ preconditioning, such as Adam and K-FAC, result in much larger critical batch sizes. 

\begin{wrapfigure}[11]{r}{0.4\textwidth}
\vspace{-0.75cm}
\centering
    \centering
    \includegraphics[width=0.395\columnwidth]{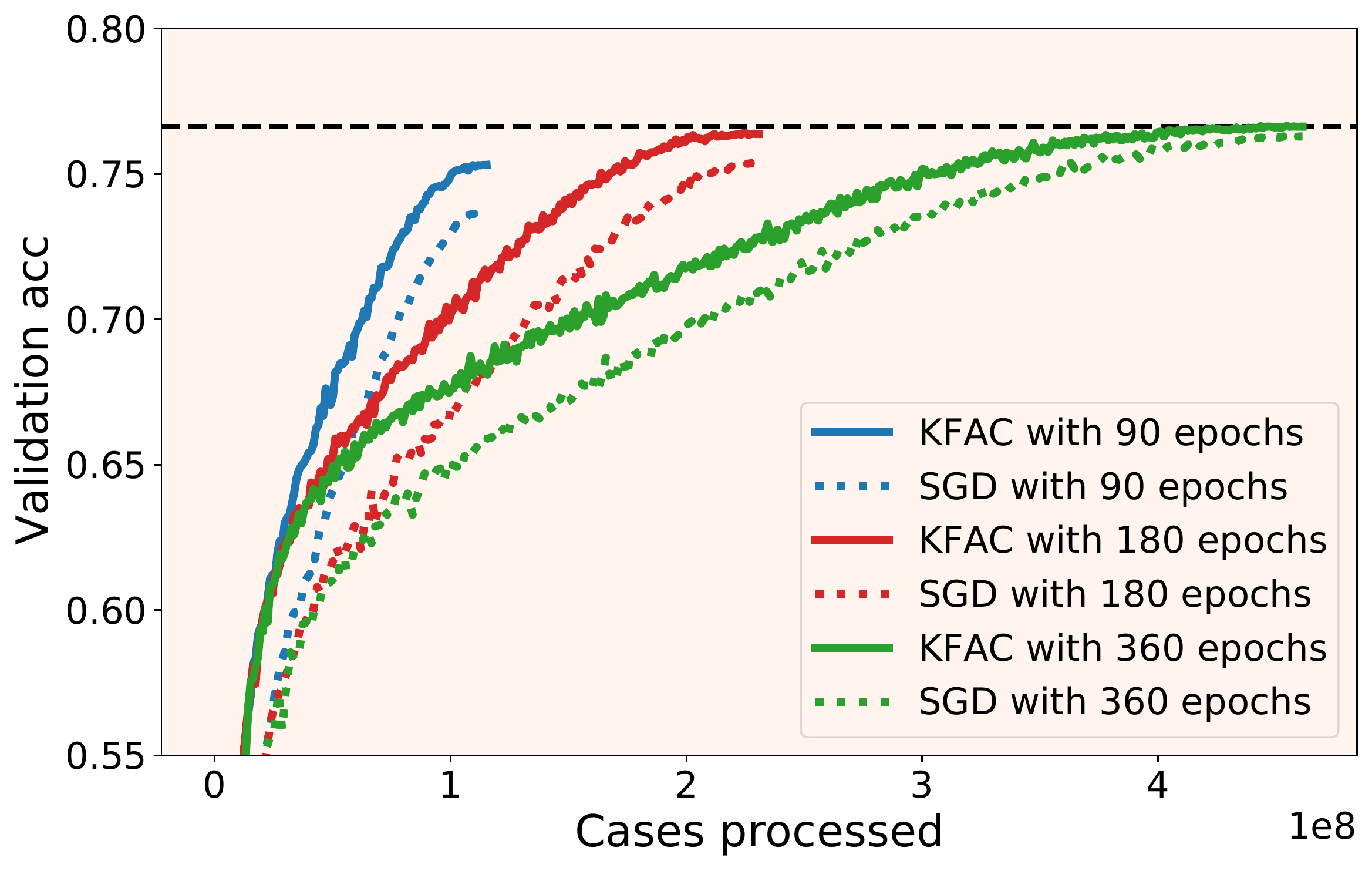}
	\vspace{-0.8cm}
	\caption{Training speed comparison between K-FAC and SGD on 50 layer vanilla \texttt{TReLU} network.}
	\label{fig:optimizer}
\end{wrapfigure}

To investigate this further, we tried batch sizes between 128 and 4096 for training 50-layer vanilla \texttt{TReLU} networks. 
As shown in Table~\ref{table:bs-scaling}, K-FAC performs equally well for all different batch sizes except 4096 (where we see increased overfitting), while the performance of SGD starts to drop when we increase the batch size past 512.
Surprisingly, we observe a similar trend for the LARS optimizer~\citep{you2019large}, which was designed for large-batch training.
Even at the smallest batch size we tested (128), K-FAC still outperforms SGD by a gap of 1.8\% within our standard epoch budget. 
We conjecture the reason behind this to be that vanilla networks without normalization and shortcuts give rise to loss landscapes with worse curvature properties compared to ResNets, and that this slows down simpler optimizers like SGD.
To investigate further, we also ran SGD (with a batch size of 512) and K-FAC for up to 360 epochs with a ``one-cycle" cosine learning rate schedule \citep{loshchilov2016sgdr} that decreases the learning rate to to $0$ by the final epoch. 
As shown in Figure~\ref{fig:optimizer}, SGD does indeed eventually catch up with K-FAC (using cosine scheme), requiring just over double the number of epochs to achieve the same validation accuracy. While one may argue that K-FAC introduces additional computational overhead at each step, thus making a head-to-head comparison versus SGD unfair, we note that this overhead can amortized by not updating K-FAC's preconditioner matrix at every step. In our experiments we found that this strategy allowed K-FAC to achieve a similar per-step runtime to SGD, while retaining its optimization advantage on vanilla networks. (See Appendix \ref{app:reduce-overhead}.)

\vspace{-0.3cm}
\section{Conclusions}
\vspace{-0.3cm}
In this work we considered the problem of training and generalization in vanilla deep neural networks (i.e.~those without shortcut connections and normalization layers).
To address this we developed a novel method that modifies the activation functions in a way tailored to the specific architecture, and which enables us to achieve generalization performance on par with standard ResNets of the same width/depth. 
Unlike the most closely related approach (DKS), our method is fully compatible with ReLU-family activation functions, and in fact achieves its best performance with them.
By obviating the need for shortcut connections, we believe our method could enable further research into deep models and their representations.  
In addition, our method may enable new architectures to be trained for which existing techniques, such as shortcuts and normalization layers, are insufficient. %

\newpage

\section*{Reproducibility Statement}
Here we discuss our efforts to facilitate the reproducibility of this paper. Firstly, we have made an open Python implementation of DKS and TAT, supporting multiple tensor programming frameworks, available at \url{https://github.com/deepmind/dks}. Secondly, we have given all important details of our experiments in Appendix~\ref{app:imp-details}. %

\bibliography{iclr2022_conference}

\begin{thebibliography}{58}
\providecommand{\natexlab}[1]{#1}
\providecommand{\url}[1]{\texttt{#1}}
\expandafter\ifx\csname urlstyle\endcsname\relax
  \providecommand{\doi}[1]{doi: #1}\else
  \providecommand{\doi}{doi: \begingroup \urlstyle{rm}\Url}\fi

\bibitem[Anil et~al.(2020)Anil, Gupta, Koren, Regan, and
  Singer]{anil2020scalable}
Rohan Anil, Vineet Gupta, Tomer Koren, Kevin Regan, and Yoram Singer.
\newblock Scalable second order optimization for deep learning.
\newblock \emph{arXiv preprint arXiv:2002.09018}, 2020.

\bibitem[Ba et~al.(2016)Ba, Kiros, and Hinton]{ba2016layer}
Jimmy Ba, Jamie~Ryan Kiros, and Geoffrey~E Hinton.
\newblock Layer normalization.
\newblock \emph{arXiv preprint arXiv:1607.06450}, 2016.

\bibitem[Ba et~al.(2017)Ba, Grosse, and Martens]{ba2017distributed}
Jimmy Ba, Roger Grosse, and James Martens.
\newblock Distributed second-order optimization using kronecker-factored
  approximations.
\newblock In \emph{International Conference on Learning Representations}, 2017.

\bibitem[Bachlechner et~al.(2020)Bachlechner, Majumder, Mao, Cottrell, and
  McAuley]{bachlechner2020rezero}
Thomas Bachlechner, Bodhisattwa~Prasad Majumder, Huanru~Henry Mao, Garrison~W
  Cottrell, and Julian McAuley.
\newblock Rezero is all you need: Fast convergence at large depth.
\newblock \emph{arXiv preprint arXiv:2003.04887}, 2020.

\bibitem[Balduzzi et~al.(2017)Balduzzi, Frean, Leary, Lewis, Ma, and
  McWilliams]{balduzzi2017shattered}
David Balduzzi, Marcus Frean, Lennox Leary, JP~Lewis, Kurt Wan-Duo Ma, and
  Brian McWilliams.
\newblock The shattered gradients problem: If resnets are the answer, then what
  is the question?
\newblock In \emph{International Conference on Machine Learning}, pp.\
  342--350. PMLR, 2017.

\bibitem[Bradbury et~al.(2018)Bradbury, Frostig, Hawkins, Johnson, Leary,
  Maclaurin, Necula, Paszke, Vander{P}las, Wanderman-{M}ilne, and
  Zhang]{jax2018github}
James Bradbury, Roy Frostig, Peter Hawkins, Matthew~James Johnson, Chris Leary,
  Dougal Maclaurin, George Necula, Adam Paszke, Jake Vander{P}las, Skye
  Wanderman-{M}ilne, and Qiao Zhang.
\newblock {JAX}: composable transformations of {P}ython+{N}um{P}y programs,
  2018.
\newblock URL \url{http://github.com/google/jax}.

\bibitem[Brock et~al.(2021{\natexlab{a}})Brock, De, and
  Smith]{brock2021characterizing}
Andrew Brock, Soham De, and Samuel~L Smith.
\newblock Characterizing signal propagation to close the performance gap in
  unnormalized resnets.
\newblock In \emph{International Conference on Learning Representations},
  2021{\natexlab{a}}.

\bibitem[Brock et~al.(2021{\natexlab{b}})Brock, De, Smith, and
  Simonyan]{brock2021high}
Andrew Brock, Soham De, Samuel~L Smith, and Karen Simonyan.
\newblock High-performance large-scale image recognition without normalization.
\newblock \emph{arXiv preprint arXiv:2102.06171}, 2021{\natexlab{b}}.

\bibitem[Brown et~al.(2020)Brown, Mann, Ryder, Subbiah, Kaplan, Dhariwal,
  Neelakantan, Shyam, Sastry, Askell, et~al.]{brown2020language}
Tom~B Brown, Benjamin Mann, Nick Ryder, Melanie Subbiah, Jared Kaplan, Prafulla
  Dhariwal, Arvind Neelakantan, Pranav Shyam, Girish Sastry, Amanda Askell,
  et~al.
\newblock Language models are few-shot learners.
\newblock \emph{arXiv preprint arXiv:2005.14165}, 2020.

\bibitem[Buchanan et~al.(2020)Buchanan, Gilboa, and Wright]{buchanan2020deep}
Sam Buchanan, Dar Gilboa, and John Wright.
\newblock Deep networks and the multiple manifold problem.
\newblock In \emph{International Conference on Learning Representations}, 2020.

\bibitem[Cho \& Saul(2009)Cho and Saul]{cho2009kernel}
Youngmin Cho and Lawrence Saul.
\newblock Kernel methods for deep learning.
\newblock \emph{Advances in Neural Information Processing Systems},
  22:\penalty0 342--350, 2009.

\bibitem[Daniely et~al.(2016)Daniely, Frostig, and Singer]{daniely2016toward}
Amit Daniely, Roy Frostig, and Yoram Singer.
\newblock Toward deeper understanding of neural networks: The power of
  initialization and a dual view on expressivity.
\newblock \emph{Advances In Neural Information Processing Systems},
  29:\penalty0 2253--2261, 2016.

\bibitem[De \& Smith(2020)De and Smith]{de2020batch}
Soham De and Sam Smith.
\newblock Batch normalization biases residual blocks towards the identity
  function in deep networks.
\newblock \emph{Advances in Neural Information Processing Systems}, 33, 2020.

\bibitem[Deng et~al.(2009)Deng, Dong, Socher, Li, Li, and
  Fei-Fei]{deng2009imagenet}
Jia Deng, Wei Dong, Richard Socher, Li-Jia Li, Kai Li, and Li~Fei-Fei.
\newblock Imagenet: A large-scale hierarchical image database.
\newblock In \emph{2009 IEEE conference on computer vision and pattern
  recognition}, pp.\  248--255. Ieee, 2009.

\bibitem[Ding et~al.(2021)Ding, Zhang, Ma, Han, Ding, and Sun]{ding2021repvgg}
Xiaohan Ding, Xiangyu Zhang, Ningning Ma, Jungong Han, Guiguang Ding, and Jian
  Sun.
\newblock Repvgg: Making vgg-style convnets great again.
\newblock In \emph{Proceedings of the IEEE/CVF Conference on Computer Vision
  and Pattern Recognition}, pp.\  13733--13742, 2021.

\bibitem[Duvenaud et~al.(2014)Duvenaud, Rippel, Adams, and
  Ghahramani]{duvenaud2014avoiding}
David Duvenaud, Oren Rippel, Ryan Adams, and Zoubin Ghahramani.
\newblock Avoiding pathologies in very deep networks.
\newblock In \emph{Artificial Intelligence and Statistics}, pp.\  202--210.
  PMLR, 2014.

\bibitem[Glorot \& Bengio(2010)Glorot and Bengio]{glorot2010understanding}
Xavier Glorot and Yoshua Bengio.
\newblock Understanding the difficulty of training deep feedforward neural
  networks.
\newblock In \emph{Proceedings of the thirteenth international conference on
  artificial intelligence and statistics}, pp.\  249--256. JMLR Workshop and
  Conference Proceedings, 2010.

\bibitem[Hayou et~al.(2019)Hayou, Doucet, and Rousseau]{hayou2019impact}
Soufiane Hayou, Arnaud Doucet, and Judith Rousseau.
\newblock On the impact of the activation function on deep neural networks
  training.
\newblock In \emph{International conference on machine learning}, pp.\
  2672--2680. PMLR, 2019.

\bibitem[Hayou et~al.(2021)Hayou, Clerico, He, Deligiannidis, Doucet, and
  Rousseau]{hayou2021stable}
Soufiane Hayou, Eugenio Clerico, Bobby He, George Deligiannidis, Arnaud Doucet,
  and Judith Rousseau.
\newblock Stable resnet.
\newblock In \emph{International Conference on Artificial Intelligence and
  Statistics}, pp.\  1324--1332. PMLR, 2021.

\bibitem[He et~al.(2015)He, Zhang, Ren, and Sun]{he2015delving}
Kaiming He, Xiangyu Zhang, Shaoqing Ren, and Jian Sun.
\newblock Delving deep into rectifiers: Surpassing human-level performance on
  imagenet classification.
\newblock In \emph{Proceedings of the IEEE international conference on computer
  vision}, pp.\  1026--1034, 2015.

\bibitem[He et~al.(2016{\natexlab{a}})He, Zhang, Ren, and Sun]{he2016deep}
Kaiming He, Xiangyu Zhang, Shaoqing Ren, and Jian Sun.
\newblock Deep residual learning for image recognition.
\newblock In \emph{Proceedings of the IEEE conference on computer vision and
  pattern recognition}, pp.\  770--778, 2016{\natexlab{a}}.

\bibitem[He et~al.(2016{\natexlab{b}})He, Zhang, Ren, and Sun]{he2016identity}
Kaiming He, Xiangyu Zhang, Shaoqing Ren, and Jian Sun.
\newblock Identity mappings in deep residual networks.
\newblock In \emph{European conference on computer vision}, pp.\  630--645.
  Springer, 2016{\natexlab{b}}.

\bibitem[Hennigan et~al.(2020)Hennigan, Cai, Norman, and
  Babuschkin]{haiku2020github}
Tom Hennigan, Trevor Cai, Tamara Norman, and Igor Babuschkin.
\newblock {H}aiku: {S}onnet for {JAX}, 2020.
\newblock URL \url{http://github.com/deepmind/dm-haiku}.

\bibitem[Hessel et~al.(2020)Hessel, Budden, Viola, Rosca, Sezener, and
  Hennigan]{optax2020github}
Matteo Hessel, David Budden, Fabio Viola, Mihaela Rosca, Eren Sezener, and Tom
  Hennigan.
\newblock Optax: composable gradient transformation and optimisation, in jax!,
  2020.
\newblock URL \url{http://github.com/deepmind/optax}.

\bibitem[Hochreiter et~al.(2001)Hochreiter, Bengio, Frasconi, Schmidhuber,
  et~al.]{hochreiter2001gradient}
Sepp Hochreiter, Yoshua Bengio, Paolo Frasconi, J{\"u}rgen Schmidhuber, et~al.
\newblock Gradient flow in recurrent nets: the difficulty of learning long-term
  dependencies, 2001.

\bibitem[Ioffe \& Szegedy(2015)Ioffe and Szegedy]{ioffe2015batch}
Sergey Ioffe and Christian Szegedy.
\newblock Batch normalization: Accelerating deep network training by reducing
  internal covariate shift.
\newblock In \emph{International conference on machine learning}, pp.\
  448--456. PMLR, 2015.

\bibitem[Khalil(2008)]{Khalil2008NonlinearST}
Hassan~K. Khalil.
\newblock Nonlinear systems third edition.
\newblock 2008.

\bibitem[Klambauer et~al.(2017)Klambauer, Unterthiner, Mayr, and
  Hochreiter]{klambauer2017self}
G{\"u}nter Klambauer, Thomas Unterthiner, Andreas Mayr, and Sepp Hochreiter.
\newblock Self-normalizing neural networks.
\newblock \emph{Advances in neural information processing systems}, 30, 2017.

\bibitem[Krizhevsky et~al.(2012)Krizhevsky, Sutskever, and
  Hinton]{krizhevsky2012imagenet}
Alex Krizhevsky, Ilya Sutskever, and Geoffrey~E Hinton.
\newblock Imagenet classification with deep convolutional neural networks.
\newblock \emph{Advances in neural information processing systems}, 2012.

\bibitem[Krizhevsky et~al.(2009)]{krizhevsky2009learning}
Alex Krizhevsky et~al.
\newblock Learning multiple layers of features from tiny images.
\newblock 2009.

\bibitem[LeCun et~al.(1998)LeCun, Bottou, Orr, and
  M{\"u}ller]{lecun1998efficient}
Yann~A LeCun, L{\'e}on Bottou, Genevieve~B Orr, and Klaus-Robert M{\"u}ller.
\newblock Efficient backprop.
\newblock In \emph{Neural networks: Tricks of the trade}. Springer, 1998.

\bibitem[Loshchilov \& Hutter(2016)Loshchilov and Hutter]{loshchilov2016sgdr}
Ilya Loshchilov and Frank Hutter.
\newblock S{GDR}: Stochastic gradient descent with warm restarts.
\newblock \emph{arXiv preprint arXiv:1608.03983}, 2016.

\bibitem[Lu et~al.(2020)Lu, Gould, and Ajanthan]{lu2020bidirectional}
Yao Lu, Stephen Gould, and Thalaiyasingam Ajanthan.
\newblock Bidirectional self-normalizing neural networks.
\newblock \emph{arXiv preprint arXiv:2006.12169}, 2020.

\bibitem[Maas et~al.(2013)Maas, Hannun, and Ng]{maas2013rectifier}
Andrew~L Maas, Awni~Y Hannun, and Andrew~Y Ng.
\newblock Rectifier nonlinearities improve neural network acoustic models.
\newblock In \emph{International Conference on Machine Learning}, 2013.

\bibitem[Martens(2021)]{martens2021validity}
James Martens.
\newblock On the validity of kernel approximations for orthogonally-initialized
  neural networks.
\newblock \emph{arXiv preprint arXiv:2104.05878}, 2021.

\bibitem[Martens \& Grosse(2015)Martens and Grosse]{martens2015optimizing}
James Martens and Roger Grosse.
\newblock Optimizing neural networks with kronecker-factored approximate
  curvature.
\newblock In \emph{International conference on machine learning}, pp.\
  2408--2417. PMLR, 2015.

\bibitem[Martens et~al.(2021)Martens, Ballard, Desjardins, Swirszcz, Dalibard,
  Sohl-Dickstein, and Schoenholz]{martens2021dks}
James Martens, Andy Ballard, Guillaume Desjardins, Grzegorz Swirszcz, Valentin
  Dalibard, Jascha Sohl-Dickstein, and Samuel~S Schoenholz.
\newblock Rapid training of deep neural networks without skip connections or
  normalization layers using deep kernel shaping.
\newblock \emph{arXiv preprint arXiv:2110.01765}, 2021.

\bibitem[Neal(1996)]{neal1996bayesian}
Radford~M Neal.
\newblock Bayesian learning for neural networks.
\newblock \emph{Lecture notes in statistics}, 118, 1996.

\bibitem[Oyedotun et~al.(2020)Oyedotun, Aouada, Ottersten,
  et~al.]{oyedotun2020going}
Oyebade~K Oyedotun, Djamila Aouada, Bj{\"o}rn Ottersten, et~al.
\newblock Going deeper with neural networks without skip connections.
\newblock In \emph{2020 IEEE International Conference on Image Processing
  (ICIP)}, pp.\  1756--1760. IEEE, 2020.

\bibitem[Pennington et~al.(2017)Pennington, Schoenholz, and
  Ganguli]{pennington2017resurrecting}
Jeffrey Pennington, Samuel~S Schoenholz, and Surya Ganguli.
\newblock Resurrecting the sigmoid in deep learning through dynamical isometry:
  theory and practice.
\newblock In \emph{Proceedings of the 31st International Conference on Neural
  Information Processing Systems}, pp.\  4788--4798, 2017.

\bibitem[Poole et~al.(2016)Poole, Lahiri, Raghu, Sohl-Dickstein, and
  Ganguli]{poole2016exponential}
Ben Poole, Subhaneil Lahiri, Maithra Raghu, Jascha Sohl-Dickstein, and Surya
  Ganguli.
\newblock Exponential expressivity in deep neural networks through transient
  chaos.
\newblock \emph{Advances in neural information processing systems},
  29:\penalty0 3360--3368, 2016.

\bibitem[Powell(1964)]{powell1964efficient}
Michael~JD Powell.
\newblock An efficient method for finding the minimum of a function of several
  variables without calculating derivatives.
\newblock \emph{The Computer Journal}, 7\penalty0 (2):\penalty0 155--162, 1964.

\bibitem[Qi et~al.(2020)Qi, You, Wang, Ma, and Malik]{qi2020deep}
Haozhi Qi, Chong You, Xiaolong Wang, Yi~Ma, and Jitendra Malik.
\newblock Deep isometric learning for visual recognition.
\newblock In \emph{International Conference on Machine Learning}, pp.\
  7824--7835. PMLR, 2020.

\bibitem[Saxe et~al.(2013)Saxe, McClelland, and Ganguli]{saxe2013exact}
Andrew~M Saxe, James~L McClelland, and Surya Ganguli.
\newblock Exact solutions to the nonlinear dynamics of learning in deep linear
  neural networks.
\newblock \emph{arXiv preprint arXiv:1312.6120}, 2013.

\bibitem[Schoenholz et~al.(2017)Schoenholz, Gilmer, Ganguli, and
  Sohl-Dickstein]{schoenholz2016deep}
Samuel~S Schoenholz, Justin Gilmer, Surya Ganguli, and Jascha Sohl-Dickstein.
\newblock Deep information propagation.
\newblock In \emph{International Conference on Learning Representations}, 2017.

\bibitem[Shao et~al.(2020)Shao, Hu, Wang, Xue, and Raj]{shao2020normalization}
Jie Shao, Kai Hu, Changhu Wang, Xiangyang Xue, and Bhiksha Raj.
\newblock Is normalization indispensable for training deep neural network?
\newblock \emph{Advances in Neural Information Processing Systems}, 33, 2020.

\bibitem[Silver et~al.(2017)Silver, Schrittwieser, Simonyan, Antonoglou, Huang,
  Guez, Hubert, Baker, Lai, Bolton, et~al.]{silver2017mastering}
David Silver, Julian Schrittwieser, Karen Simonyan, Ioannis Antonoglou, Aja
  Huang, Arthur Guez, Thomas Hubert, Lucas Baker, Matthew Lai, Adrian Bolton,
  et~al.
\newblock Mastering the game of go without human knowledge.
\newblock \emph{nature}, 550\penalty0 (7676):\penalty0 354--359, 2017.

\bibitem[Srivastava et~al.(2015)Srivastava, Greff, and
  Schmidhuber]{srivastava2015highway}
Rupesh~Kumar Srivastava, Klaus Greff, and J{\"u}rgen Schmidhuber.
\newblock Highway networks.
\newblock \emph{arXiv preprint arXiv:1505.00387}, 2015.

\bibitem[Szegedy et~al.(2016)Szegedy, Vanhoucke, Ioffe, Shlens, and
  Wojna]{szegedy2016rethinking}
Christian Szegedy, Vincent Vanhoucke, Sergey Ioffe, Jon Shlens, and Zbigniew
  Wojna.
\newblock Rethinking the inception architecture for computer vision.
\newblock In \emph{Proceedings of the IEEE conference on computer vision and
  pattern recognition}, pp.\  2818--2826, 2016.

\bibitem[Veit et~al.(2016)Veit, Wilber, and Belongie]{veit2016residual}
Andreas Veit, Michael~J Wilber, and Serge Belongie.
\newblock Residual networks behave like ensembles of relatively shallow
  networks.
\newblock \emph{Advances in neural information processing systems},
  29:\penalty0 550--558, 2016.

\bibitem[Xiao et~al.(2018)Xiao, Bahri, Sohl-Dickstein, Schoenholz, and
  Pennington]{xiao2018dynamical}
Lechao Xiao, Yasaman Bahri, Jascha Sohl-Dickstein, Samuel Schoenholz, and
  Jeffrey Pennington.
\newblock Dynamical isometry and a mean field theory of cnns: How to train
  10,000-layer vanilla convolutional neural networks.
\newblock In \emph{International Conference on Machine Learning}, 2018.

\bibitem[Xiao et~al.(2020)Xiao, Pennington, and
  Schoenholz]{xiao2020disentangling}
Lechao Xiao, Jeffrey Pennington, and Samuel Schoenholz.
\newblock Disentangling trainability and generalization in deep neural
  networks.
\newblock In \emph{International Conference on Machine Learning}, pp.\
  10462--10472. PMLR, 2020.

\bibitem[Yang \& Schoenholz(2017)Yang and Schoenholz]{yang2017mean}
Greg Yang and Samuel Schoenholz.
\newblock Mean field residual networks: On the edge of chaos.
\newblock In \emph{Advances in Neural Information Processing Systems},
  volume~30, 2017.

\bibitem[Yang et~al.(2019)Yang, Pennington, Rao, Sohl-Dickstein, and
  Schoenholz]{Yang2019AMF}
Greg Yang, Jeffrey Pennington, Vinay Rao, Jascha Sohl-Dickstein, and Samuel~S.
  Schoenholz.
\newblock A mean field theory of batch normalization.
\newblock \emph{ArXiv}, abs/1902.08129, 2019.

\bibitem[You et~al.(2019)You, Li, Reddi, Hseu, Kumar, Bhojanapalli, Song,
  Demmel, Keutzer, and Hsieh]{you2019large}
Yang You, Jing Li, Sashank Reddi, Jonathan Hseu, Sanjiv Kumar, Srinadh
  Bhojanapalli, Xiaodan Song, James Demmel, Kurt Keutzer, and Cho-Jui Hsieh.
\newblock Large batch optimization for deep learning: Training bert in 76
  minutes.
\newblock \emph{arXiv preprint arXiv:1904.00962}, 2019.

\bibitem[Zagoruyko \& Komodakis(2016)Zagoruyko and
  Komodakis]{zagoruyko2016wide}
Sergey Zagoruyko and Nikos Komodakis.
\newblock Wide residual networks.
\newblock In \emph{British Machine Vision Conference 2016}. British Machine
  Vision Association, 2016.

\bibitem[Zhang et~al.(2019)Zhang, Li, Nado, Martens, Sachdeva, Dahl, Shallue,
  and Grosse]{zhang2019algorithmic}
Guodong Zhang, Lala Li, Zachary Nado, James Martens, Sushant Sachdeva, George
  Dahl, Chris Shallue, and Roger~B Grosse.
\newblock Which algorithmic choices matter at which batch sizes? insights from
  a noisy quadratic model.
\newblock \emph{Advances in neural information processing systems}, 2019.

\bibitem[Zhang et~al.(2018)Zhang, Dauphin, and Ma]{zhang2018fixup}
Hongyi Zhang, Yann~N Dauphin, and Tengyu Ma.
\newblock Fixup initialization: Residual learning without normalization.
\newblock In \emph{International Conference on Learning Representations}, 2018.

\end{thebibliography}
\bibliographystyle{iclr2022_conference}

\newpage
\appendix
\section{Background}

\subsection{Kernel Function Approximation Error Bounds} \label{app:approx_error_bounds}
In Section~\ref{sec:kernel}, we claimed that the kernel defined in \eqref{eq:kernel} would converge to a deterministic kernel, as the width of each layer goes to infinity.
To be specific, one has the following result bounding the kernel approximation error.
\begin{restatable}[Adapted from Theorem 2 of \citet{daniely2016toward}]{thm}{kernel} \label{thm:kernel}
Consider a fully-connected network of depth $L$ with weights initialized independently using a standard Gaussian fan-in initialization. Further suppose that the activation function $\phi$ is $C$-bounded (i.e.~$\|\phi\|_\infty \leq C$, $\|\phi^\prime\|_\infty \leq C$ and $\|\phi^{\dprime}\|_\infty \leq C$ for some constant $C$) and satisfies $\expect_{z \sim \normal(0, 1)}[\phi(z)^2] = 1$, and that the width of each layer is greater than or equal to $(4 C^4)^L \log(8L/\delta)/\epsilon^2$. Then at initialization time, for inputs $\inputs_1$ and $\inputs_2$ satisfying $\|\inputs_1\|^2 = \|\inputs_2\|^2 = \mathrm{dim}(\inputs_1)$, we have that
\begin{equation*}
    \left|\kappa_f^L(\inputs_1, \inputs_2) - \tilde{\kappa}_f^L(\inputs_1, \inputs_2)\right| \leq \epsilon
\end{equation*}
with probability at least $1 - \delta$.
\end{restatable}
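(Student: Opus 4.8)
The plan is to prove the bound by induction on the layer index $l$, tracking the (at most three) distinct kernel entries $\kappa_f^l(\inputs_i,\inputs_j)$, $i,j\in\{1,2\}$, simultaneously; this is essentially the argument of \citet{daniely2016toward} (their Theorem~2), specialized to zero biases (as in \secref{sec:kernel}), two fixed inputs, and the normalization $\|\inputs_i\|^2 = \mathrm{dim}(\inputs_i)$. The base case is immediate since $\kappa_f^0 = \tilde{\kappa}_f^0$ deterministically. For the inductive step, condition on the layer-$l$ features $f_\params^l(\inputs_1), f_\params^l(\inputs_2)$. Because $\weights_l$ has iid $\normal(0,1/d_l)$ entries, the preactivation pairs $\big((\weights_l f_\params^l(\inputs_1))_k, (\weights_l f_\params^l(\inputs_2))_k\big)$ are iid over the $d_{l+1}$ output coordinates $k$, bivariate Gaussian with covariance matrix $\Sigma$ equal to the empirical layer-$l$ kernel matrix. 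Hence $\kappa_f^{l+1}(\inputs_i,\inputs_j)$ is an average of $d_{l+1}$ iid terms $\phi(z_i^{(k)})\phi(z_j^{(k)})$, each bounded in magnitude by $C^2$ (using $\|\phi\|_\infty\le C$), with conditional mean $G_{ij}(\Sigma) := \expect_{z\sim\normal(0,\Sigma)}[\phi(z_i)\phi(z_j)]$, which is exactly the one-step map of \eqref{eq:kernel-recursion} applied to the empirical entries.

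Two estimates are then combined. First, Hoeffding's inequality gives, conditionally, $\Pr\big[\,|\kappa_f^{l+1}(\inputs_i,\inputs_j) - G_{ij}(\Sigma)| > t\,\big] \le 2\exp(-d_{l+1}t^2/(2C^4))$; note $2C^4 = \tfrac12(2C^2)^2$, which is the source of the $4C^4$ that later gets raised to the $L$th power. Second, a Lipschitz bound: since $\tilde{\kappa}_f^{l+1}(\inputs_i,\inputs_j) = G_{ij}(\tilde\Sigma)$ with $\tilde\Sigma$ built from the deterministic entries, one shows $|G_{ij}(\Sigma)-G_{ij}(\tilde\Sigma)|$ is at most a fixed multiple of $C^2 \max_{i,j}|\kappa_f^l(\inputs_i,\inputs_j) - \tilde{\kappa}_f^l(\inputs_i,\inputs_j)|$. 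This uses Gaussian integration by parts (Price's theorem): the derivative of $G_{ij}$ in the off-diagonal entry of $\Sigma$ is $\expect[\phi'(z_i)\phi'(z_j)]$, bounded by $\|\phi'\|_\infty^2 \le C^2$, while the derivatives in the diagonal entries reduce to expectations involving $\phi''$ and are likewise controlled by $C$-boundedness. Writing $\epsilon_l$ for a high-probability bound on $\max_{i,j}|\kappa_f^l - \tilde{\kappa}_f^l|$, this yields a recursion $\epsilon_{l+1} \le (\mathrm{const}\cdot C^2)\,\epsilon_l + t_l$ with $\epsilon_0 = 0$; unrolling it gives a geometric sum of order $(\mathrm{const}\cdot C^2)^L$ times the per-layer fluctuation. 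Choosing the slacks $t_l$ appropriately and imposing $d_{l+1} \ge (4C^4)^L\log(8L/\delta)/\epsilon^2$ forces $\epsilon_L\le\epsilon$, and a union bound over the $L$ layers, the three entries, and the two tails (absorbed into the factor $8$) keeps the total failure probability below $\delta$.

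The hard part will be the careful accounting of constants that makes the geometric series close up exactly to the stated width requirement: one must verify the one-step map has Lipschitz constant at most (essentially) $2C^2$ on the relevant domain — which requires handling the diagonal-entry derivatives via Price's theorem together with $\|\phi''\|_\infty\le C$ — and then calibrate the per-layer Hoeffding slacks $t_l$ so that, after amplification through $L$ Lipschitz maps and summation, the accumulated error is $\epsilon$ precisely when $d_{l+1}\ge (4C^4)^L\log(8L/\delta)/\epsilon^2$. A secondary subtlety is that concentration at each layer holds only conditionally on the previous layer's features, so the induction should be phrased with nested high-probability events and the union bound deferred to the end; the empirical diagonal entries $\kappa_f^l(\inputs_i,\inputs_i)$ must also be carried along in the induction (their deterministic limit is constantly $1$ by the normalization, but the empirical versions fluctuate and enter $\Sigma$). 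Since the statement is only adapted from \citet{daniely2016toward}, the cleanest writeup is to reduce to their Theorem~2 after checking that these modifications (zero biases, two fixed inputs, this normalization) are cosmetic.
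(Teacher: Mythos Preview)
The paper does not supply its own proof of this theorem: it is stated in Appendix~\ref{app:approx_error_bounds} with the attribution ``Adapted from Theorem~2 of \citet{daniely2016toward}'' and no further argument, so there is nothing to compare against beyond the citation itself. Your proposal correctly reconstructs the Daniely et al.\ argument (layerwise induction, conditional Hoeffding for the fresh-weights fluctuation, and a Lipschitz estimate for the one-step kernel map via Price's theorem), and you also arrive at the same conclusion the paper implicitly relies on---that the cleanest writeup is to reduce to their Theorem~2 after checking the cosmetic modifications---so your approach is consistent with the paper's treatment.
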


The bound in Theorem \ref{thm:kernel} predicts an exponential dependence on the depth $L$ of the minimum required width of each layer.
However, for a network with ReLU activations, this dependence is only quadratic in $L$, as is established in the following theorem:
\begin{restatable}[Adapted from Theorem 3 of \citet{daniely2016toward}]{thm}{relukernel} \label{thm:relukernel}
Consider a fully-connected network of depth $L$ with ReLU activations and weights initialized independently using a He initialization~\citep{he2015delving}, and suppose that the width of each layer is greater than or equal to $L^2 \log(L/\delta)/\epsilon^2$. Then at initialization time, for inputs $\inputs_1$ and $\inputs_2$ satisfying $\|\inputs_1\|^2 = \|\inputs_2\|^2 = \mathrm{dim}(\inputs_1)$, and $\epsilon \lesssim \frac{1}{L}$, we have that
\begin{equation*}
    \left|\kappa_f^L(\inputs_1, \inputs_2) - \tilde{\kappa}_f^L(\inputs_1, \inputs_2)\right| \leq \epsilon
\end{equation*}
with probability at least $1 - \delta$.
\end{restatable}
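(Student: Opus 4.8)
The key structural fact I would exploit is that ReLU is positively homogeneous, so under a He/Delta initialization the $q$ values are \emph{exactly} preserved in expectation at every layer; this collapses the problem to controlling a one-dimensional recursion for the (approximately cosine-similarity) $c$ values, whose one-step map turns out to be $1$-Lipschitz. Errors incurred by finite width therefore \emph{add} across layers rather than compounding multiplicatively, which is exactly the source of the quadratic-in-$L$ (rather than exponential-in-$L$) width requirement, in contrast to Theorem~\ref{thm:kernel}.

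\textbf{Step 1 (diagonal entries stay near $1$).} First I would show $\kappa_f^l(\inputs_i,\inputs_i)$ concentrates around $1$ for every $l \le L$. Conditioned on $x^l$, each coordinate $(\weights_l x^l)_j$ is $\normal(0,\kappa_f^l(\inputs_i,\inputs_i))$, and since the normalized ReLU satisfies $\expect_{g\sim\normal(0,\sigma^2)}[\,2\max(g,0)^2] = \sigma^2$, the sequence $\kappa_f^l(\inputs_i,\inputs_i)$ is a martingale. A Bernstein bound applied to the average of $d_{l+1}$ i.i.d.\ (sub-exponential, because squares of Gaussians) coordinates gives a per-layer deviation $\lesssim \epsilon/L$ with failure probability $\lesssim \delta/L$; union bounding over the $O(L)$ diagonal recursions and using $d \ge L^2\log(L/\delta)/\epsilon^2$ yields $|\kappa_f^l(\inputs_i,\inputs_i)-1|\lesssim \epsilon \lesssim 1/L$ for all $l$ on a good event.

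\textbf{Step 2 (the one-step map is non-expansive).} By Lemma~\ref{lem:qcmaps} with $\alpha = 0$, the map sending the unnormalized $2\times 2$ kernel matrix $\Sigma^l$ to $\Sigma^{l+1}$ has the closed form of the degree-$1$ arc-cosine kernel, and in normalized form (cf.\ \eqref{eq:lrelu-cmap}) one has $\cmap^\prime(c) = 1 - \pi^{-1}\cos^{-1}(c) \in [0,1]$. I would use this to show that on the region where the diagonal entries lie in $[1-O(1/L),\,1+O(1/L)]$ (guaranteed by Step~1), the one-step matrix map is $1$-Lipschitz in the entrywise $\ell_\infty$ norm up to a multiplicative slack $1+O(1/L)$ coming from the diagonal drift. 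This non-expansiveness is the property that distinguishes ReLU from general smooth activations here.

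\textbf{Step 3 (off-diagonal concentration) and Step 4 (error propagation).} Conditioned on $x^l$, $\kappa_f^{l+1}(\inputs_1,\inputs_2)$ is an average of $d_{l+1}$ i.i.d.\ products of jointly Gaussian variables, hence sub-exponential with variance controlled by the near-$1$ diagonal entries, and with conditional mean equal to the $(1,2)$-entry of the one-step image of the empirical $\Sigma^l$; Bernstein again gives deviation $\lesssim \epsilon/L$ with probability $\ge 1-\delta/(2L)$. Setting $e^l := \max_{i,j}|\kappa_f^l(\inputs_i,\inputs_j)-\tilde\kappa_f^l(\inputs_i,\inputs_j)|$ and combining Steps~2 and~3 by the triangle inequality gives, on the good event, $e^{l+1} \le (1+O(1/L))\,e^l + O(\epsilon/L)$; since $(1+O(1/L))^L = O(1)$, unrolling yields $e^L \lesssim \epsilon$, and absorbing constants into the width requirement and the hypothesis $\epsilon \lesssim 1/L$ finishes the proof. \textbf{The main obstacle} is the interplay of Steps~2 and~3: because ReLU is unbounded one cannot use bounded-difference concentration as in Theorem~\ref{thm:kernel}, so one needs sharp sub-exponential (Bernstein) control of products of correlated Gaussians with variance proxies that degrade only mildly as the diagonal wanders from $1$, \emph{and} one must keep the recursion inside the region where the one-step map is provably non-expansive — which is precisely why the assumption $\epsilon \lesssim 1/L$ is needed — while carrying the $O(1/L)$ slack bookkeeping uniformly across all $L$ layers under a single $O(L)$-term union bound. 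The remaining ingredients (the arc-cosine closed form and the unrolling) are routine.
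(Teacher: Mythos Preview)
The paper does not actually give its own proof of this theorem: it is stated in Appendix~A.1 purely as a quoted result, explicitly labeled ``Adapted from Theorem~3 of \citet{daniely2016toward}'', and no proof (or even sketch) appears anywhere in the paper. The theorem is cited only to contrast the quadratic-in-$L$ width dependence for ReLU against the exponential-in-$L$ dependence of Theorem~\ref{thm:kernel}, and the paper immediately moves on. So there is no ``paper's own proof'' to compare your proposal against.

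That said, your sketch is a faithful outline of the argument one finds in the original reference: exploit positive homogeneity to make the $q$-recursion a martingale (so diagonals stay near $1$ with per-layer fluctuation $O(\epsilon/L)$ via sub-exponential/Bernstein concentration), use the closed-form arc-cosine kernel to get $\cmap'(c)\in[0,1]$ and hence (near) non-expansiveness of the one-step map, control the off-diagonal by the same concentration, and unroll the additive error recursion $e^{l+1}\le(1+O(1/L))e^l+O(\epsilon/L)$ over $L$ layers under a union bound. Your identification of the non-expansiveness of $\cmap$ as the reason errors accumulate additively rather than multiplicatively---and hence the source of the $L^2$ rather than $(4C^4)^L$ width requirement---is exactly the point the paper is drawing attention to by juxtaposing Theorems~\ref{thm:kernel} and~\ref{thm:relukernel}. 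Nothing in your plan is wrong; it simply has no counterpart in this paper to be compared with.
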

According to Lemma D.1 of \citet{buchanan2020deep}, the requirement of the width for ReLU networks could further be reduced to linear in the depth $L$, but with a worse dependency on $\delta$.

Although Theorems~\ref{thm:kernel} and~\ref{thm:relukernel} are only applicable to Gaussian initializations, a similar bound has been given by \citet{martens2021validity} for scaled uniform orthogonal initializations in the case that $L=1$. Moreover, \citet{martens2021validity} conjectures that their result could be extended to general values of $L$.

\subsection{Degenerate C maps for very deep networks}\label{app:degenerate}
\citet{daniely2016toward}, \citet{poole2016exponential}, and \citet{martens2021dks} have shown that without very careful interventions, C maps inevitably become ``degenerate" in deep networks, tending rapidly towards constant functions on $(-1, 1)$ as depth increases. 
The following proposition is a restatement of Claim 1 from \citet{daniely2016toward}:
\begin{prop} Suppose $f$ is a deep network consisting of a composition of $L$ combined layers. Then for all $c \in (-1, 1)$ we have
\begin{equation*}
    \lim_{L \rightarrow \infty} \cmap_f(c) = c^*,
\end{equation*}
for some $c^* \in [0, 1]$.
\end{prop}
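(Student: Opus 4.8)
The key observation is that for a composition of $L$ combined layers the global C map is simply the $L$-fold iterate $\cmap_f = \cmap^{\circ L}$ of the single local C map $\cmap$, so the claim is exactly that the orbit $c_l := \cmap^{\circ l}(c)$ converges to a point of $[0,1]$ for every $c \in (-1,1)$. The plan is to extract from the positive-definiteness of $\cmap$ (Appendix~\ref{app:property-cmap}) enough structure to reduce this to the elementary dynamics of a monotone one-dimensional map.

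First I would record the consequences of positive-definiteness. By Schoenberg's characterization, $\cmap(c) = \sum_{k \ge 0} a_k c^k$ with all $a_k \ge 0$ and $\sum_k a_k = \cmap(1) = 1$; in particular the series converges uniformly on $[-1,1]$, so $\cmap$ is continuous there, non-decreasing and convex on $[0,1]$, and maps $[0,1]$ into $[a_0,1] \subseteq [0,1]$. Splitting into even and odd powers also gives the pointwise bound $\cmap(c) \ge c$ for $c \in [-1,0]$, since $\sum_{k\ \mathrm{odd}} a_k c^k \le |c| \sum_{k\ \mathrm{odd}} a_k \le |c|$. The same bound shows that any fixed point of $\cmap$ in $[-1,0)$ would force $a_1 = 1$, i.e.\ $\cmap = \mathrm{id}$ and $\phi$ affine, which we exclude.

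Next I would analyse the orbit. For $c \in [0,1)$, $\cmap|_{[0,1]}$ is a continuous non-decreasing self-map, so $(c_l)$ is monotone (increasing if $c_1 \ge c_0$, decreasing otherwise) and bounded, hence converges to a fixed point $c^* \in [0,1]$. Since $\cmap$ is real-analytic on $(-1,1)$ and, for non-affine $\phi$, not the identity, $\cmap(c) - c$ has only isolated zeros; combined with the convexity of $\cmap(c)-c$ on $[0,1]$ and its zero at $c=1$, this yields at most one further fixed point in $[0,1)$, and a short case split shows every orbit started in $[0,1)$ converges to the same value: $1$ when $\cmap(c) \ge c$ throughout $[0,1]$, and that unique interior fixed point otherwise. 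For $c \in (-1,0)$, the bound $\cmap(c) \ge c$ makes the orbit non-decreasing for as long as it stays in $[-1,0]$; if it stays forever it converges to a fixed point of $\cmap$ in $[-1,0]$, necessarily $0$ by the previous paragraph; otherwise it enters $[0,1)$ and converges by the case just treated. In all cases the limit lies in $[0,1]$ and equals the common value $c^*$, proving the proposition.

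I expect the main obstacle to be the negative-correlation régime $c \in (-1,0)$: there $\cmap$ need not be monotone, so the clean ``monotone bounded orbit'' argument does not apply directly, and one must instead lean on the pointwise inequality $\cmap(c) \ge c$ to argue the orbit is eventually pushed into $[0,1]$. The other slightly delicate point — needed only for the stronger reading that $c^*$ is the same for all $c$ — is the count of fixed points on $[0,1]$, which uses convexity of $\cmap(c)-c$ together with analyticity of $\cmap$ to rule out an entire interval of fixed points. Everything else (continuity, monotonicity, convexity, boundedness) is routine once the Schoenberg representation is in hand.
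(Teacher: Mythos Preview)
The paper does not actually prove this proposition: it is introduced as a restatement of Claim~1 from Daniely et~al.\ (2016) and no argument is supplied beyond that citation. Your self-contained proof is therefore strictly more than what the paper offers, and it is correct. The route you take --- writing $\cmap(c) = \sum_{k \ge 0} a_k c^k$ with all $a_k \ge 0$ via the Schoenberg representation, then running a monotone-orbit argument on $[0,1]$ and handling $c \in (-1,0)$ through the pointwise inequality $\cmap(c) \ge c$ --- is the standard one and is essentially the argument in the cited reference.

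Two minor remarks. First, you explicitly exclude the case $\cmap = \mathrm{id}$ (affine $\phi$); the proposition as stated does not, yet it is literally false in that case, since for $c \in (-1,0)$ the orbit is constant at $c \notin [0,1]$. Your exclusion is therefore necessary, and the paper is simply silent on this degeneracy. Second, your additional work showing that $c^*$ is common to all starting points goes beyond the literal statement (where the existential quantifier follows the universal), but it matches the intended reading: the purpose of the proposition in the paper is precisely that $\cmap_f$ degenerates to a single constant on $(-1,1)$.
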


While the above result doesn't characterize the \emph{rate} of convergence $\cmap_f(c)$ to a constant function, \citet{poole2016exponential} show that if $\cmap'(1) \neq 1$, it happens exponentially fast as a function of $L$ in the asymptotic limit of large $L$. \citet{martens2021dks} gives a similar result which holds uniformly for all $L$, and for networks with more general repeated structures.

\subsection{C map derivative}\label{app:cmap-der}
\citet{poole2016exponential} gave the following nice formula for the derivative of C map of a combined layer with activation function $\phi$:
\begin{equation}
    \cmap^\prime(c, q_1, q_2) = \frac{\sqrt{q_1 q_2}}{\sqrt{\qmap(q_1)\qmap(q_2)}} \expect_{z_1, z_2 \sim \normal(0, 1)} \left[\phi^\prime \left(\sqrt{q_1}z_1\right) \phi^\prime\left( \sqrt{q_2}\left(c z_1 + \sqrt{1 - c^2}z_2 \right)\right) \right].
\end{equation}
For a rigorous proof of this result we refer the reader to \citet{martens2021dks}.

One can iterate this formula to obtain a similar equation for higher-order derivatives:
\begin{equation}
    \cmap^{(i)}(c, q_1, q_2) = \frac{(q_1 q_2)^{(i/2)}}{\sqrt{\qmap(q_1)\qmap(q_2)}} \expect_{z_1, z_2 \sim \normal(0, 1)} \left[\phi^{(i)} \left(\sqrt{q_1}z_1\right) \phi^{(i)} \left(\sqrt{q_2} \left(c z_1 + \sqrt{1 - c^2}z_2 \right)\right) \right].
\end{equation}

\vspace{-0.2cm}
\subsection{Some useful properties of C maps}\label{app:property-cmap}

In this section we will assume that $q_1 = q_2 = 1$.

Observe that $\cmap(1) = \expect_{z\sim \normal(0, 1)}\left[\phi\left(z \right)^2 \right] = 1$ and that $\cmap$ maps $[-1, 1]$ to $[-1, 1]$ (which follows from its interpretation as computing cosine similarities for infinitely wide networks). Moreover, $\cmap$ is a positive definite function, which means that it can be written as $\sum_{n=0}^\infty b_n c^n$ for $b_n \geq 0$ \citep{daniely2016toward, martens2021dks}. Note that for smooth activation functions, positive definiteness can be easily verified by Taylor-expanding $\cmap(c)$ about $c=0$ and using
\begin{equation}
    \cmap^{(i)}(0) = \expect_{z \sim \normal(0, 1)} \left[\phi^{(i)}\left(z\right) \right]^2 \geq 0 .
\end{equation}

As discussed in Section \ref{sec:extending-maps}, global C maps are computed by recursively taking compositions and weighted averages (with non-negative weights), starting from $\cmap$. Because all of the above properties are preserved under these operations, it follows that global C maps inherit them from $\cmap$.

\section{Additional details and pseudocode for activation function transformations} \label{app:additional-details-act-transform}

\subsection{Taking all subnetworks into account}

In the main text of this paper we have used the condition $\cmap'_f(1) = \zeta$ in DKS, $\cmap_f(0) = \eta$ in TAT for Leaky ReLUs, and $\cmap''_f(1) = \tau$ in TAT for smooth activation functions. However, the condition used by \citet{martens2021dks} in DKS was actually $\mu_f^1(\cmap'(1)) = \zeta$, where $\mu_f^1$ is the so-called ``maximal slope function":
\begin{align*}
    \mu_f^1(\cmap'(1)) = \max_{g : g \subseteq f} \cmap'_g (1) ,
\end{align*}
where ``$g \subseteq f$" denotes that $g$ is a subnetwork\footnote{A \emph{subnetwork} of $f$ is defined as a (non-strict) connected subset of the layers in $f$ that constitute a neural network with a singular input and output layer. So for example, layers 3, 4 and 5 of a 10 layer MLP form a subnetwork, while layers 3, 4, and 6 do not.} of $f$. (That $\cmap'_g (1)$ is fully determined by $\cmap'(1)$ follows from the fact that $\cmap_g$ can be written in terms of compositions, weighted average operations, and applications of $\cmap$, and that C maps always preserve the value 1. Using the chain rule, and the linearity of derivatives, these facts allow one to write $\cmap'_g (1)$ as a polynomial function of $\cmap'(1)$.)

The motivation given by \citet{martens2021dks} for looking at $\cmap'_g (1)$ over all subnetworks $g \subseteq f$ (instead of just $\cmap'_f (1)$) is that we want \emph{all} layers of $f$, in all of its subnetworks, to be readily trainable. For example, a very deep and untrainable MLP could be made to have a reasonable global C map simply by adding a skip connection from its input to its output, but this won't do anything to address the untrainability of the layers being ``skipped around" (which form a subnetwork).

In the main text we ignored this complication in the interest of a shorter presentation, and because we happened to have $\mu_f^1(\cmap'(1)) = C'_f(1)$ for the simple network architectures focused on in this work. To remedy this, in the current section we will discuss how to modify the conditions $\cmap_f(0) = \eta$ and $\cmap''_f(1) = \tau$ used in TAT so that they take into account all subnetworks. This will be done using a natural generalization of the maximal slope function from DKS. We will then address the computational challenges that result from doing this.

To begin, we will replace the condition $\cmap_f(0) = \eta$ (used in TAT for Leaky ReLUs) by the condition $\mu_f^0(0) = \eta$, where we define the \emph{maximal c value function} $\mu_f^0$ of $f$ by
\begin{align*}
\mu_f^0(\alpha) = \max_{g : g \subseteq f} \cmap_g(0) ,
\end{align*}
where $\alpha$ is the negative slope parameter (which determines $\cmap$ in LReLU networks [via $\phi_\alpha$] and thus each $\cmap_g$). 

We will similarly replace the condition $\cmap''_f(1) = \tau$ (used in TAT for smooth activations) by the condition $\mu_f^2(\cmap''(1)) = \tau$, where we define the \emph{maximal curvature function} $\mu_f^2$ of $f$ by
\begin{align*}
\mu_f^2(\cmap''(1)) = \max_{g : g \subseteq f} \cmap''_g(1) ,
\end{align*}
where each $\cmap''_g(1)$ is determined by $\cmap''(1)$. That each $\cmap''_g (1)$ is a well-defined function of $\cmap''(1)$ follows from the fact that C maps always map the value 1 to 1, the aforementioned relationship between $\cmap_g$ and $\cmap$, and the fact that we have $\cmap' (1) = 1$ under TAT (so that $\cmap_h' (1) = 1$ for all subnetworks $h$). These facts allow us to write $\cmap''_g (1)$ as a constant multiple of $\cmap''(1)$ using the linearity of 2nd derivatives and the 2nd-order chain rule (which is given by $(a \circ b)'' (x) = a''(b(x)) b'(x)^2 + a'(b(x)) b''(x)$). %

\subsection{Computing $\mu_f^0$ and $\mu_f^2$ in general} \label{app:mu-comp}

Given these new conditions for TAT, it remains to compute their left hand sides so that we may ultimately solve for the required quantities ($\alpha$ or $\cmap''(1)$). In Section \ref{sec:extending-maps} we discussed how a (sub)network $f$'s C map $\cmap_f$ can be computed in terms of the local C map $\cmap$ by a series of composition and non-negative weighted sum operations. We can define a generalized version of this construction $U_{f, r}$ which replaces $\cmap$ with an arbitrary non-decreasing function $r$, so that $U_{f, \cmap}(c) = \cmap_f(c)$. A recipe for computing $U_{f, r}$ is given in Appendix \ref{app:U_comp_recipe}.

Given $U_{f, r}$, we define the \emph{subnetwork maximizing function} $M$ by
\begin{align*}
    M_{f, r}(x) = \max_{g : g \subseteq f} U_{g, r} (x) .
\end{align*}
With this definition, it is not hard to see that if $r_0(x) = \cmap(x)$, $r_1(x) = \cmap'(1) x$, and $r_2(x) = \cmap''(1) + x$, then $\mu_f^0(\alpha) = M_{f, r_0}(0)$ (where the dependence on $\alpha$ is implicit through the dependence of $\cmap$ on $\phi_\alpha$), $\mu_f^1(\cmap'(1)) = M_{f, r_1}(1)$, and $\mu_f^2(\cmap''(1)) = M_{f, r_2}(0)$. Thus, it suffices to derive a scheme for computing (and inverting) $M_{f, r}$ for general networks $f$ and non-decreasing functions $r$.

Naively, computing $M_{f, r}$ could involve a very large maximization and be quite computationally expensive. But analogously to the maximal slope function computation described in \citet{martens2021dks}, the computation of $M_{f, r}$ can simplified substantially, so that we rarely have to maximize over more than a few possible subnetworks. In particular, since $U_{g, r}(x)$ is a non-decreasing function of $x$ for all $g$ (which follows from the fact that $r$ is non-decreasing), and $U_{g \circ h, r} = U_{g, r} \circ U_{h, r}$, it thus follows that $U_{g \circ h, r}(x) \geq U_{g, r}(x), U_{h, r}(x)$ for all $x$. This means that for the purposes of the maximization, we can ignore any subnetwork in $f$ which composes with another subnetwork (not necessarily in $f$) to form a strictly larger subnetwork isomorphic to one in $f$. This will typically be the vast majority of them. Note that this does \emph{not} therefore imply that $M_{f, r} = U_{f, r}$, since not all subnetworks compose in this way. For example, a sufficiently deep residual branch of a residual block in a rescaled ResNet won't compose with \emph{any} subnetwork to form a larger one.

\subsection{Solving for $\alpha$ and $\cmap''(1)$}

Having shown how to efficiently compute $M_{f, r}$, and thus both of $\mu_f^0$ and $\mu_f^2$, it remains to show how we can invert them to find solutions for $\alpha$ and $\cmap''(1)$ (respectively). Fortunately, this turns out to be easy, as both functions are strictly monotonic in their arguments ($\alpha$ and $\cmap''(1)$), provided that $f$ contains at least one nonlinear layer. Thus, we may apply a simple 1-dimensional root-finding approach, such as binary search. 

To see that $\mu_f^0(\alpha)$ is a strictly decreasing function of $\alpha$ (or in other words, a strictly \emph{increasing} function of $-\alpha$), we observe that it is a maximum over terms of the form $U_{g, \cmap} (0)$, which are all either strictly decreasing non-negative functions of $\alpha$, or are identically zero. These properties of $U_{g, \cmap} (0)$ follow from the fact that it involves only applications of $\cmap$, along with compositions and non-negative weighted averages, and that $\cmap(c)$ is a strictly decreasing function of $\alpha$ for all $c \in [-1, 1]$ (in Leaky ReLU networks). A similar argument can be used to show that $\mu_f^2(\cmap''(1))$ is a strictly increasing function of $\cmap''(1)$ (and is in fact equal to a non-negative multiple of $\cmap''(1)$).

\subsection{Recipe for computing $U_{f,r}$} \label{app:U_comp_recipe}

As defined, $U_{f,r}$ is computed from $f$ by taking the computational graph for $C_f$ and replacing the local C map $\cmap$ with $r$ wherever the former appears. So in particular, one can obtain a computational graph for $U_{f,r}(x)$ from $f$'s computational graph by recursively applying the following rules:
\begin{enumerate}
    \item Composition $g \circ h$ of two subnetworks $g$ and $h$ maps to $U_{g,r} \circ U_{h,r}$.
    \item Affine layers map to the identity function.
    \item Nonlinear layers map to $r$.
    \item Normalized sums with weights $w_1, w_2, ..., w_n$ over the outputs of subnetworks $g_1, g_2, ..., g_n$, map to the function
    \begin{align*}
      w_1^2 U_{g_1, r}(x_1) + w_2^2 U_{g_2, r}(x_2) + \cdots + w_n^2 U_{g_n, r}(x_n) ,
    \end{align*}
    where $x_1, x_2, ..., x_n$ are the respective inputs to the $U_{g_i, r}$'s.
    \item $f$'s input layer maps to $x$.
\end{enumerate}

In the special case of computing $U_{f,r_2}(0)$, one gets the following simplified list of rules:
\begin{enumerate}
    \item Composition $g \circ h$ of two subnetworks $g$ and $h$ maps to $U_{g,r_2}(0) + U_{h,r_2}(0)$
    \item Affine layers map to $0$.
    \item Nonlinear layers map to $\cmap''(1)$.
    \item Normalized sums with weights $w_1, w_2, ..., w_n$ over the outputs of subnetworks $g_1, g_2, ..., g_n$, map to the function
    \begin{align*}
      w_1^2 U_{g_1, r_2}(0) + w_2^2 U_{g_2, r_2}(0) + \cdots + w_n^2 U_{g_n, r_2}(0) .
    \end{align*}
    \item $f$'s input layer maps to $x$.
\end{enumerate}
Note that this second procedure will always produce a non-negative multiple of $\cmap''(1)$, provided that $f$ contains at least one nonlinear layer.

\subsection{Rescaled ResNet example}

In this subsection we will demonstrate how to apply the above rules to compute the maximal curvature function $\mu_f^2$ for a rescaled ResNet $f$ with shortcut weight $w$ and residual branch $\mathcal{R}$ (as defined in \eqref{eq:modified-resnet}). We note that this computation also handles the case of a vanilla network by simply taking $w = 0$.

First, we observe that all subnetworks in $f$ compose to form larger ones in $f$, except for $f$ itself, and for the residual branches of its residual blocks. We thus have that $\mu_f^2(\cmap''(1)) = \max\{U_{f, r_2}(0), U_{\mathcal{R}, r_2}(0)\}$.

Because each residual branch has a simple feedforward structure with three nonlinear layers, it follows that $U_{\mathcal{R}, r_2}(0) = 3 \cmap''(1)$. And because each shortcut branch $\mathcal{S}$ has no nonlinear layers, it follows that $U_{\mathcal{S}, r_2}(0) = 0$. Applying the rule for weighted averages to the output of each block $\mathcal{B}$ we thus have that $U_{\mathcal{B}, r_2}(0) = w^2 U_{\mathcal{S}, r_2}(0) + (1 - w^2) U_{\mathcal{R}, r_2}(0) = 3 (1 - w^2) \cmap''(1)$. Given a network with $L$ nonlinear layers, we have $L/3$ blocks, and since the blocks compose in a feedforward manner it thus follows that $U_{f, r_2}(0) = (L/3) \cdot 3 (1 - w^2) \cmap''(1) = L (1 - w^2) \cmap''(1)$. We therefore conclude that $\mu_f^2(\cmap''(1)) = \max\{3, L (1 - w^2)\} \cmap''(1)$.

The rescaled ResNets used in our experiments have a slightly more complex structure (based on the ResNet-50 and ResNet-101 architectures), with a nonlinear layer appearing after the sequence of residual blocks, and with a four of their blocks being ``transition blocks", whose shortcut branches contain a nonlinear layer. In these networks, the total number of residual blocks is given by $(L - 2)/3$. Following a similar argument to the one above we have that
\begin{align*}
    U_{f, r_2}(0) &= \left(\frac{L-2}{3} - 4\right) \cdot 3 (1 - w^2) \cmap''(1) + 4 \cdot (w^2 + 3 (1 - w^2)) \cmap''(1) + \cmap''(1) \\
    &= [(L - 2) (1 - w^2) + 4 w^2 + 1] \cmap''(1) = [(L - 6) (1 - w^2) + 5] \cmap''(1) ,
\end{align*}
and thus
\begin{align*}
\mu_f^2(\cmap''(1)) &= \max\{[(L - 6) (1 - w^2) + 5] \cmap''(1), 3 (1 - w^2) \cmap''(1)\} \\ &= [(L - 6) (1 - w^2) + 5] \cmap''(1) .  
\end{align*}

\subsection{Pseudocode}

\begin{algorithm}[h]
\caption{TAT for LReLU.}
\label{alg:TRELU}
\begin{algorithmic}[1]
\REQUIRE The target value $\eta$ for $\mu_f^0(\alpha)$
\STATE Use the steps from Subsection \ref{app:mu-comp} to construct a procedure for computing the maximal c value function $\mu_f^0(\alpha)$ for general $\alpha \geq 0$. Note that the local C map $\cmap$, on which $\mu_f^0(\alpha)$ depends, can be computed efficiently for (transformed) LReLUs using \eqref{eq:lrelu-cmap}.
\STATE Perform a binary search to find the negative slope $\alpha$ such that $\mu_f^0(\alpha) = \eta$.
\STATE Using the found $\alpha$, output the transformed activation function given by $\tilde{\phi}_\alpha(x) = \sqrt{\frac{2}{1 + \alpha^2}} \phi_\alpha(x)$.
\end{algorithmic}
\end{algorithm}

\begin{algorithm}[h]
\caption{TAT for smooth activations.}
\label{alg:TRELU}
\begin{algorithmic}[1]
\REQUIRE The target value $\tau$ of $\cmap_f''(1)$
\REQUIRE The original activation function $\phi(x)$
\STATE Use the steps from Subsection \ref{app:mu-comp} to construct a procedure for computing the maximal curvature function $\mu_f^2(\cmap''(1))$ for general $\cmap''(1) \geq 0$.
\STATE Perform a binary search to find $\cmap''(1)$ such that $\mu_f^2(\cmap''(1)) = \tau$.
\STATE Using a numerical solver, solve the three-dimensional nonlinear system in \eqref{eq:three-system} (but with the value of $\cmap''(1)$ found above instead of $\tau / L$) to obtain values for $\alpha$, $\beta$, $\gamma$, and $\delta$.
\STATE Using the solution from the last step, output the transformed activation function given by $\hat{\phi}(x) = \gamma (\phi(\alpha x + \beta) + \delta)$.
\end{algorithmic}
\end{algorithm}

\section{Technical Results and Proofs}
\label{app:technical-proofs}

\begin{restatable}{lem}{lrelumaps}\label{lem:qcmaps}
For networks using the activation function $\tilde{\phi}_\alpha(x) = \sqrt{\frac{2}{1 + \alpha^2}}\phi_\alpha(x)$, the local Q and C maps are given by
\begin{equation}
\label{eqn:lrelumaps}
   \qmap(q) = q \quad \text{and} \quad
\cmap(c) = c + \frac{(1 - \alpha)^2 }{\pi (1 + \alpha^2)}\left(\sqrt{1 - c^2} - c\cos^{-1}(c)\right).
\end{equation}
\end{restatable}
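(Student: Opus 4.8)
The plan is to reduce both maps to elementary Gaussian moment computations by writing the Leaky ReLU as a combination of a linear function and an absolute value. Since $\max\{x,0\} = \tfrac{1}{2}(x+|x|)$ and $\min\{x,0\} = \tfrac{1}{2}(x-|x|)$, we have $\phi_\alpha(x) = \tfrac{1+\alpha}{2}x + \tfrac{1-\alpha}{2}|x|$, and hence $\tilde{\phi}_\alpha(x) = \sqrt{\tfrac{2}{1+\alpha^2}}\,\big(\tfrac{1+\alpha}{2}x + \tfrac{1-\alpha}{2}|x|\big)$. For the Q map I would compute $\qmap(q) = \expect_{z\sim\normal(0,q)}[\tilde{\phi}_\alpha(z)^2]$ by splitting on the sign of $z$: on $\{z>0\}$ the integrand equals $\tfrac{2}{1+\alpha^2}z^2$ and on $\{z<0\}$ it equals $\tfrac{2\alpha^2}{1+\alpha^2}z^2$, and by symmetry of the centered Gaussian each half contributes $\expect[z^2\mathbf{1}_{z>0}] = \expect[z^2\mathbf{1}_{z<0}] = q/2$, so $\qmap(q) = \tfrac{2}{1+\alpha^2}\cdot\tfrac{q}{2}(1+\alpha^2) = q$. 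In particular $\qmap(1) = 1$, so the denominator in the definition \eqref{eq:cmap} of $\cmap$ equals $1$ and $\cmap(c) = \expect_{(z_1,z_2)\sim\normal(0,\Sigma)}[\tilde{\phi}_\alpha(z_1)\tilde{\phi}_\alpha(z_2)]$ with $\Sigma = \bsmat{1 & c \\ c & 1}$.

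For the C map I would substitute the decomposition of $\tilde{\phi}_\alpha$ and expand the product, so that $\cmap(c)$ becomes $\tfrac{1}{2(1+\alpha^2)}$ times a linear combination of the four expectations $\expect[z_1 z_2]$, $\expect[z_1|z_2|]$, $\expect[|z_1|z_2]$, and $\expect[|z_1||z_2|]$. The first is just $c$. The two cross terms vanish: writing $z_1 = c z_2 + \sqrt{1-c^2}\,w$ with $w \perp z_2$ standard normal, we get $\expect[z_1|z_2|] = c\,\expect[z_2|z_2|] + \sqrt{1-c^2}\,\expect[w]\expect[|z_2|] = 0$ because $z \mapsto z|z|$ is odd and $\expect[w] = 0$; the same argument handles $\expect[|z_1|z_2]$. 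So the only nontrivial ingredient is $\expect[|z_1||z_2|]$ for standard bivariate normals with correlation $c$.

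The genuinely nontrivial step, and the one I expect to require the most care, is evaluating $g(c) := \expect[|z_1||z_2|]$; I would do this by differentiation in the correlation parameter (Price's theorem), which gives $g'(c) = \expect[\sign(z_1)\sign(z_2)] = 1 - 2\Pr[\sign z_1 \neq \sign z_2] = \tfrac{2}{\pi}\arcsin c$, using the classical fact that jointly normal variables with correlation $c$ have opposite signs with probability $\tfrac{1}{\pi}\cos^{-1} c$. Integrating from $c = 0$, where $g(0) = \expect|z_1|\expect|z_2| = 2/\pi$, and using $\int \arcsin t\,dt = t\arcsin t + \sqrt{1-t^2}$, yields $g(c) = \tfrac{2}{\pi}\big(\sqrt{1-c^2} + c\arcsin c\big)$ (equivalently, one may cite the degree-one arc-cosine kernel of \citet{cho2009kernel}). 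Substituting, $\cmap(c) = \tfrac{(1+\alpha)^2 c}{2(1+\alpha^2)} + \tfrac{(1-\alpha)^2}{\pi(1+\alpha^2)}\big(\sqrt{1-c^2} + c\arcsin c\big)$; finally, rewriting $\arcsin c = \tfrac{\pi}{2} - \cos^{-1} c$ turns $c\arcsin c$ into $\tfrac{\pi}{2}c - c\cos^{-1}c$, and the two $c$-linear terms collapse via $(1+\alpha)^2 + (1-\alpha)^2 = 2(1+\alpha^2)$ into a single $c$, leaving exactly $\cmap(c) = c + \tfrac{(1-\alpha)^2}{\pi(1+\alpha^2)}\big(\sqrt{1-c^2} - c\cos^{-1} c\big)$. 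Apart from the $\expect[|z_1||z_2|]$ integral and the bookkeeping needed to make the constants cancel cleanly at the end, every step is routine.
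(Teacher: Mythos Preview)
Your proof is correct and follows a route closely parallel to the paper's, but with a different decomposition of the Leaky ReLU. The paper writes $\phi_\alpha(x) = \alpha x + (1-\alpha)\phi_0(x)$ (identity plus ReLU), expands the bilinear form accordingly, cites the ReLU arc-cosine kernel $\cmap_{\phi_0}(c) = \tfrac{1}{\pi}\big(\sqrt{1-c^2} + (\pi-\cos^{-1}c)c\big)$ from \citet{daniely2016toward}, and computes the nonvanishing cross term $\expect[z_2\,\phi_0(z_1)] = c/2$. You instead write $\phi_\alpha(x) = \tfrac{1+\alpha}{2}x + \tfrac{1-\alpha}{2}|x|$ (identity plus absolute value), which has the pleasant effect that the cross terms $\expect[z_1|z_2|]$ vanish by parity, leaving only $\expect[|z_1||z_2|]$ to evaluate; you then derive this integral from scratch via Price's theorem and the orthant probability, rather than citing it. The two decompositions are of course equivalent (since $\phi_0(x) = \tfrac{1}{2}(x+|x|)$), and the final algebra collapses identically. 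Your approach is a bit more self-contained and has slightly cleaner bookkeeping thanks to the vanishing cross terms; the paper's approach is shorter because it outsources the hard integral to a citation.
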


\begin{proof}
In this proof we will use the notation $\qmap_\phi$ and $\cmap_\phi$ to denote the local Q and C maps for networks that use a given activation function $\phi$.

First, we note that LReLU is basically the weighted sum of identity and ReLU. In particular, we have the following equation:
\begin{equation*}
    \phi_\alpha(x) = \alpha x + (1 - \alpha) \phi_0(x) = \max\{x, 0 \} + \alpha \min\{x, 0 \}.
\end{equation*}
Second, we have that $\qmap_{\phi_\alpha}(q) = \expect_{z \sim \normal(0, 1)}\left[q z^2 \mathbbm{I}[z \geq 0]\right] + \alpha^2\expect_{z \sim \normal(0, 1)}\left[q z^2 \mathbbm{I}[z < 0]\right] = \frac{1 + \alpha^2}{2} q$ (from which $\qmap_{\tilde{\phi}_\alpha}(q) = q$ immediately follows). 

It then follows from \eqref{eq:cmap}, and the fact that local C maps are invariant to multiplication of the activation function by a constant, that
\begin{equation}\label{eq:cmap-lrelu-comp}
\begin{aligned}
    \cmap_{\tilde{\phi}_\alpha}(c) = \cmap_{\phi_\alpha}(c) &= \frac{2}{1 + \alpha^2}\expect_{z_1, z_2 \sim \normal(0, 1)} \left[\phi_\alpha\left(z_1\right) \phi_\alpha\left( c z_1 + \sqrt{1 - c^2}z_2 \right) \right] \\
    &= \frac{2}{1 + \alpha^2} \left[\alpha^2 c + (1 - \alpha)^2 \cmap_{\phi_0}(c) Q_{\phi_0}(1)\right] \\
    &  \quad + \frac{2}{1 + \alpha^2}\left[2\alpha(1-\alpha)\expect_{z_1, z_2 \sim \normal(0, 1)} \left[(c z_1 + \sqrt{1 - c^2}z_2)\phi_0\left(z_1\right) \right] \right]
\end{aligned}
\end{equation}

From \citet{daniely2016toward} we have that
\begin{equation}\label{eq:cmap-relu-comp}
    \cmap_{\phi_0}(c) = \frac{\sqrt{1 - c^2} + (\pi - \cos^{-1}(c))c}{\pi} ,
\end{equation}
and for the last part of \eqref{eq:cmap-lrelu-comp} we have
\begin{equation}\label{eq:cmap-extra-term}
    \expect_{z_1, z_2 \sim \normal(0, 1)} \left[(c z_1 + \sqrt{1 - c^2}z_2)\phi_0\left(z_1\right) \right] = \expect_{z_1 \sim \normal(0, 1)}\left[c z_1^2 \mathbbm{1}_{z_1 > 0} \right] = \frac{c}{2} .
\end{equation}
Plugging \eqref{eq:cmap-relu-comp} and \eqref{eq:cmap-extra-term} back into \eqref{eq:cmap-lrelu-comp}, we get
\begin{equation}
\begin{aligned}
    \cmap_{\tilde{\phi}_\alpha}(c) &= \frac{2}{1 + \alpha^2}\left[\alpha^2 c + \frac{(1 - \alpha)^2}{2}\frac{\sqrt{1 - c^2} + (\pi - \cos^{-1}(c))c}{\pi} + \alpha (1 - \alpha) c \right] \\
    & = \frac{(1 - \alpha)^2 \left(\sqrt{1 - c^2} + c (\pi - \cos^{-1}(c))\right) + 2\pi\alpha c }{(1 + \alpha^2) \pi} .
\end{aligned}
\end{equation}
Rearranging this gives the claimed formula.
\end{proof}

\lrelu*
\begin{proof}

By \eqref{eq:cmap-weighted-sum}, the C map for a residual block $\mathcal{B}$ of the hypothesized rescaled ResNet is given by
\begin{equation}
    \cmap_\mathcal{B}(c) = w^2 c + (1 - w^2) \cmap_{\phi_0}(c).
\end{equation}
The global C map of this network is given by $L$ compositions of this function, while the global C map of the hypothesized feedforward network is given by $L$ compositions of $\cmap_{\tilde{\phi}_\alpha}(c)$. So to prove the claim it suffices to show that $\cmap_\mathcal{B}(c) = \cmap_{\tilde{\phi}_\alpha}(c)$.

Taking $w = \sqrt{\frac{2\alpha}{1+\alpha^2}}$, one obtains the following
\begin{equation}
    \cmap_\mathcal{B}(c) = \frac{2\alpha}{1 + \alpha^2} + \frac{(1 - \alpha^2)}{1 + \alpha^2} \frac{\sqrt{1 - c^2} + c(\pi - \cos^{-1}(c))}{\pi},
\end{equation}
which is exactly the same as $\cmap_{\tilde{\phi}_\alpha}(c)$ as given in Lemma \ref{lem:qcmaps}. This concludes the proof.
\end{proof}

\begin{restatable}{prop}{ode}\label{prop:ode}
Suppose $f$ is vanilla network consisting of $L$ combined layers with the \texttt{TReLU} activation function (so that $\cmap_f(0) = \eta \in (0, 1)$). Then $\cmap_f$ converges to a limiting map on $(-1, 1)$ as $L$ goes to infinity. In particular,
\begin{equation}
    \lim_{L \rightarrow \infty} \cmap_f(c) = \psi(c, T) , %
\end{equation}
where $T$ is such that $\psi(0, T) = \eta$, and where $\psi$ is the solution of the following ordinary differential equation (ODE) with the first argument being the initial condition (i.e.~$\psi(c, 0) = c$), and the second argument being time:
\begin{equation}\label{eq:ode}
    \frac{d x(t)}{d t} = \sqrt{1 - x(t)^2} - x(t) \cos^{-1}(x(t)) .
\end{equation}
\end{restatable}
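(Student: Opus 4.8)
Write $g(c)=\sqrt{1-c^2}-c\cos^{-1}(c)$ and $\beta_\alpha=\frac{(1-\alpha)^2}{\pi(1+\alpha^2)}$, so that by \eqref{eq:lrelu-cmap} the local C map of \texttt{TReLU} with negative slope $\alpha$ is $\cmap(c)=c+\beta_\alpha\,g(c)$. The crucial observation is that this is \emph{exactly} one step of the forward Euler method, with step size $\beta_\alpha$, applied to the ODE \eqref{eq:ode} (whose right-hand side is precisely $g$). Since $f$ has $L$ combined layers, $\cmap_f=\cmap^{\circ L}$ is $L$ such Euler steps, so it should approximate the time-$S_L$ flow of \eqref{eq:ode}, where $S_L:=L\,\beta_{\alpha_L}$ and $\alpha_L\in(0,1)$ is the negative slope picked by the \texttt{TReLU} condition $\cmap^{\circ L}(0)=\eta$. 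The plan is: (i) record the elementary properties of $g$ and $\beta_\alpha$ and check that $\alpha_L$ is well defined for large $L$; (ii) show $\beta_{\alpha_L}\to 0$ while $S_L$ stays in a fixed compact subinterval of $(0,\infty)$; (iii) conclude by a standard Euler-convergence estimate plus continuity of the flow.

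For (i): differentiating gives $g'(c)=-\cos^{-1}(c)$, so $g$ is strictly decreasing on $[-1,1]$ with $g(1)=0$; hence $g>0$ on $[-1,1)$, $g(0)=1=\max_{[0,1]}g$, and $g$ is $\pi$-Lipschitz on $[-1,1]$. Also $\beta_\alpha\in[0,1/\pi]$ for $\alpha\ge0$, strictly decreasing on $[0,1]$ with $\beta_1=0$, so $\cmap(c)=c+\beta_\alpha g(c)$ is nondecreasing on $[-1,1]$, fixes $1$, and satisfies $\cmap(c)<1$ for $c<1$; thus for any $c\in(-1,1)$ the orbit $c_n:=\cmap^{\circ n}(c)$ is nondecreasing and stays in $[c,1)$. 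Since $c\mapsto c+\beta g(c)$ is nondecreasing in both $c$ and $\beta$ on these ranges, $\cmap^{\circ L}(0)$ is a nondecreasing function of $\beta_\alpha$, hence a nonincreasing continuous function of $\alpha$ on $[0,1]$, equal to $0$ at $\alpha=1$ and tending to $1$ as $L\to\infty$ at $\alpha=0$ (the orbit of $0$ under $c\mapsto c+\tfrac1\pi g(c)$ increases to the unique fixed point $1$). By the intermediate value theorem there is, for all sufficiently large $L$, a unique $\alpha_L\in(0,1)$ with $\cmap^{\circ L}(0)=\eta$.

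For (ii) and (iii): if $\beta_{\alpha_L}\not\to0$, pass to a subsequence with $\beta_{\alpha_L}\ge\delta>0$; by monotonicity in the step size, $\cmap^{\circ L}(0)$ then dominates the $L$-fold iterate of $c\mapsto c+\delta g(c)$ from $0$, which increases to $1$, contradicting $\cmap^{\circ L}(0)=\eta<1$; so $\beta_{\alpha_L}\to0$. Next, $c_{n+1}-c_n=\beta_{\alpha_L}g(c_n)\le\beta_{\alpha_L}$ (as $g\le1$ on $[0,1]$) gives $\eta=c_L\le S_L$, while $c_n\le\eta$ for $n\le L$ gives $g(c_n)\ge g(\eta)>0$, hence $\eta=c_L\ge S_L\,g(\eta)$; thus $S_L\in[\eta,\ \eta/g(\eta)]$, a fixed compact interval in $(0,\infty)$. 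Let $\psi(\cdot,t)$ be the flow of \eqref{eq:ode}; along any trajectory in $[c,1)$ one has $|\ddot\psi|=|g'(\psi)g(\psi)|\le\pi^2$ and $g$ is $\pi$-Lipschitz, so the classical forward-Euler error bound yields a constant $C$ depending only on $\sup_L S_L$ with $|\cmap^{\circ L}(c)-\psi(c,S_L)|\le C\,\beta_{\alpha_L}$ for every $c\in(-1,1)$, and by (ii) the right side $\to0$. At $c=0$ this gives $\psi(0,S_L)\to\eta$; since $t\mapsto\psi(0,t)$ is continuous and strictly increasing (as $g(\psi)>0$ while $\psi<1$), mapping $[0,\infty)$ onto $[0,1)$ (it never reaches the equilibrium $1$ in finite time), it is a homeomorphism onto $[0,1)$, and as $S_L$ lives in a compact set, $\psi(0,S_L)\to\eta$ forces $S_L\to T$, the unique time with $\psi(0,T)=\eta$ (existence and uniqueness of $T$ using exactly $\eta\in(0,1)$). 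Continuity of $t\mapsto\psi(c,t)$ then gives $\cmap_f(c)=\cmap^{\circ L}(c)\to\psi(c,T)$ for all $c\in(-1,1)$, as claimed.

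The main obstacle is step (ii): because the Euler step size $\beta_{\alpha_L}$ is not prescribed but implicitly coupled to $L$ through the normalization $\cmap^{\circ L}(0)=\eta$, one cannot cite a discretization theorem off the shelf — the real content is the pair of comparison estimates pinning $S_L=L\beta_{\alpha_L}$ into a fixed compact interval and forcing $\beta_{\alpha_L}\to0$. A secondary point to check is that the mild loss of regularity of $g$ at $\pm1$ (where $g''$ blows up) does not spoil the $O(\beta_{\alpha_L})$ rate, which it does not, since all relevant trajectories live in $[c,1)\subset(-1,1]$ where $g\in C^1$ and $\ddot\psi$ is bounded.
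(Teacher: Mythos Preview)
Your argument is correct and follows the same conceptual line as the paper's proof: recognize the local C map $\cmap(c)=c+\beta_\alpha g(c)$ as a single forward-Euler step for the ODE $\dot x=g(x)$, argue that the step size $\beta_{\alpha_L}$ must vanish under the constraint $\cmap^{\circ L}(0)=\eta$, and conclude convergence to the flow. Your version is in fact considerably more complete than the paper's, which simply asserts that ``the above difference equation converges to the ODE'' without tracking the total time $S_L=L\beta_{\alpha_L}$ or invoking any quantitative Euler estimate; your two-sided bound $S_L\in[\eta,\eta/g(\eta)]$ and the subsequent identification $S_L\to T$ are exactly the missing ingredients. The one place the arguments genuinely differ is the existence of $T$: the paper obtains $\psi(0,\infty)\ge 1$ by comparing with the explicit sub-ODE $\dot x=\tfrac{2\sqrt{2}}{3}(1-x)^{3/2}$, whereas you use the simpler observation that a bounded monotone trajectory must converge to a zero of $g$, hence to $1$.
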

\begin{proof}
First, we notice that the local C map for \texttt{TReLU} networks can be written as a difference equation:
\begin{equation}
    \cmap(c) = c + \frac{(1 - \alpha)^2 }{\pi (1 + \alpha^2)}\left(\sqrt{1 - c^2} - c\cos^{-1}(c)\right).
\end{equation}
Importantly, $\cmap$ is a monotonically increasing function of $c$, whose derivative goes to zero only as $\alpha \in [0, 1]$ goes to $1$. Thus, to achieve $\cmap_f(0) = \eta$ in the limit of large $L$, we require that $\tfrac{(1 - \alpha)^2 }{\pi (1 + \alpha^2)}$ goes to 0. This implies that the above difference equation converges to the ODE in \eqref{eq:ode}.

Because the function $\sqrt{1 - x^2} - x \cos^{-1}(x)$ is continuously differentiable in $[-1, 1]$, and its derivative $-\cos^{-1}(x)$ is bounded, one can immediately show that it is globally Lipschitz, and the ODE has a unique solution $\psi(c_0, t)$ according to Theorem 3.2 of \citet{Khalil2008NonlinearST}. 

Now, we are only left to find the time $T$ such that $\cmap_f^\infty (0) = \psi(0, T) = \eta$. To that end, we notice that 
\begin{equation}
    g(x) = \sqrt{1 - x^2} - x \cos^{-1}(x) > 0, \; \text{for} \; x \in (-1, 1)
\end{equation}
because $g(1) = 0$ and $g^\prime(x) = -\cos^{-1}(x) < 0$ on $(-1, 1)$. This implies that the $\psi(0, t)$ is a monotonically increasing continuous function of $t$. Since $\psi(0, 0) = 0$, to establish the existence of $T$ it suffices to show that $\psi(0, \infty) \geq 1$.

To this end we first observe that
\begin{equation}
    g(x) \geq \tfrac{2\sqrt{2}}{3}(1 - x)^{3/2} ,
\end{equation}
which follows by defining $h(x) = g(x) - \frac{2\sqrt{2}}{3}(1 - x)^{3/2}$ and observing that $h(1) = 0$ and $h^\prime(x) = -\cos^{-1}(x) + \sqrt{2}(1-x)^{1/2} < 0$ on $(-1, 1)$. Given this, it is sufficient to show that the solution $\tilde{\psi}$ for the ODE $\dot{x} = \frac{2\sqrt{2}}{3}(1 - x)^{3/2}$ satisfies $\tilde{\psi}(0, \infty) = 1$. The solution $\tilde{\psi}$ turns out to have a closed-form of $\tilde{\psi}(0, t) = 1 - (\frac{3}{\sqrt{2}t + 3})^2$, and thus $\psi(0, \infty) \geq \tilde{\psi}(0, \infty) = 1$. This completes the proof.
\end{proof}

\cmaprelu*
\begin{proof}
Because $\cmap_f$ is a positive definite function (by Section \ref{app:property-cmap}) we have that it can be written as $\cmap_f(c) = \sum_n^\infty b_n c^n$ for $b_n \geq 0$. Given $\cmap_f(1) = \cmap_f^\prime(1) = 1$, we have
\begin{equation}
    \sum_{n=0}^\infty b_n = \sum_{n=1}^\infty n b_n = 1 \quad \implies \quad b_0 = \sum_{n=2}^\infty (n-1)b_n \quad \implies \quad 2b_0 + b_1 \geq 1.
\end{equation}
Hence, $1 - \cmap^\prime_f(0) = 1 - b_1 \leq 2 b_0 = 2 \cmap_f(0)$. 
Now we are ready to bound the deviation of $\cmap_f(c)$ from identity:
\begin{equation}\label{eq:bound-deviation}
\begin{aligned}
    \max_{c \in [-1, 1]} \left|\cmap_f(c) - c \right| &= \max_{c \in [-1, 1]} \left|b_0 + \sum_{n=2}^\infty b_n c^n - (1 - b_1)c \right| \\
    & \leq \max_{c \in [-1, 1]} \left[b_0 + \sum_{n=2}^\infty b_n |c|^n + (1 - b_1) |c| \right] \\
    &= b_0 + \sum_{n=2}^\infty b_n + 1 - b_1 = 2(1 - b_1) \\
    &= 2(1 - \cmap^\prime_f(0)) \leq 4 \cmap_f(0).
\end{aligned}
\end{equation}

Using \eqref{eqn:lrelumaps} we have that
\begin{equation*}
   \cmap^\prime(c) = 1 - \frac{(1 - \alpha)^2}{(1 + \alpha^2) \pi} \cos^{-1}(c) .
\end{equation*}
From our assumption that $\alpha \geq 0$ it follows that $0 \leq \cmap^\prime(c) \leq 1$ for all $c \in [-1, 1]$. Since the property of having a derivative bounded between 0 and 1 is closed under functional composition and positive weighted averages, it thus follows that $0 \leq \cmap_f(c) \leq 1$ for all $c \in [-1, 1]$. An immediate consequence of this is that $\cmap_f(c)$ is non-decreasing, and that
\begin{equation}
     \max_{c \in [-1, 1]} \left|\cmap_f(c) - c \right| = \cmap_f(-1) + 1 \leq \cmap_f(0) + 1.
\end{equation}
Next, we bound the deviation of $\cmap_f^\prime(c)$ from $1$:
\begin{equation}\label{eq:bound-deviation-2}
\begin{aligned}
    \max_{c \in [-1, 1]} \left|\cmap_f^\prime(c) - 1 \right| &= \max_{c \in [-1, 1]} \left| \sum_{n=2}^\infty n b_n c^{n-1} - (1 - b_1) \right| \\
    & \leq \max_{c \in [-1, 1]} \left[\sum_{n=2}^\infty n b_n |c|^{n-1} + (1 - b_1) \right] \\
    &= \sum_{n=2}^\infty n b_n + 1 - b_1 = 2(1 - b_1) \\
    &= 2(1 - \cmap^\prime_f(0)) \leq 4 \cmap_f(0).
\end{aligned}
\end{equation}

From the previous fact that $0 \leq \cmap^\prime_f(c) \leq 1$ for all $c \in [-1, 1]$ we also have that $\max_{c \in [-1, 1]} \left|\cmap_f^\prime(c) - 1 \right| \leq 1$. This completes the proof.
\end{proof}

\cmapsmooth*
\vspace{-0.2cm}
\begin{proof}

$\cmap_f$ is a positive definite function by Section \ref{app:property-cmap}. So by the fact that positive definite functions are non-negative, non-decreasing, and convex on the non-negative part of their domain, we obtain that $\cmap^\prime_f(0) \geq \cmap^\prime_f(1) - C^{\dprime}_f(1) = 1 - C^{\dprime}_f(1)$.
By \eqref{eq:bound-deviation}, we have
\begin{equation}
    \max_{c \in [-1, 1]} \left|\cmap_f(c) - c \right| \leq 2 (1 - \cmap_f^\prime(0)) \leq 2 \cmap_f^{\dprime}(1).
\end{equation}
Further by \eqref{eq:bound-deviation-2}, we also have
\begin{equation}
    \max_{c \in [-1, 1]} \left|\cmap_f^\prime(c) - 1 \right| \leq 2 (1 - \cmap_f^\prime(0)) \leq 2 \cmap_f^{\dprime}(1).
\end{equation}
This completes the proof.
\end{proof}

\begin{restatable}{prop}{simulate}\label{thm:simulate}
Suppose $f$ is some function computed by a neural network with the ReLU activation. Then for any negative slope parameter $\alpha \neq \pm1$, we can compute $f$ using an LReLU neural network of the same structure and double the width of the original network.
\vspace{-0.3cm}
\end{restatable}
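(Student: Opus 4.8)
The plan is to exhibit an explicit ``width-doubling'' rewriting of the ReLU network as an LReLU network, driven by the elementary identity
\begin{equation}
\phi_0(x) = \max\{x, 0\} = \frac{1}{1 - \alpha^2}\left(\phi_\alpha(x) + \alpha\, \phi_\alpha(-x)\right),
\end{equation}
which holds for every $\alpha \neq \pm 1$. To derive it I would write $\phi_\alpha(x) = \max\{x,0\} + \alpha\min\{x,0\}$ and $\phi_\alpha(-x) = -\min\{x,0\} - \alpha\max\{x,0\}$, then match coefficients of $\max\{x,0\}$ and $\min\{x,0\}$ in $a\,\phi_\alpha(x) + b\,\phi_\alpha(-x) = \max\{x,0\}$; the resulting $2\times 2$ linear system has determinant $1-\alpha^2$, which is exactly where the hypothesis $\alpha \neq \pm 1$ is used, and its solution is $a = 1/(1-\alpha^2)$, $b = \alpha/(1-\alpha^2)$.

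Given this identity, I would build the LReLU network layer by layer. For each combined layer $x^{l+1} = \phi_0(\weights_l x^l + \biases_l)$, replace each neuron by two neurons, producing the doubled layer $\tilde{x}^{l+1} = \phi_\alpha(\tilde{\weights}_l x^l + \tilde{\biases}_l)$ with $\tilde{\weights}_l = \bsmat{\weights_l \\ -\weights_l}$ and $\tilde{\biases}_l = \bsmat{\biases_l \\ -\biases_l}$, so that the top half computes $\phi_\alpha(\weights_l x^l + \biases_l)$ and the bottom half $\phi_\alpha(-(\weights_l x^l + \biases_l))$. Then, wherever the next affine map consumed $x^{l+1}$ through a weight matrix $\weights_{l+1}$, I replace that matrix by $\weights_{l+1}\cdot\frac{1}{1-\alpha^2}\bsmat{\mI & \alpha\mI}$ acting on $\tilde{x}^{l+1}$ (leaving $\biases_{l+1}$ unchanged); by the identity above this reproduces $\weights_{l+1} x^{l+1}$ exactly. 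A straightforward induction on the layers, using that the input layer is untouched, then shows that the pre-activation entering every affine layer of the new network equals the corresponding pre-activation of the original network, so the two networks compute the same $f$; a final affine/output layer is handled by the same reweighting, with no activation to double.

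To cover the full architecture family considered in the paper, I would observe that the only ingredients beyond combined layers are affine layers and normalized weighted sums, both linear, and that the ``decode'' matrix $\frac{1}{1-\alpha^2}\bsmat{\mI & \alpha\mI}$ can be commuted through any linear operation. Hence the convention ``carry a pending decode matrix forward until the next nonlinearity (or absorb it into the output layer)'' makes the construction well defined on arbitrary topologies; convolutional layers (under the Delta initialization setting, or in general) are handled identically since everything acts independently per output channel and spatial location. The result is a network with the same connectivity pattern and exactly double the width at every nonlinear layer.

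I expect the main obstacle to be organizational rather than mathematical: pinning down the width-doubling construction and its correctness precisely for networks with branching, weighted sums, and shared subnetworks, without the notation becoming unwieldy. The underlying algebra is trivial, and once the ``pending decode matrix'' bookkeeping convention is fixed, the inductive correctness argument is routine.
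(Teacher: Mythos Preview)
Your proposal is correct and follows essentially the same approach as the paper: both hinge on the identity $\phi_0(x) = \tfrac{1}{1-\alpha^2}\bigl(\phi_\alpha(x) + \alpha\,\phi_\alpha(-x)\bigr)$, doubling each ReLU unit into two LReLU units with negated pre-activations and absorbing the recombination coefficients into the next affine map. If anything, your version is more carefully organized---the paper treats the one-hidden-layer case explicitly and then dispatches the general architecture in a single sentence, whereas your ``pending decode matrix'' bookkeeping makes the inductive step and the handling of sums/branches explicit.
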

\begin{proof}

The basic intuition behind this proof is that a ReLU unit can always be ``simulated" by two LReLU units as long as $\alpha \neq \pm1$, due to the following formula:
\begin{equation*}
\phi_0(x) = \frac{1}{1-\alpha^2}\left(\phi_\alpha(x) + \alpha \phi_\alpha(-x)\right) .
\end{equation*}

We will begin by proving the claim in the case of a network with one hidden layer. In particular, we assume the ReLU network has $m$ hidden units:
\begin{equation}\label{eq:network}
    f(w, b, a, \inputs) =  \sum_{r=1}^m a_r \phi_0(w_r^\top x + b_r) ,
\end{equation}
where $x \in \real^d$ is the input, and $w \in \real^{md}$, $b \in \real^m$ and $a \in \real^m$ are weights, biases of the input layer and weights of output layer, respectively.
For LReLU with negative slope $\alpha$, one can construct the following network
\begin{equation}
    f(w^\prime, b^\prime, a^\prime, \inputs) = \sum_{r=1}^{2m} a^\prime_r \phi_\alpha({w^\prime_r}^\top x + b^\prime_r).
\end{equation}
If we choose $w_r^\prime = w_r = -w_{r+m}^\prime$, $b_r^\prime = b_r = -b_{r+m}^\prime$,  $a_r^\prime = \frac{1}{1 - \alpha^2}a_r$ and $a_{r+m}^\prime = \frac{\alpha}{1-\alpha^2}a_r$, we have 
\begin{multline}
    a^\prime_r \phi_\alpha({w^\prime_r}^\top x + b^\prime_r) + a^\prime_{r+m} \phi_\alpha({w^\prime_{r+m}}^\top x + b^\prime_{r+m}) \\ = 
    \frac{1}{1 - \alpha^2} a_r \phi_\alpha(w_r^\top x + b_r) - \frac{\alpha^2}{1 - \alpha^2} a_r \phi_{\frac{1}{\alpha}}(w_r^\top x + b_r) = a_r \phi(w_r^\top x + b_r) ,
\end{multline}
This immediately suggests that
$f(w^\prime, b^\prime, a^\prime, \inputs) = f(w, b, a, \inputs)$. 

Since deeper networks, and one with more complex topologies, can be constructed by composing and summing shallower ones, the general claim follows.
\end{proof}

\section{Experiment details}\label{app:imp-details}

For input preprocessing on ImageNet we perform a random crop of size $224 \times 224$ to each image, and apply a random horizontal flip.  In all experiments, we applied $L_2$ regularization only to the weights (and not the biases or batch normalization parameters). We selected the $L_2$ constant by grid search from $\{0.00005, 0.00002, 0.0\}$. For networks without batch normalization layers we applied dropout to the penultimate layer, with the dropout rate chosen by grid search from $\{0.2, 0.0\}$. In addition, we used label smoothing~\citep{szegedy2016rethinking} with a value of $0.1$. 

For each optimizer we used a standard learning rate warm-up scheme which linearly increases the learning rate from $0$ to the ``initial learning rate" in the first $5$ epochs, and then decays the learning rate by a factor of $10$ at $4/9$ and $7/9$ of the total epoch budget\footnote{We later found that cosine learning rate annealing~\citep{loshchilov2016sgdr} is slightly better for most settings, but this did not change our conclusions.}, unless specified otherwise. The initial learning rate was chosen by grid search from $\{1.0, 0.3, 0.1, 0.03, 0.01\}$ for SGD, $\{0.003, 0.001, 0.0003, 0.0001, 0.00003\}$ for K-FAC, and $\{10.0, 3.0, 1.0, 0.3, 0.1\}$ for LARS. 
For all optimizers we set the momentum constant to $0.9$. For K-FAC, we used a fixed damping value of $0.001$, and a norm constraint value of $0.001$ (see \citet{ba2017distributed} for a description of this parameter). We also updated the Fisher matrix approximation every iteration, and computed the Fisher inverse every $50$ iterations, %
unless stated otherwise. For LARS, we set the ``trust" coefficient to $0.001$.
For networks with batch normalization layers, we set the decay value for the statistics to $0.9$.

For initialization of the weights we used the scale-corrected uniform orthogonal (SUO) distribution~\citep{martens2021dks} for all methods/models, unless stated otherwise. 
For a $m \times k $ matrix (with $k$ being the input dimension), samples from this distribution can be generated by computing $\left(XX^\top\right)^{-1/2}X$, where $X$ is an $m \times k$ matrix with entries sampled independently from $\normal(0, 1)$. When $m > k$, we may apply the same procedure but with $k$ and $m$ reversed, and then transpose the result. %
The resulting matrix is further multiplied by the scaling factor $\max\{\sqrt{m/k}, 1\}$, which will have an effect only when $k \leq m$. For convolutional networks, we initialize only the weights in the center of each filter to non-zero values, which is a technique known as Delta initialization~\citep{balduzzi2017shattered, xiao2018dynamical}, or Orthogonal Delta initialization when used with orthogonal weights (as we do in this work).

We implemented all methods/models with \texttt{JAX}~\citep{jax2018github} and \texttt{Haiku}~\citep{haiku2020github}. We used the implementation of SGD and LARS from \texttt{Optax}~\citep{optax2020github}. We used the JAX implementation of K-FAC available at \url{https://github.com/deepmind/kfac_jax}.

\section{Additional Experimental Results}
\vspace{-0.2cm}
\subsection{Empirical c values for finite-width networks}\label{app:kernel-error}
\begin{figure}[h!]
\centering
	\begin{subfigure}[b]{0.45\columnwidth}
        \centering
        \includegraphics[width=\columnwidth]{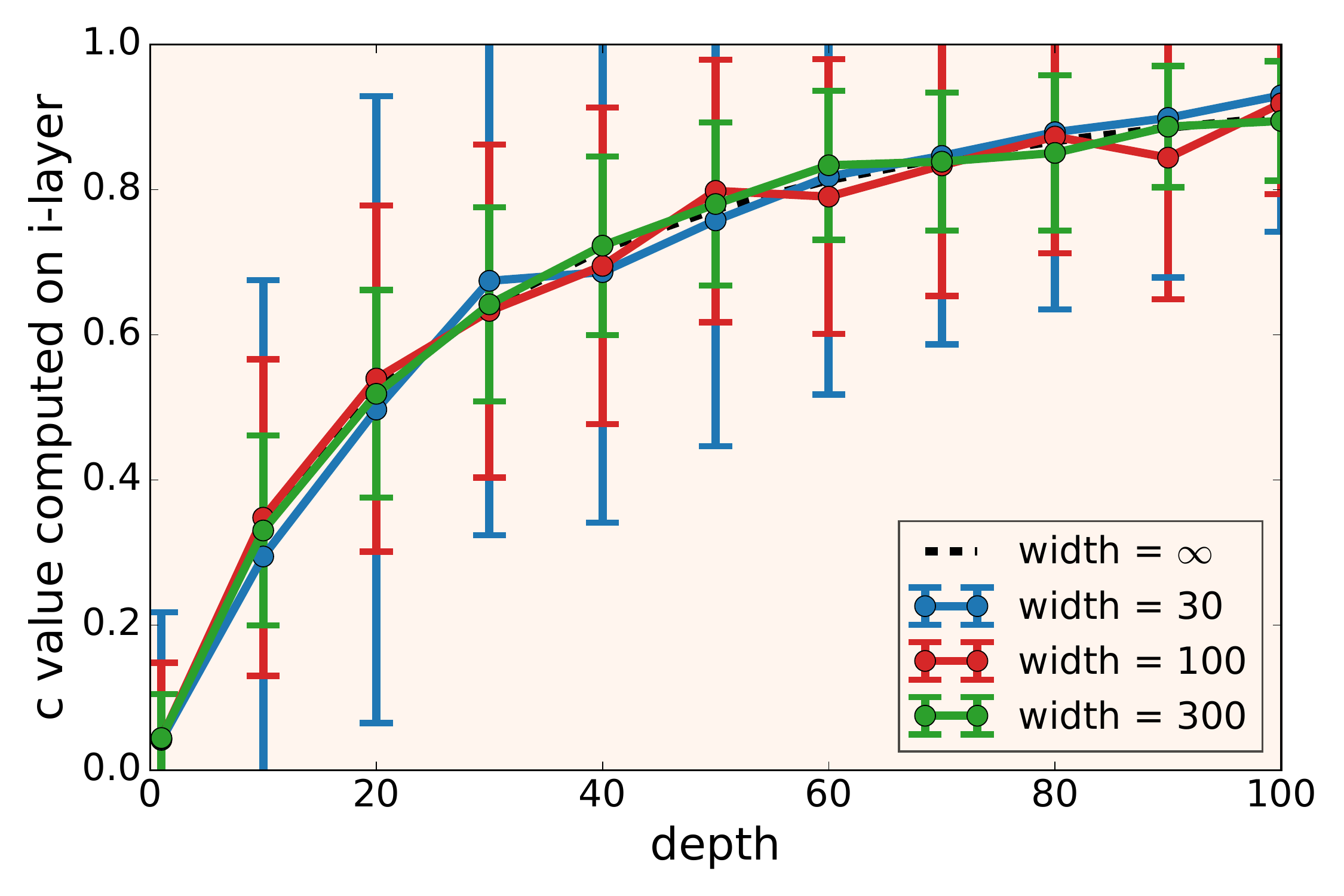}
        \vspace{-0.5cm}
        \caption{TReLU, Gaussian}\label{subfig:tat-gaussian}
	\end{subfigure}
	\begin{subfigure}[b]{0.45\columnwidth}
        \centering
        \includegraphics[width=\columnwidth]{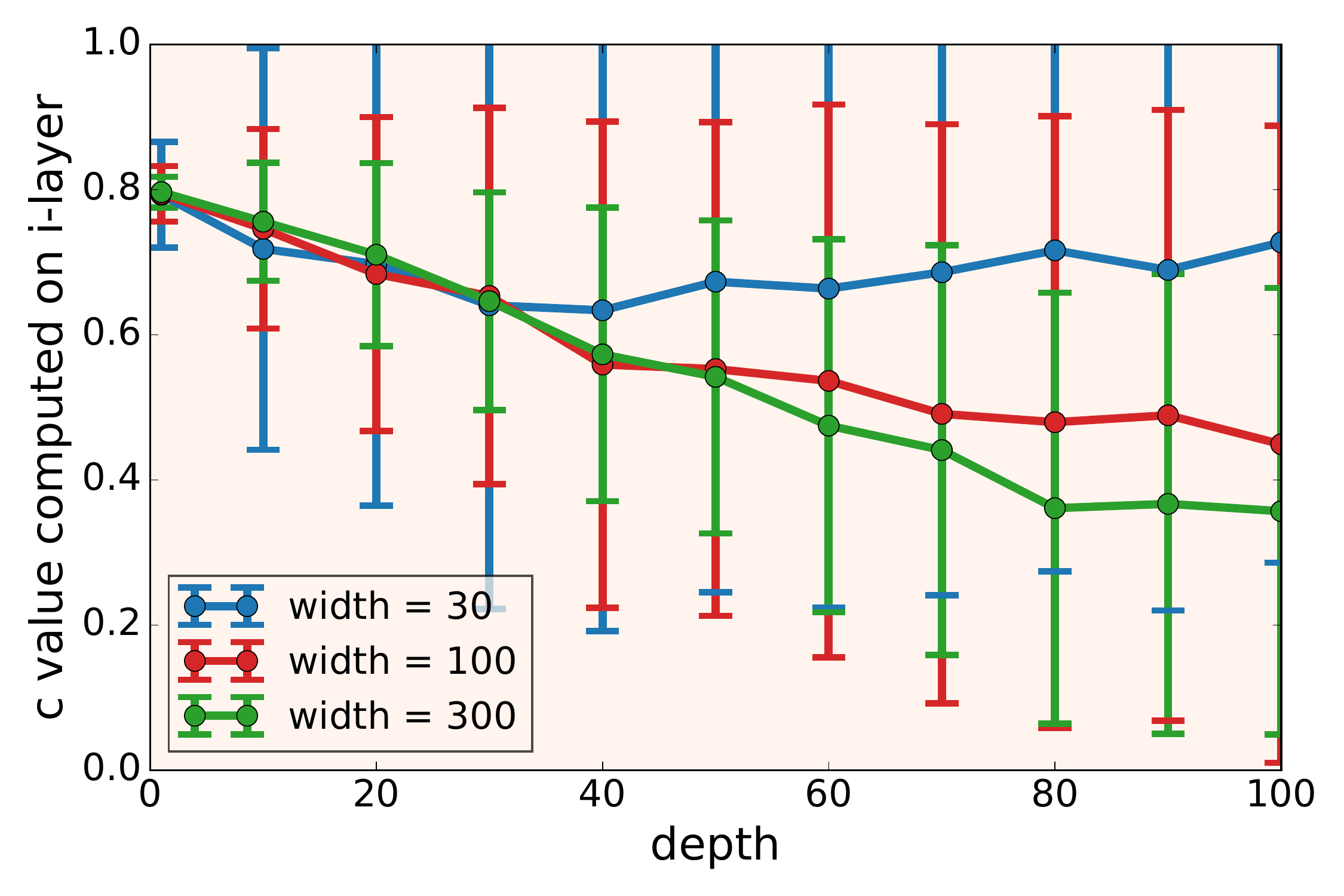}
        \vspace{-0.5cm}
        \caption{DKS + SoftPlus, Gaussian}\label{subfig:dks-gaussian}
	\end{subfigure}
	
	\begin{subfigure}[b]{0.45\columnwidth}
        \centering
        \includegraphics[width=\columnwidth]{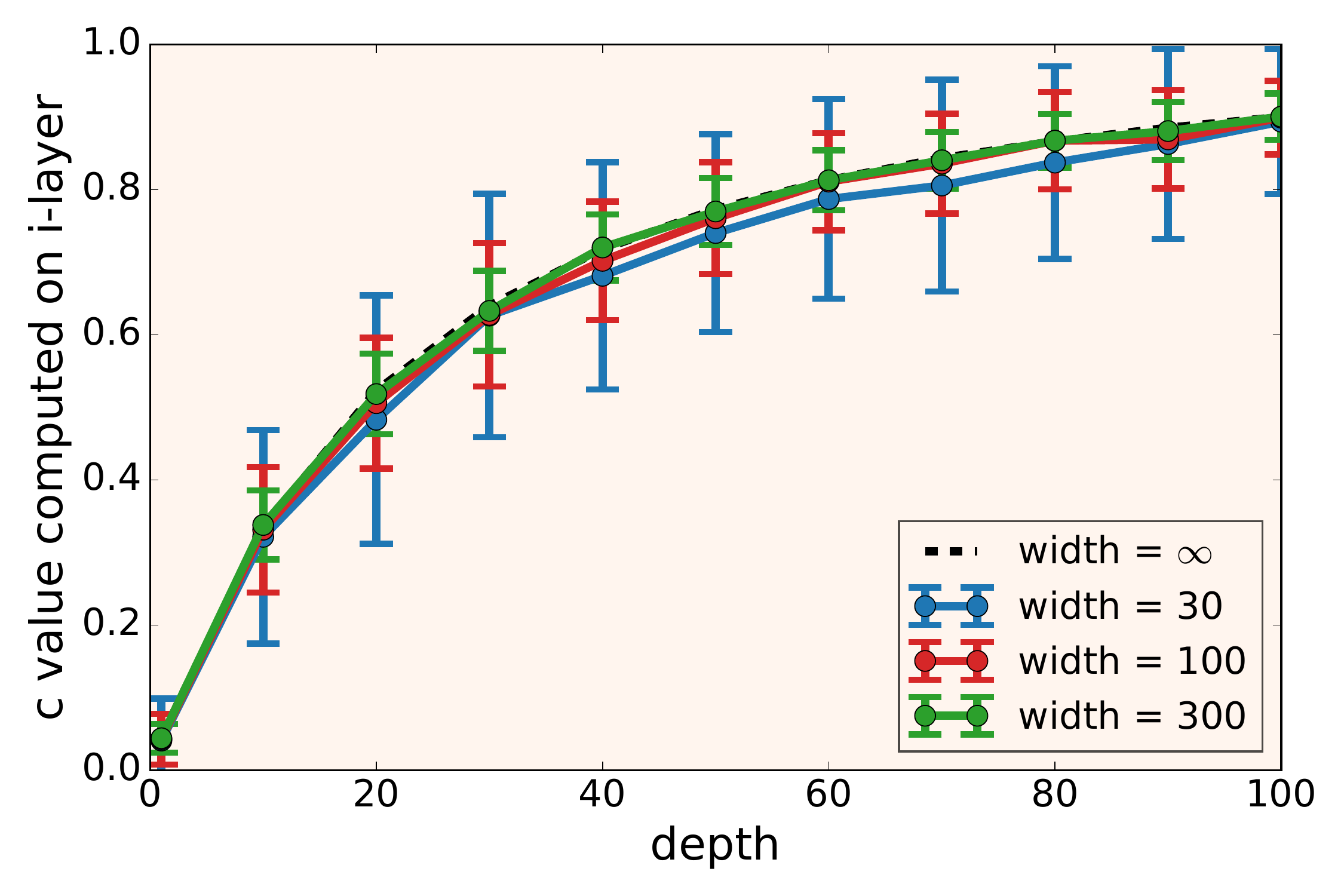}
        \vspace{-0.5cm}
        \caption{TReLU, Orthogonal}\label{subfig:tat-ortho}
	\end{subfigure}
	\begin{subfigure}[b]{0.45\columnwidth}
        \centering
        \includegraphics[width=\columnwidth]{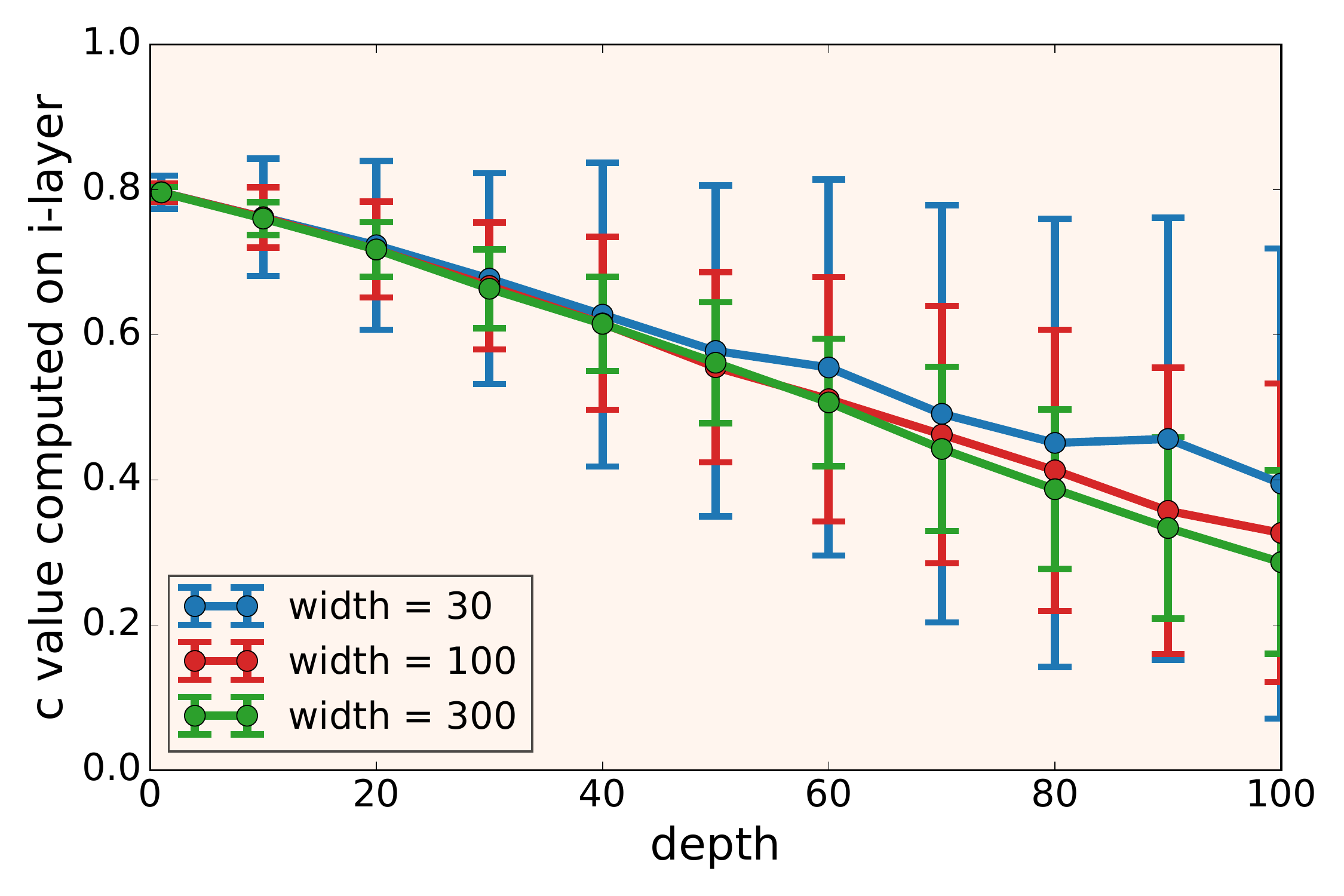}
        \vspace{-0.5cm}
        \caption{DKS + Softplus, Orthogonal}\label{subfig:dks-ortho}
	\end{subfigure}
	\vspace{-0.25cm}
	\caption{Empirical c values for TAT and DKS, which are averaged over $100$ pairs of inputs and $50$ different randomly-inialized networks. We include the results for both Gaussian fan-in and Orthogonal initialization. Vertical lines indicate the standard deviation. \texttt{TReLU} has smaller kernel approximation error and is robust to Gaussian initialization. For \texttt{TReLU}, we also plot the evolution of the c values (black dashed line) as predicted by the C map (which we can compute analytically for \texttt{TReLU}).}
	\label{fig:kernel-error}
	\vspace{-0.2cm}
\end{figure}

The computation of cosine similarities performed by C maps is only an approximation for finite width networks, and it is natural to ask how large the approximation error is. To answer this question, we compare the theoretical predictions with the empirical simulations on fully-connect networks of different depths and widths. In particular, we use a fixed $\eta=0.9$ for \texttt{TReLU} and we compute the $l$-th ``empirical c value" $\hat{c^l} = \frac{{x_1^{l}}^\top x_2^l}{\|x_1^l\| \|x_2^l\|}$ for each layer index $l$, where $x_1^0$ and $x_2^0$ are random vectors chosen so that $\|x_1^0\|^2 = \|x_2^0\| = d_0$ and ${x_1^{0}}^\top x_2^0 = 0$ (so that $\hat{c^0} = 0$). As shown in Figure~\ref{subfig:tat-gaussian} and~\ref{subfig:tat-ortho}, the approximation error is relatively small even for networks with width $30$.

We also included the results for networks using DKS (with $\zeta = 10$) and the SoftPlus activation function. Figure~\ref{subfig:dks-gaussian} and~\ref{subfig:dks-ortho} reports empirical c values as a function of layer index $l$, with $x_1^0$ and $x_2^0$ chosen so that $\hat{c^0} = 0.8$. With Gaussian initialization, the standard deviations are much larger than \texttt{TReLU}, and the average values for widths 30 and 100 deviate significantly from the theoretical predictions. (The DKS conditions implies $\cmap(c) \leq c$ for any $c \in [0, 1]$, which suggests the c value should decrease monotonically.) By comparison, the error seems to be much smaller for orthogonal initialization, which is consistent with the better performance of orthogonal initialization reported by \citet{martens2021dks}. (By contrast, we show in Appendix~\ref{app:init} that Gaussian initialization performs on par with orthogonal initialization for \texttt{TReLU}.) In addition, we note that the standard deviations increase along with the depth for both Gaussian and orthogonal initializations. 

\subsection{Results on CIFAR-10}
\begin{wrapfigure}[12]{r}{0.45\textwidth}
\vspace{-1.2cm}
\centering
    \centering
    \includegraphics[width=0.445\columnwidth]{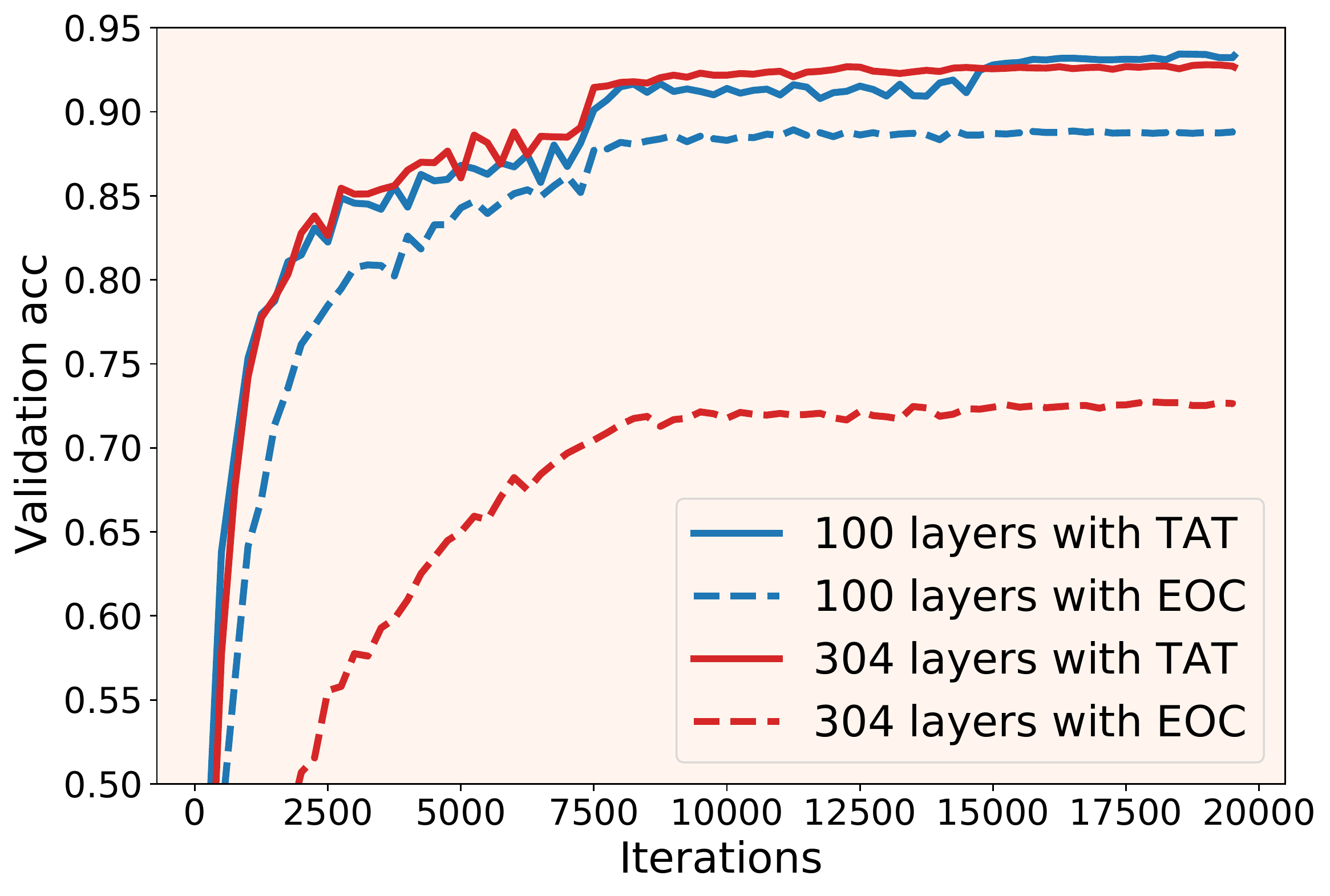}
	\vspace{-0.7cm}
	\caption{CIFAR-10 validation accuracy of ResNets with ReLU activation function initialized using either EOC or TAT (ours).}
	\label{fig:cifar}
\end{wrapfigure}
In addition to our main results on the ImageNet dataset, we also compared TAT to EOC on CIFAR-10~\citep{krizhevsky2009learning} using vanilla networks derived from a Wide ResNet reference architecture \citep{zagoruyko2016wide}. In particular, we start with a Wide ResNet with a widening factor of $2$, and remove all the batch normalization layers and shortcut connections.
We trained these networks with the K-FAC optimizer for $200$ epochs using a standard piecewise constant learning rate schedule. To be specific, we decay the learning rate by a factor of $10$ at $75$ and $150$ epochs. For K-FAC, we set the damping value to $0.01$ and norm constraint value to $0.0001$.
For data preprocessing we include basic data augmentations such as random crop and horizontal flip during training.
As shown in Figure~\ref{fig:cifar}, TAT outperforms EOC significantly. 
As we increase the depth from $100$ to $304$, the accuracy of EOC network drops dramatically while the accuracy of the TAT network remains roughly unchanged.

\subsection{Reducing the overhead of K-FAC} \label{app:reduce-overhead}
\vspace{-0.25cm}
\begin{figure}[h!]
\centering
	\begin{subfigure}[b]{0.45\columnwidth}
        \centering
        \includegraphics[width=\columnwidth]{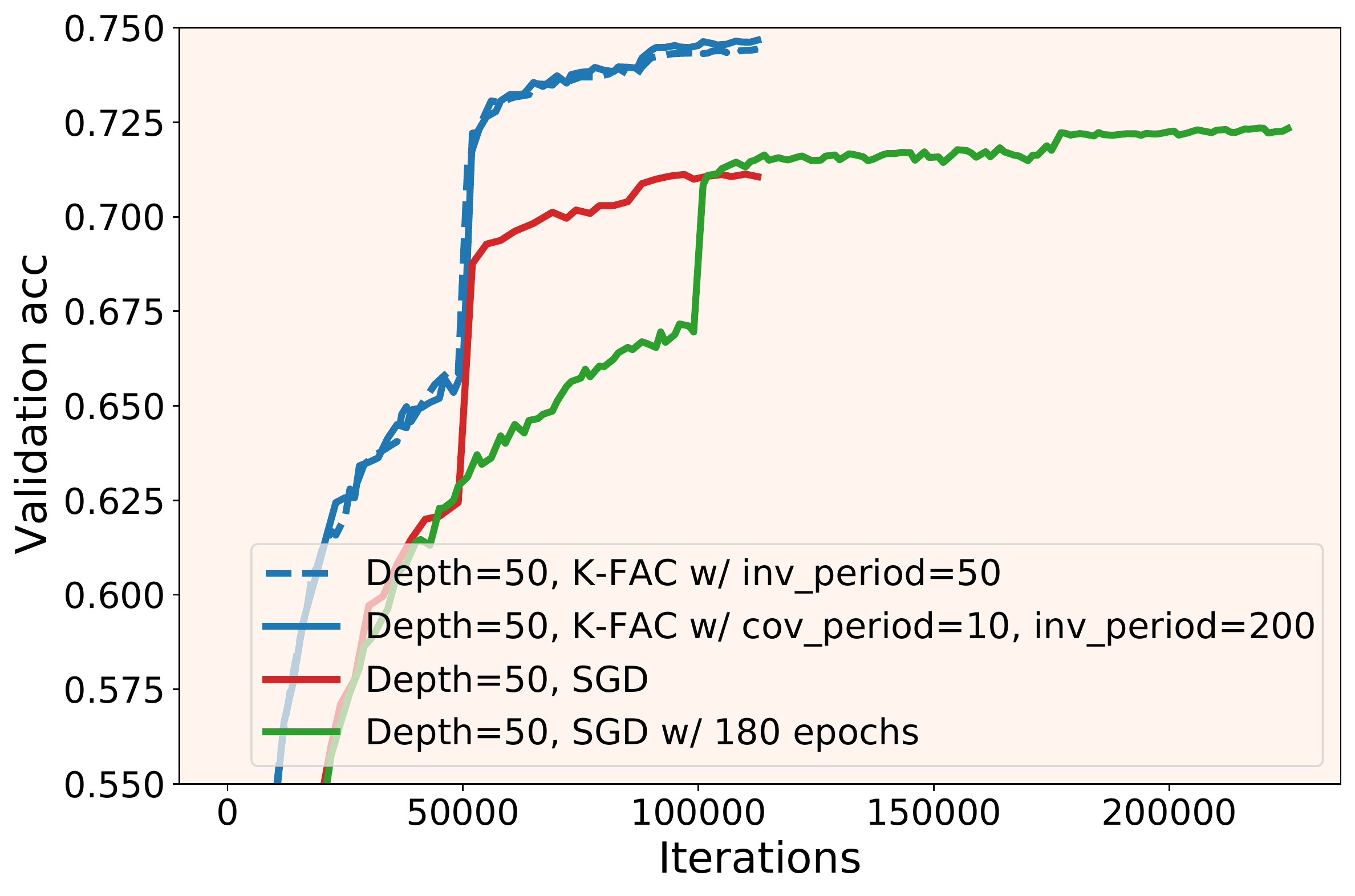}
	\end{subfigure}
	\hspace{0.1cm}
	\begin{subfigure}[b]{0.45\columnwidth}
        \centering
        \includegraphics[width=\columnwidth]{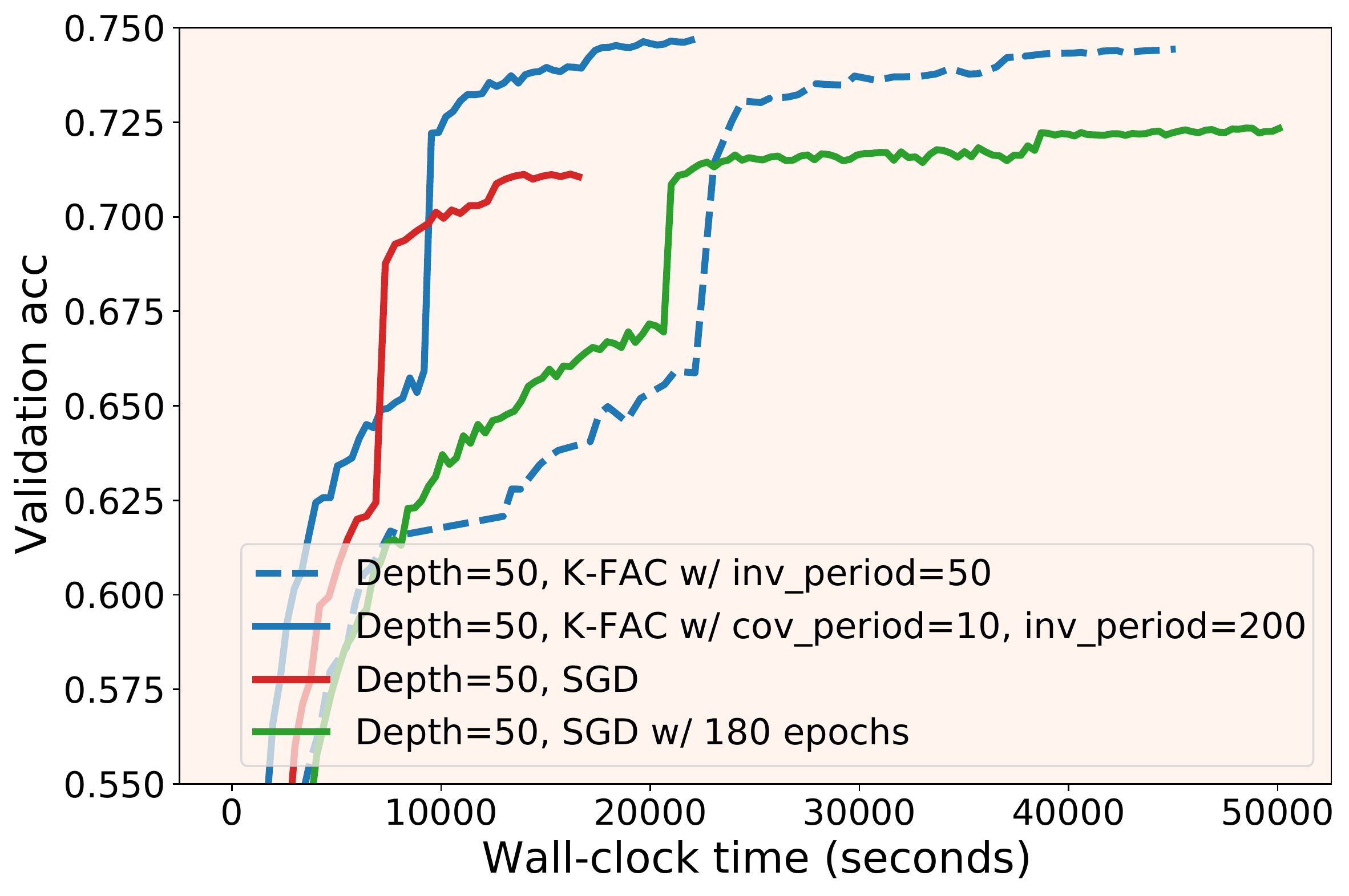}
	\end{subfigure}
	\vspace{-0.2cm}
	\caption{Top-1 validation accuracy on ImageNet as a function of number of iterations (\textbf{left}) or wall-clock time (\textbf{right}) with K-FAC optimizer. One can reduce the computational overhead significantly by updating curvature matrix approximation and its inverse less frequently.}
	\label{fig:K-FAC_speed}
\end{figure}

In our main experiments the per-step wall-clock time of K-FAC was roughly $2.5\times$ that of SGD. 
However, this gap can be decreased significantly by reducing the frequency of the updates of K-FAC's approximate curvature matrix and its inverse. 
For example, if we update the curvature approximation every $10$ steps, and the inverses every $200$ steps, the average per-step wall-clock time of K-FAC reduces by half to a mere $1.25 \times$ that of SGD.
Importantly, as can be seen on  Figure~\ref{fig:K-FAC_speed}, this does not appear to significantly affect optimization performance.

\subsection{Disentangling Training and Generalization} \label{app:training-perf-vs-EOC}
\begin{wrapfigure}[14]{r}{0.43\textwidth}
\vspace{-0.35cm}
\centering
    \centering
    \includegraphics[width=0.425\columnwidth]{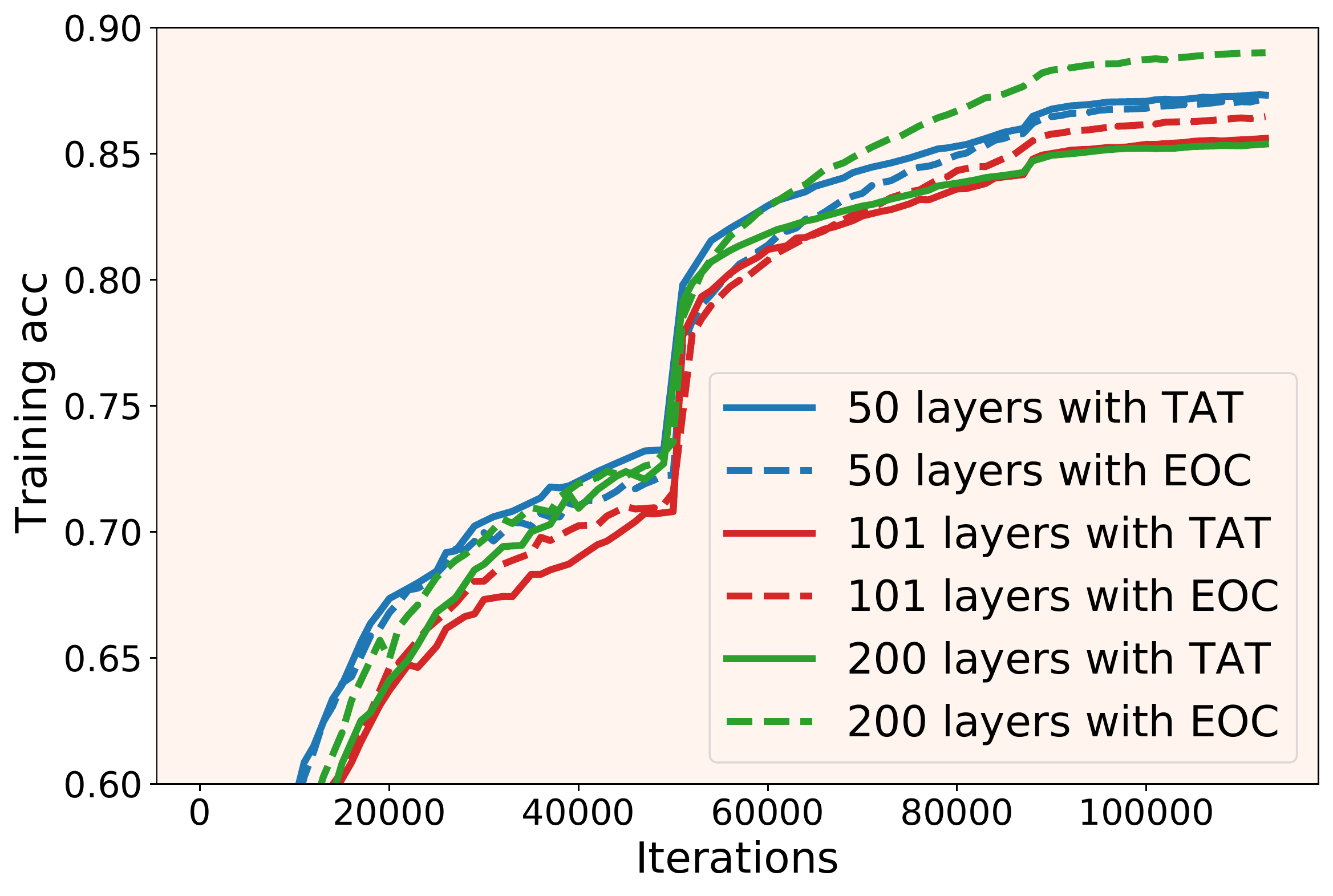}
	\vspace{-0.65cm}
	\caption{ImageNet training accuracy of deep vanilla networks with either EOC-initialized ReLU networks or \texttt{TReLU} networks.}
	\label{fig:fig1_train}
\end{wrapfigure}
In our main experiments we only reported validation accuracy on ImageNet, making it hard to tell whether the superior performance of TAT vs EOC is due to improved fitting/optimization speed, or improved generalization. Here, we compare training accuracies of EOC-initialized networks (with ReLU) and networks with \texttt{TReLU}, in exactly the same experimental setting as Figure~\ref{fig:fig1}. 
We train each network on ImageNet using K-FAC for 90 epochs.
For each setting, we plot the training accuracy for the hyperparameter combination that gave the highest final validation accuracy. As shown in Figure~\ref{fig:fig1_train}, the EOC-initialized networks achieve competitive (if not any better) training accuracy, suggesting that the use of \texttt{TReLU} improves the generalization performance and not optimization performance.

\vspace{-0.1cm}
\subsection{Closing the Remaining Gap using Wider Networks}
\vspace{-0.1cm}

\begin{wraptable}[10]{r}{0.45\textwidth}
\centering
\vspace{-0.4cm}
\caption{The effect of increasing width on ImageNet validation accuracy. We use vanilla networks for EOC and TAT (ours).}
\vspace{-0.2cm}
\label{table:width}
\resizebox{0.44\textwidth}{!}{
\begin{tabular}{ccccc}
\toprule
Depth & Width & EOC & TAT & ResNets \\
\midrule
\multirow{2}{*}{50} & 1$\times$ & 72.0 & 76.0 & 76.7  \\ 
& 2$\times$ & 73.5 & 77.3 & 77.9 \\ 
\midrule
\multirow{2}{*}{101} & 1$\times$ & 62.4 & 76.5 & 77.9  \\ 
& 2$\times$ & 66.5 & 77.6 & 78.6 \\ 
\bottomrule
\end{tabular}}
\end{wraptable}
In all of our main experiments we used networks derived from standard ResNets (by removing normalization layers and/or shortcut connections). By construction, these have the same layer widths as standard ResNets. A natural question to ask is whether using wider networks would change our results. For example, it's possible that vanilla networks with TAT would benefit more than ResNets from increased width, since higher width would make the kernel approximations more accurate, and could also help compensate for the minor loss of expressive power due to the removal of shortcut connections. 

With layers double the width of standard ResNets, it becomes too expensive to store and invert Kronecker factors used in K-FAC. Therefore, we only train these wider networks with SGD. In order to mitigate the slower convergence of SGD for vanilla networks (see Section~\ref{sec:ablation}), we train them for 360 epochs at a batch size of 512. Note that due to increased overfitting we observed in ResNets after 360 epochs (resulting in lower validation accuracy) we only trained them for 90 epochs. 
As shown in Table~\ref{table:width}, doubling the width does indeed narrow the remaining validation accuracy gap between ResNets and vanilla TAT networks. In particular, the gap goes from 0.7\% to 0.6\% for depth 50 networks, and from 1.4\% to 1\% for depth 101 networks. 

\subsection{Comparison with PReLU on Rescaled ResNets}\label{app:prelu}
\vspace{-0.1cm}

\begin{wraptable}[9]{r}{0.51\textwidth}
\centering
 \vspace{-0.35cm}
\caption{Comparison with PReLU with rescaled ResNets ($w = 0.8$).}
\vspace{-0.25cm}
\label{table:prelu-app}
\resizebox{0.51\textwidth}{!}{
\begin{tabular}{ccccc}

\toprule
Depth & Optimizer & TReLU & $\text{PReLU}_{0.0}$ & $\text{PReLU}_{0.25}$ \\ 
\midrule
\multirow{2.2}{*}{50} & K-FAC & 76.4 & 75.7 & 73.6 \\ 
\cmidrule(lr){2-5}
& SGD & 76.0 & 71.5 & 71.5 \\ 
\midrule
\multirow{2.2}{*}{101} & K-FAC & 77.8 & 76.4 & 76.8  \\ 
\cmidrule(lr){2-5}
& SGD & 77.3 & 73.1 & 73.4 \\ 
\bottomrule
\end{tabular}}
\end{wraptable}
In Table~\ref{table:prelu} of the main text we compare PReLU and \texttt{TReLU} on deep vanilla networks. Here we extend this comparison to rescaled ResNets with a shortcut weight of $w = 0.8$. For PReLU, we again include two different initializations: one with $0$ negative slope (effectively ReLU), and another with $0.25$ negative slope (which was used in \citet{he2015delving}). We report the full results in Table~\ref{table:prelu-app}. For all settings, TAT outperforms PReLU by a large margin, suggesting that a better-initialized negative slope is crucial for both rescaled ResNets and deep vanilla networks. 

\subsection{Comparison of different initializations}\label{app:init}

\begin{wraptable}[15]{r}{0.61\textwidth}
\centering
\caption{Comparison of Orthogonal Delta and Gaussian fan-in initialization.}
\vspace{-0.2cm}
\label{table:init}
\resizebox{0.605\textwidth}{!}{
\begin{tabular}{cccccc}

\toprule
Depth & Optimizer & Init & ResNet & EOC & TAT \\
\midrule
\multirow{4}{*}{50} & \multirow{2}{*}{K-FAC} & Orth Delta & 76.4 & 72.6 & 74.6 \\
& & Gaussian & 76.5 & 72.5 & 74.8 \\
\cmidrule(lr){2-6}
& \multirow{2}{*}{SGD} & Orth Delta & 76.3 & 63.7 & 71.0 \\
& & Gaussian & 76.6 & 63.1 & 68.7 \\
\midrule
\multirow{4}{*}{101} & \multirow{2}{*}{K-FAC} & Orth Delta & 77.8 & 71.8 & 74.2 \\
& & Gaussian & 77.8 & 72.3 & 74.1 \\
\cmidrule(lr){2-6}
& \multirow{2}{*}{SGD} & Orth Delta & 77.9  & 41.6 & 70.0 \\
& & Gaussian & 78.0 & 41.1 & 68.7 \\
\bottomrule
\end{tabular}
}
\end{wraptable}

In all of our experiments we use the Orthogonal Delta initialization introduced by \citet{balduzzi2017shattered} and \citet{xiao2018dynamical}. This is because it's technically required in order to apply the extended Q/C map analysis of \citet{martens2021dks} (which underlies DKS and TAT) to convolutional networks, and because it is generally thought to be beneficial. In this subsection we examine this choice more closely by comparing it to a traditional Gaussian fan-in initialization (with $\sigma_w^2 = 2$ for ReLUs). We consider standard ResNets and deep vanilla networks using either EOC (with ReLUs) or TAT with (with LReLU). Surprisingly, it turns out that the Orthogonal Delta initialization does not have any clear advantage over the Gaussian fan-in approach, at least in terms of validation accuracy after 90 epochs.

\end{document}